\definecolor{green}{rgb}{0.0, 0.5, 0.0}
\crefname{lemma}{lemma}{lemmata}
\crefname{claim}{claim}{claims}
\crefname{theorem}{theorem}{theorems}
\crefname{proposition}{proposition}{propositions}
\crefname{corollary}{corollary}{corollaries}
\crefname{claim}{claim}{claims}
\crefname{remark}{remark}{remarks}
\crefname{definition}{definition}{definitions}
\crefname{fact}{fact}{facts}
\crefname{question}{question}{questions}
\crefname{condition}{condition}{conditions}
\crefname{algorithm}{algorithm}{algorithms}
\crefname{assumption}{assumption}{assumptions}
\crefname{notation}{notation}{notation}
\crefname{cond}{Condition}{Conditions}
   \par\noindent{\bfseries\upshape Proof Sketch\ }%
\newtheorem{theorem}{Theorem}[section]
\newtheorem{lemma}[theorem]{Lemma}
\newtheorem{proposition}[theorem]{Proposition}
\newtheorem{corollary}[theorem]{Corollary}
\newtheorem{claim}[theorem]{Claim}
\newtheorem{definition}[theorem]{Definition}
\newtheorem{fact}[theorem]{Fact}
\theoremstyle{definition}
\newtheorem{remark}[theorem]{Remark}
\newcommand{\eps}{\epsilon}
\newcommand{\Ind}{\mathds{1}}
\newcommand{\1}{\Ind}
\renewcommand{\Pr}{\operatorname*{\mathbb{P}}}
\newcommand{\Var}{\operatorname*{\mathrm{Var}}}
\newcommand{\Cov}{\operatorname*{\mathrm{Cov}}}
\newcommand{\E}{\operatorname*{\mathbb{E}}}
\newcommand{\poly}{\operatorname*{\mathrm{poly}}}
\def\R{\mathbb R}
\def\Z{\mathbb Z}
\newcommand{\cA}{\mathcal{A}}
\newcommand{\cB}{\mathcal{B}}
\newcommand{\cN}{\mathcal{N}}
\newcommand{\op}{\textnormal{op}}
\newcommand{\fr}{\textnormal{F}}
\def\proj{\mathrm{Proj}}
\newcommand{\tr}{\mathrm{tr}}
\def\filteredvoronoi{\textsc{FilteredVoronoi}}
\def\sizebasedpruning{\textsc{SizeBasedPruning}}
\def\distancebasedpruning{\textsc{DistanceBasedPruning}}
\newcommand{\muhat}{\hat{\mu}}
\def\colorful{0}
\title{Clustering Mixtures of Bounded Covariance Distributions\\ Under Optimal Separation}
\author{
Ilias Diakonikolas\thanks{Supported by NSF Medium Award CCF-2107079 and NSF Award CCF-1652862 (CAREER).}\\
University of Wisconsin-Madison\\
{\tt ilias@cs.wisc.edu}\\
\and
Daniel M. Kane\thanks{Supported by NSF Medium Award CCF-2107547 and NSF Award CCF-1553288 (CAREER).}\\
University of California, San Diego\\
{\tt dakane@cs.ucsd.edu}
\and
Jasper C.H. Lee\thanks{Supported by NSF Medium Award CCF-2107079, NSF AiTF Award CCF-2006206, and a Croucher Fellowship for Postdoctoral Research.}\\
University of Wisconsin-Madison\\
{\tt jasper.lee@wisc.edu}\\
\and
Thanasis Pittas\thanks{Supported by NSF Medium Award CCF-2107079 and NSF Award DMS-2023239 (TRIPODS).}\\
University of Wisconsin-Madison\\
{\tt pittas@wisc.edu}\\
}
\date{\vspace*{-10pt}\today}
\begin{document}

\maketitle

\vspace*{-0.8cm}

\begin{abstract}%
We study the clustering problem for mixtures of bounded covariance distributions, 
under a fine-grained separation assumption. Specifically, given samples from a 
$k$-component mixture distribution $D = \sum_{i =1}^k w_i P_i$, 
where each $w_i \ge \alpha$ for some known parameter $\alpha$, 
and each $P_i$ has unknown covariance $\Sigma_i \preceq \sigma^2_i \cdot I_d$ 
for some unknown $\sigma_i$, the goal is to cluster the samples 
assuming a pairwise mean separation in the order of $(\sigma_i+\sigma_j)/\sqrt{\alpha}$ between every pair of components $P_i$ and $P_j$. Our main contributions are as follows:
\begin{itemize}[leftmargin=*]
\item For the special case of nearly uniform mixtures, we give the first 
polynomial-time algorithm for this clustering task. Prior work either required separation scaling with the maximum cluster standard deviation (i.e.~$\max_i \sigma_i$)~\cite{diakonikolas2022clustering} or required both additional 
structural assumptions and mean separation scaling as a large degree polynomial in $1/\alpha$~\cite{BKK22}. 

\item For arbitrary (i.e.~general-weight) mixtures, we point out that accurate 
clustering is information-theoretically impossible under our fine-grained mean 
separation assumptions. We introduce the notion of a {\em clustering refinement} 
--- a list of not-too-small subsets satisfying a similar separation,
and which can be merged into a clustering approximating the 
ground truth --- and show that it is possible to efficiently compute an accurate 
clustering refinement of the samples.
Furthermore, under a variant of the ``no large 
sub-cluster'' condition introduced in prior work~\cite{BKK22}, 
we show that our algorithm will output an accurate clustering, not just a refinement, even for 
general-weight mixtures.
As a corollary, we obtain efficient clustering algorithms for mixtures of well-conditioned high-dimensional log-concave distributions. 
\end{itemize}
Moreover, our algorithm is robust to a fraction of adversarial outliers 
comparable to $\alpha$.

At the technical level, our algorithm proceeds by first 
using list-decodable mean estimation 
to generate a  polynomial-size list of possible cluster means, 
before successively pruning candidates using 
a carefully constructed convex program.
In particular, the convex program takes as input 
a candidate mean $\muhat$ and a scale parameter $\hat{s}$, 
and determines the existence 
of a subset of points that could plausibly form a cluster 
with scale $\hat{s}$ centered around $\muhat$.
While the natural way of designing this program makes it non-convex, 
we construct a convex relaxation which remains satisfiable 
by (and only by) not-too-small subsets of true clusters.
\end{abstract}

\setcounter{page}{0}

\thispagestyle{empty}

\newpage

\section{Introduction}

Clustering mixture models is one of the most basic 
and widely-used statistical primitives 
on data samples from high-dimensional distributions, 
with applications in a variety of fields, 
including bioinformatics, astrophysics, and 
marketing~\cite{Lindsay:95, GGMM10}; see \cite{titterington_85} for an extensive list of applications.
Informally, the input is a set of $n$ samples drawn from 
a mixture distribution $D = \sum_{i = 1}^k w_i P_i$ over $\R^d$, 
where $w_i$ is the mixing weight of component $P_i$.
The goal is to cluster (most of) the samples such that 
the clustering is approximately equal to partitioning the data according 
to the ground truth; namely, partitioning samples according to 
which mixture component they were drawn from.
For the clustering task to be information-theoretically possible, 
it is common to make concentration assumptions on 
each mixture component $P_i$ (e.g.~sub-Gaussianity, or a bounded moments 
assumption), as well as on the pairwise separation between the means of the 
components.

The prototypical case is that of Gaussian mixtures and 
has been extensively studied in the literature; 
see, e.g.~\cite{VempalaWang:02,KSV:05,AchlioptasMcSherry:05} 
and references therein.
In more detail,~\cite{VempalaWang:02} studied the clustering 
of data drawn from mixtures of separated spherical Gaussians.
Subsequent work~\cite{KSV:05,AchlioptasMcSherry:05} built on 
the approach of~\cite{VempalaWang:02} to design clustering algorithms 
for mixtures of separated Gaussians with general covariances.
The main algorithmic technique underlying these papers is to apply $k$-PCA 
in order to discover the subspace spanned by the means 
of the mixture components.

The focus of this paper is the more general {\em heavy-tailed} setting, 
where each component is only assumed to have {\em bounded covariance} 
instead of stronger concentration. Specifically, suppose that each
component $P_i$ has unknown covariance matrix $\Sigma_i$ that satisfies 
$\Sigma_i \preceq \sigma^2 \cdot  I_d$, for some unknown parameter $\sigma>0$. For notational 
simplicity, we restrict this discussion to {\em uniform} mixtures 
(corresponding to the case that $w_i = 1/k$ for all $i \in [k]$).
Then, unless the component means have pairwise $\ell_2$-distance 
$\gg \sigma \sqrt{k}$, accurate clustering is information-theoretically impossible in the worst-case. On the positive side, the recent work of~\cite{diakonikolas2022clustering} gave a computationally efficient algorithm which achieved the best worst-case separation: 
if all the components $P_i$ have covariances 
$\Sigma_i \preceq \sigma^2 \cdot I_d$, then~\cite{diakonikolas2022clustering} 
showed that it is possible to accurately cluster when given a pairwise separation 
of $C \sigma \sqrt{k}$, where $C>0$ is a sufficiently large universal 
constant\footnote{We note that~\cite{diakonikolas2022clustering} gave 
an almost-linear time algorithm that 
succeeds under slightly stronger separation (within a $\log(k)$-factor of the optimal). If one allows polynomial-time algorithms, this extra factor does not appear.}. 

The preceding discussion suggests that the algorithmic problem of clustering mixtures of bounded covariance distributions under the information-theoretically optimal mean estimation (within constant factors) is fully resolved.
Yet, consider the simple example shown in \Cref{fig:kPCA} below.
\begin{figure}[h!]
    \centering
    \includegraphics[clip=true,trim={0cm 0.7cm 0cm 0.3cm},keepaspectratio,height=1.3in]{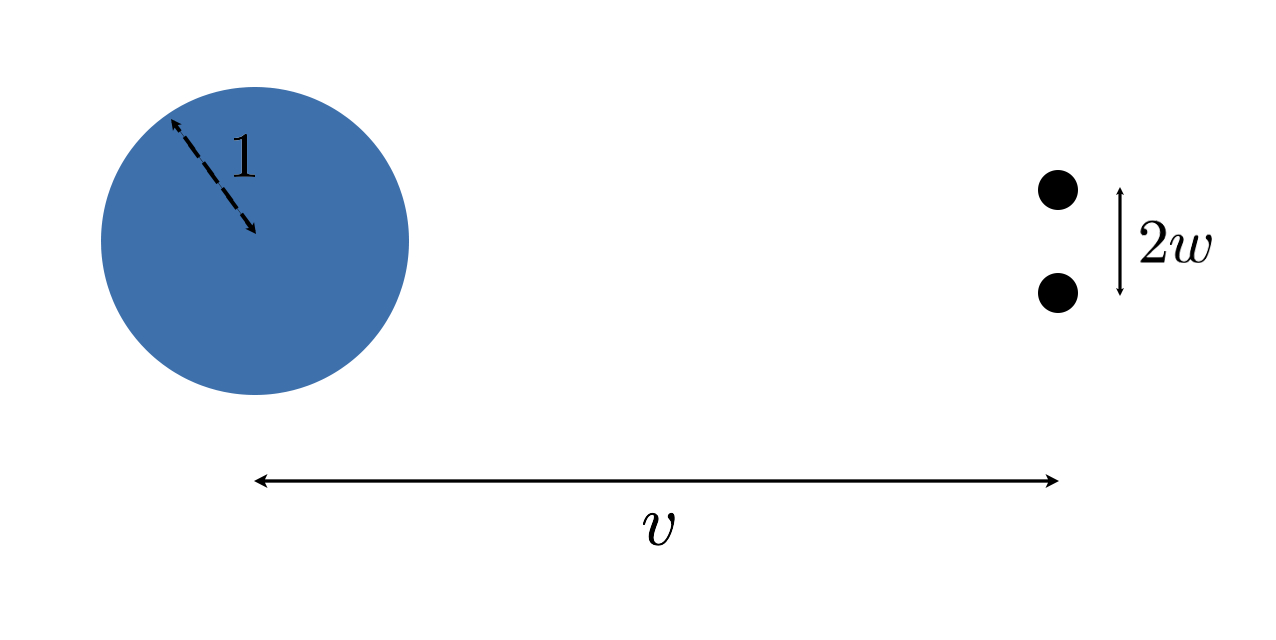}
    \caption{Example ``well-separated'' mixture distribution that cannot be handled by the algorithm of~\cite{diakonikolas2022clustering}.}
    \label{fig:kPCA}
\end{figure}

In this example, we have an identity-covariance distribution on the left, 
separated by distance $v \gg 1$ from a pair of $0$-covariance distributions 
on the right, which are in turn separated by some small distance $2w \ll 1$.
This example is clearly clusterable and ``well-separated'', 
since there is essentially no overlap between any of the mixture components.
However, the example cannot be handled 
by the algorithm of~\cite{diakonikolas2022clustering} or earlier 
algorithms\footnote{We note, for example, that the 
algorithm~\cite{awasthi2012improved} produces an accurate clustering under 
separation $\Delta \gg k \sigma$.}
for the following reason:
the largest variance is $\sigma^2 = 1$, but the two $0$-covariance 
distributions are separated only by $2w \ll 1$ --- 
instead of the required 
$\Theta(\sigma \sqrt{k}) = \Theta(1)$ separation.

The above example illustrates an important conceptual weakness 
of prior work in the heavy-tailed (bounded covariance) setting: 
it requires that the pairwise mean separation is measured 
by the \emph{maximum} covariance across all the mixture components --- 
even if the pair of components in question both have small covariances.
This distinction can make a large quantitative difference 
in both theory and practice.
Indeed, even for the special case that the components are of approximately the same size (cardinality), their relative radii 
may dramatically differ.

\paragraph{Motivation: Achieving fine-grained separation} 
A more reasonable separation assumption that we focus on in this paper is as follows.
Suppose that the components $P_i$ and $P_j$ 
have maximum standard deviations $\sigma_i$ and $\sigma_j$ respectively. 
Then we require the corresponding means $\mu_i$ and $\mu_j$ 
to be separated in $\ell_2$-distance by a quantity scaling with 
$\sigma_i+\sigma_j$. Note that this is much weaker than 
the prior assumption scaling with $\max_i \sigma_i$.
We also point out that clustering under such fine-grained separation 
has been achieved for Gaussian components in earlier works~\cite{AchlioptasMcSherry:05, KSV:05, Brubaker:09}.
However, to the best of our knowledge, no such result was previously known 
for the bounded covariance setting. 
Motivated by this gap in the literature, 
in this paper we ask: 
\begin{quote}
{\em Is it possible to efficiently cluster data from mixtures of bounded 
covariance distributions under the fine-grained separation assumption? 
Specifically, can we efficiently achieve accurate clustering under 
pairwise mean separation in the order of $\sqrt{k} (\sigma_i+\sigma_j)$?}
\end{quote}
As our main contribution, in this paper we study and {\em essentially resolve} 
this question.

\medskip

We emphasize that the heavy-tailed setting introduces a number of technical 
challenges that do not appear in the presence of strong concentration. 
For the sake of intuition, we explain below how $k$-PCA --- 
a standard spectral technique used in prior work --- provably fails in our setting. 

\paragraph{Failure of $k$-PCA}
One of the main standard techniques for clustering mixtures of separated
components is to perform $k$-PCA: 
find the top-$k$ dimensional subspace of the sample covariance, 
and show that with high probability, this subspace captures 
the span of the mixture component means.
However, this technique fails for bounded covariance distributions 
under our fine-grained separation assumption, even with infinitely many samples.
This can be demonstrated through a variant of the example in~\Cref{fig:kPCA}.
Consider the uniform (i.e.~equal weights) mixture 
with a component with unit covariance on a subspace $V$ at the origin, 
and two components with $0$-covariance, 
located at points $v+w$ and $v-w$ with $\|v\|_2 \gg 1$ and $\|w\|_2 \ll 1$.
Suppose also that $V$ is $\Omega(d)$-dimensional, 
and $V, v, w$ are orthogonal to each other.
Denoting the identity matrix in the subspace $V$ by $I_V$, 
the covariance of the full distribution is equal to 
$\frac{1}{3}I_V + \frac{2}{9} vv^\top + \frac{2}{3} ww^\top$.
Given that $\|v\|_2 \gg 1$ and $\|w\|_2 \ll 1$, 
the eigenvectors of this covariance are $v$, 
any $\Omega(d)$-dimensional basis of $V$, finally followed by $w$.
Thus, in order to have the direction $w$ in the subspace found by $k$-PCA, 
we might need as many as $k = \Omega(d)$ dimensions, 
which reduces the dimensionality only mildly.

\paragraph{Summary of contributions}
Our first goal focuses on uniform-weight mixture distributions, 
with the aim of clustering assuming only a pairwise separation of 
$C\cdot(\sigma_i + \sigma_j)\sqrt{k}$ between mixture components $P_i$ and $P_j$ 
satisfying $\Sigma_i \preceq \sigma^2_i \cdot I_d$ and $\Sigma_j \preceq \sigma^2_j \cdot I_d$, for some sufficiently large universal constant $C$.
We note that the individual standard deviations $\sigma_i$ are {\em unknown} to the algorithm.

For this setting, we give the first efficient algorithm (\Cref{alg:main_general}) achieving this guarantee in \Cref{thm:main_uniform}.
We point out that the recent work of~\cite{BKK22} also studies the heavy-tailed setting under a fine-grained separation assumption. However, they require separation which scales like $(\sigma_i+\sigma_j)\poly(k,\log n)$, for a large
degree polynomial\footnote{Their results do not explicit state the degree, but we believe it is at least degree-4 for $k$ according to their algorithm, as opposed to our optimal $\sqrt{k}$ dependence.}. More importantly,
they also require an additional ``no large sub-cluster assumption'' 
on the samples beyond bounded covariance ---
even for the uniform-weight mixture setting.

Our second, more general goal is to study the limits of clustering general-weight mixtures 
of bounded covariance distributions, under the same fine-grained pairwise separation assumption.
Perhaps surprisingly, we point out that it is 
information-theoretically impossible to achieve accurate clustering due to identifiability issues --- there can be multiple valid ground truths for the same mixture and there is no way to tell which one is the ``correct'' one ---
if the mixing weights are (highly) non-uniform.
Nonetheless, our main algorithm (\Cref{alg:main_general}) 
efficiently produces an accurate \emph{refinement} of the ground truth clustering 
(\Cref{thm:main_v1_intro}): informally, a clustering refinement is a list of not-too-small and disjoint subsets of samples such that there \emph{exists} a way to combine them 
 into a clustering close to the ground truth, and furthermore, these subset are themselves well-separated like the ground truth distribution.
This essentially amounts to the information-theoretically strongest possible guarantee in our setting.
We further show that, under a ``no large sub-cluster'' condition 
(\`{a} la~\cite{BKK22}), the same algorithm outputs exactly 
the correct $k$ clusters (up to some small fraction of misclassified points).

Finally, we remark that our algorithm is robust to a fraction of adversarial 
outliers that is comparable to the size of the smallest cluster.

\subsection{Our results}

Even in the special case of uniform-weight mixtures, no prior work can find an 
accurate clustering under a fine-grained separation assumption scaling with 
$\sigma_i+\sigma_j$ between components $P_i$ and $P_j$, even if we allow a sub-optimal $\poly(k)$ scaling.
Here we present our first result, solving both issues simultaneously.
\Cref{alg:main_general} finds an accurate clustering in polynomial time, assuming 
the optimal (up-to-constants) separation in the order of 
$(\sigma_i+\sigma_j)\sqrt{k}$, which is both fine-grained and has the information-theoretically optimal $\sqrt{k}$ dependence.

\begin{theorem}[Clustering uniform-weight bounded covariance mixtures]\label{thm:main_uniform}
    Let $C$ be a sufficiently large constant.
    Consider a uniform-weight mixture distribution $D= \sum_{i=1}^k \frac{1}{k} P_i$ with $k$ components on $\R^d$. %
    Suppose that $\alpha$ is a parameter in $[0.6/k, 1/k]$.
    Let $\mu_i$ and $\Sigma_i$ be the (unknown) mean and covariance of each $P_i$, and assume that $\Sigma_i \preceq \sigma_i^2 \cdot I_d$ (with $\sigma_i$ being unknown) and  $\| \mu_i - \mu_j \|_2 > C (\sigma_i + \sigma_j)/\sqrt{\alpha}$ for all $i \neq j$.
    
    Draw $n$ samples from $D$, and let $S_i$ be the samples from the $i^\text{th}$ mixture component.
    Further fix a failure probability $\delta > 0$.
    If $n > C (d\log(d) +\log(1/(\alpha\delta))) /\alpha^2$, then \Cref{alg:main_general} when given the samples, $\alpha,$ and $\delta$ as input, runs in polynomial time and outputs $k$ disjoint sets $\{B_i\}_{i \in k}$ so that with probability at least $1-\delta$ the following are true, up to a permutation of indices of the output sets:
    \begin{enumerate}[leftmargin=*]

        \item \label{it:concl5_unif} $|S_i \triangle B_i| \leq 0.045 n/k$ for every $i \in [k]$.
        \item \label{it:concl7_unif} The mean of $B_i$ is close to $S_i$: $\| \mu_{B_i} - \mu_i \|_2 = O(\sigma_i)$ for every $i \in [k]$.
    \end{enumerate} 
\end{theorem}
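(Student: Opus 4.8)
\begin{proofsketch}
The plan follows the two-phase strategy described in the introduction: use list-decodable mean estimation to produce a polynomial-size list of candidate cluster centers, prune this list with the convex program, and then round and de-duplicate the survivors into $k$ disjoint sets. Throughout, I would first pass to a deterministic regularity event. For $n \gtrsim (d\log d + \log(1/(\alpha\delta)))/\alpha^2$, standard uniform-convergence and matrix-concentration bounds guarantee that, with probability $1-\delta$, every realized cluster $S_i$ is ``regular'': $|S_i| = (1\pm o(1))\,n/k$; the empirical mean of $S_i$ is within $o(\sigma_i)$ of $\mu_i$; and for every $\beta\in(0,1]$ each subset $T\subseteq S_i$ with $|T|\ge\beta|S_i|$ has empirical covariance $\preceq O(\sigma_i^2/\beta)\,I_d$, in particular $\preceq O(\sigma_i^2)\,I_d$ whenever $\beta=\Omega(1)$. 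Conditioning on this event makes the rest of the argument deterministic, and also yields robustness: from the vantage point of every subsequent step an $O(\alpha)$-fraction of adversarial outliers is indistinguishable from points drawn from a wrong cluster, and is simply carried along in the error accounting.

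Next, candidate generation. Since the $\sigma_i$ are unknown (and possibly zero), I would run list-decodable mean estimation for bounded-covariance inliers over a geometric grid of scale guesses $\hat s$ (together with $\hat s = 0$), which keeps the resulting list $L$ of pairs $(\hat\mu,\hat s)$ polynomial in size. Invoking the list-decodable mean estimation guarantee with each regular cluster $S_i$ playing the role of the $\Omega(\alpha)$-fraction of inliers, $L$ is guaranteed to contain, for every $i\in[k]$, at least one pair with $\hat s = \Theta(\sigma_i)$ and $\|\hat\mu-\mu_i\|_2 = O(\sigma_i/\sqrt\alpha)$. The fact that this error scales with the \emph{per-cluster} $\sigma_i$ rather than $\max_i\sigma_i$ is exactly what the scale grid buys; it is also the information-theoretic floor for this step, and the $O(\sigma_i)$ accuracy promised in the theorem will only be recovered at the end, from the clustering itself.

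The core of the argument is the convex pruning step. For each $(\hat\mu,\hat s)\in L$ I would solve a convex feasibility program $\program(\hat\mu,\hat s)$, which is a convex relaxation of the natural but non-convex question: \emph{is there a set of at least $\alpha n$ sample points whose empirical mean is close to $\hat\mu$ and whose empirical covariance about its own mean is $\preceq O(\hat s^2)\,I_d$?} The relaxation replaces the unknown set by fractional weights $w\in[0,1]^n$ with $\sum_j w_j\ge\alpha n$ and, to sidestep the ratio/quadratic dependence on the unknown weighted mean, controls instead a linear-in-$w$ spectral quantity anchored at the fixed point $\hat\mu$; it must be designed so that feasibility characterizes exactly the not-too-small near-subsets of regular clusters. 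Two claims have to be established. \emph{(i) Completeness}: for the pair matching a cluster $i$, the uniform weight vector on $S_i$ after deleting its $o(|S_i|)$ atypical points is feasible --- immediate from regularity together with $\|\hat\mu-\mu_i\|_2 = O(\sigma_i/\sqrt\alpha)$. \emph{(ii) Soundness}: every feasible $w$ places all but an arbitrarily small constant fraction of its weight inside a single regular cluster $S_i$, necessarily the (unique, since $C$ is large) one with $\|\mu_i-\hat\mu\|_2 = O(\sigma_i/\sqrt\alpha)$. The mechanism for soundness is that any weight on points far from $\hat\mu$ --- in some cluster $S_\ell$, $\ell\ne i$, or among the outliers --- inflates the anchored spectral quantity, along the relevant directions, by roughly that weight times a squared separation $\gtrsim C^2(\sigma_i+\sigma_\ell)^2/\alpha$, which overflows the spectral budget once $C$ is a sufficiently large universal constant; the estimate has to aggregate the contributions of \emph{all} far-away clusters at once so as also to exclude ``diffuse'' fractional solutions that spread slivers of weight over many clusters. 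Running $\program$ over $L$, keeping the feasible pairs, and greedily discarding near-duplicates (two feasible pairs belong to the same cluster iff their centers are $O(\sigma/\sqrt\alpha)$-close, which by largeness of $C$ is decidable from the data) leaves exactly one pair per cluster --- exactly $k$ pairs, by completeness.

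Finally, I would refine and assemble. For each surviving pair $(\hat\mu,\hat s)$, take the sample points within $\Theta(\hat s/\sqrt\alpha)$ of $\hat\mu$: by separation this set is nearly all of the associated $S_i$ together with only an $O(1/C^2)$-fraction of contamination (each other cluster $S_\ell$ contributes at most a $O(\alpha/C^2)$-fraction of itself, over $\le k$ clusters), so running a bounded-covariance robust mean estimator on it recovers $\hat\mu'$ with $\|\hat\mu'-\mu_i\|_2 = O(\sigma_i/C) = O(\sigma_i)$; then set $B_i$ to be the sample points within $\Theta(\hat s)$ of $\hat\mu'$. Using separation once more, $B_i$ contains all but a small constant fraction of $S_i$ and at most $O(n/(C^2 k))$ points from other clusters, and since the surviving pairs come one per cluster the sets can be made disjoint at a cost of a further $o(n/k)$ points each; choosing the absolute constants gives $|S_i\triangle B_i| \le 0.045\,n/k$. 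For the mean bound, $B_i$ is a $(1-O(1))$-fraction of the bounded-covariance set $S_i$ plus $O(n/(C^2k))$ extra points each within $\Theta(\sigma_i)$ of $\mu_i$, so a Cauchy--Schwarz estimate using regularity yields $\|\mu_{B_i}-\mu_i\|_2 = O(\sigma_i)$. The main obstacle is soundness of the convex relaxation, i.e.\ proving that the relaxed program --- strictly weaker than the non-convex problem it stands in for --- is satisfiable \emph{only} by near-subsets of true clusters; pinning down how large the separation constant $C$ must be for the anchored spectral constraint to rule out every hybrid or diffuse fractional solution, and selecting the scale $\hat s$ so that the extracted ball captures the full cluster yet stays clean enough for the $O(\sigma_i)$ mean bound, is the delicate part.
\end{proofsketch}
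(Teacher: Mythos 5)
Your high-level outline (scale grid plus list-decodable mean estimation, a convex feasibility program anchored at the candidate mean, pruning, then proximity-based assignment) matches the paper's strategy, but two steps would fail as written. First, the soundness claim you rely on --- that \emph{any} feasible fractional solution of the anchored program must concentrate on a single true cluster --- is false for the program in isolation. If $\hat{s}$ is large and $\hat\mu$ happens to lie within $O(\hat{s}/\sqrt{\alpha})$ of several clusters of much smaller standard deviation, a fractional solution stitching together pieces of those small clusters satisfies the anchored spectral/Ky-Fan constraint at scale $\hat{s}$ (their points are not ``far'' at that scale, so they contribute almost nothing to the anchored second moment), yet it is not a near-subset of any single cluster. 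Your remark about aggregating contributions of far-away clusters does not address this, because these clusters are not far away relative to $\hat{s}$. The paper's fix is the essential missing idea: process scale guesses in \emph{increasing} order of $\hat{s}$ and discard any candidate mean within $O(C\hat{s}/\sqrt{\alpha})$ of a previously accepted one; only under the hypothesis that $\hat\mu$ is far from every cluster with $\sigma_{S_j}<\hat{s}/100$ does the soundness statement (\Cref{lemma:main}) hold, and the inductive bookkeeping (\Cref{lem:induction}) is what makes this hypothesis available. Your greedy de-duplication ``two feasible centers belong to the same cluster iff they are $O(\sigma/\sqrt\alpha)$-close'' also presumes a known per-cluster scale and is exactly the step the paper has to implement carefully (size-based and distance-based pruning, choosing which of a close pair to delete) so as not to delete the only representative of some cluster.

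Second, your final extraction step is quantitatively wrong in high dimensions: for a bounded-covariance component, $\E\|X-\mu_i\|_2^2=\operatorname{tr}(\Sigma_i)$ can be as large as $d\sigma_i^2$, so typical points lie at distance $\Theta(\sigma_i\sqrt{d})$ from $\mu_i$. A Euclidean ball of radius $\Theta(\hat{s})$ (or even $\Theta(\hat{s}/\sqrt{\alpha})=\Theta(\sigma_i\sqrt{k})$) around the refined center therefore captures almost none of $S_i$ when $d\gg k$, so the sets $B_i$ built this way cannot satisfy $|S_i\triangle B_i|\le 0.045\,n/k$, and the subsequent robust mean estimate is run on a set missing most of the cluster. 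The paper instead assigns each sample to its \emph{nearest candidate mean} (Voronoi partition), which only uses the separation between candidate means and is insensitive to the $\sqrt{d}$ spread, and then applies a stability-based filter to each Voronoi cell to get $\|\mu_{B_i}-\mu_i\|_2=O(\sigma_i)$; the uniform-weight conclusion with exactly $k$ sets then follows from a cardinality argument ($|B_j|\ge 0.92\alpha n$ forces each merged $\mathcal{B}_i$ to consist of a single output set). Both the increasing-scale pruning mechanism and the Voronoi-plus-filtering assignment are needed to close your sketch.
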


\Cref{alg:main_general} is given as input a minimum-weight parameter $\alpha \in [0.6/k, 1/k]$, and in polynomial-time it returns a list of exactly $k$ sets, $\{B_i\}$, such that, up to a permutation, each $B_i$ has a 95\% overlap with the set $S_i$ of samples drawn from the $i^\text{th}$ mixture component $P_i$ and that the mean $\mu_{B_i}$ of $B_i$ is indeed close to the mean $\mu_i$ of $P_i$, under the minimal assumption that the means of the $i^{th}$ and $j^{th}$ clusters are separated by at least a large constant multiple of $(\sigma_i + \sigma_j)/\sqrt{\alpha}$.
We note that \emph{i}) the 95\% overlap can be made an arbitrarily close constant to 1 by increasing the hidden constant in the separation assumption and adapting corresponding constants in the algorithm, and \emph{ii}) we do not require any ``no large sub-cluster condition'' in the uniform mixture setting.

We also stress that \Cref{alg:main_general} does not require knowing $k$ precisely, and only needs to know a lower bound $\alpha$ for $1/k$, which can be a (small) constant factor different.

We further remark that \Cref{it:concl5_unif} above lower bounds the size of the union of all the $B_i$s by $0.95n$, namely that at least 95\% of all the points are clustered and returned.
As noted above, the 95\% can be made into any constant arbitrarily close to 100\%, by increasing the constant $C$ in the separation assumption.
Alternatively, if we drop \Cref{it:concl7_unif} in the theorem statement, namely that the requirement that the mean of $B_i$ is indeed close to the mean $\mu_i$ of component $P_i$, then it is possible to return all the input samples in the output clustering.

Moreover, \Cref{thm:main_uniform} holds even for almost-uniform mixtures, 
where each mixing weight $w_i \in [0.9/k, 1.1/k]$, 
and if $\alpha \in [0.7/k,\min_i w_i]$.

\medskip

The situation of general (non-uniform) mixing weights is somewhat 
more complicated. For example, in the situation described below 
(also shown in \Cref{fig:identifiability}), 
even if we know the number $k$ of components and even if we have 
infinitely many samples, it is information-theoretically impossible to reliably 
achieve a 90\%-accurate clustering.

\paragraph{Example: Non-identifiability of general mixtures}
Consider a distribution consisting of 4 equal weight 0-covariance components, separated into 2 pairs.
Each pair is at unit distance, and the two pairs are separated by a large distance.
Suppose we are given that $k = 3$ and $\alpha = 1/4$, then there are two possible clusterings that disagree with each other by at least 25\% of the total mass: either group the first pair as a large cluster with weight $1/2$ and leave the second pair as two smaller clusters, or by symmetry we can group the second pair instead. It is allegorically impossible to determine which of these is the ``true'' ground truth clustering even with infinitely many samples from the mixture.

\begin{figure}
    \centering
    \includegraphics[clip=true,trim={0 2cm 0 1.3cm},keepaspectratio,height=1.5in]{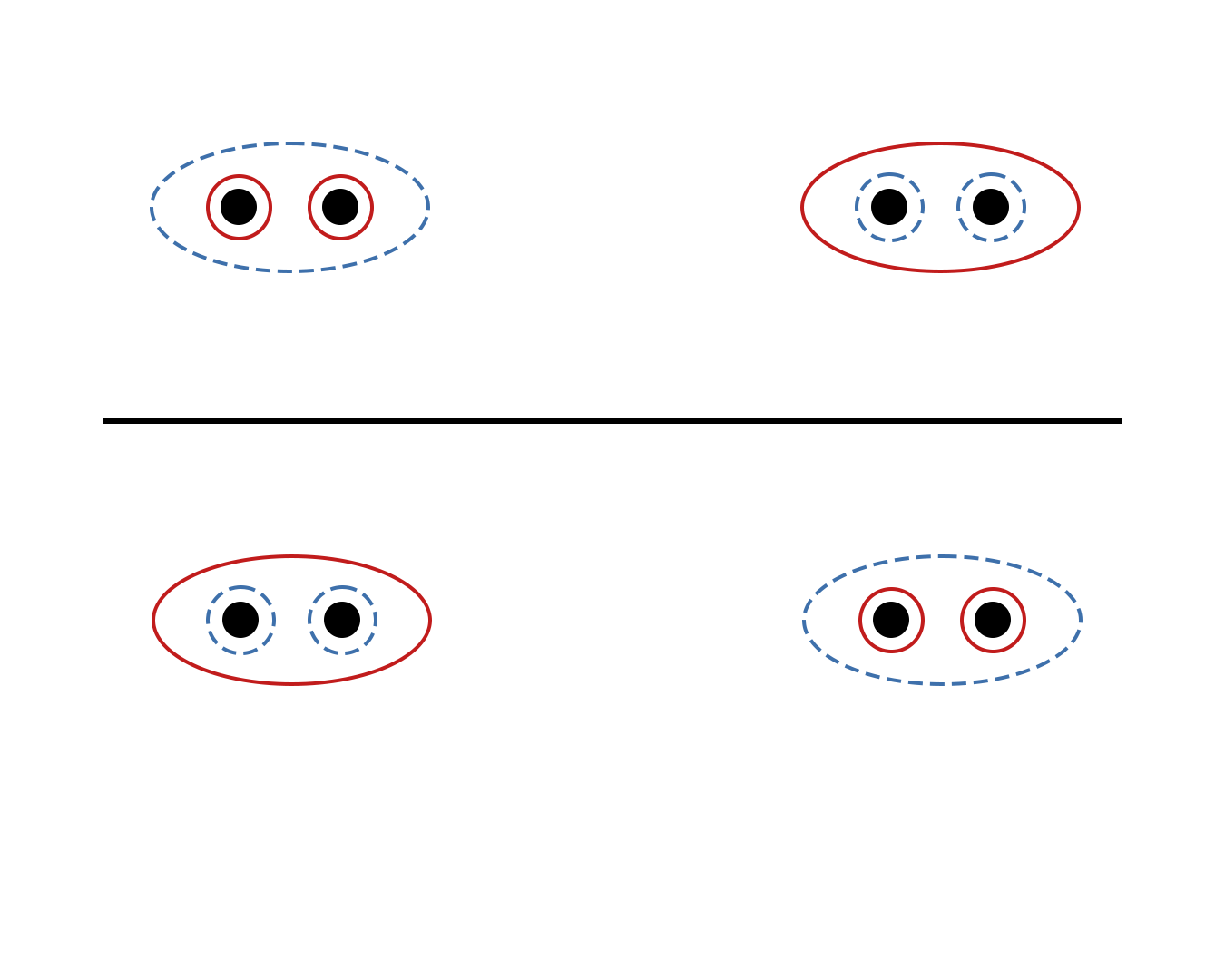}
    \caption{Two different ground truth clusterings for $k = 3$.}
    \label{fig:identifiability}
\end{figure}

\medskip

Given the above impossibility example, the question remains, what \emph{is} possible given only the mixing weight lower bound parameter $\alpha$, and a separation assumption of $C (\sigma_i+\sigma_j)/\sqrt{\alpha}$ between components $P_i$ and $P_j$?
The example highlights the core of the non-identifiability issue: an impossibility to identify which small subsets to group together.
Consequently, we can perhaps hope to compute all the information in the ground truth clustering \emph{except for} such subset grouping.
That is, we can try to identify only the small subsets themselves.
Motivated by this observation, we instead aim to return a \emph{refinement} of the clustering: we will return a list of $\ge k$ subsets (which we will call sub-clusters), each of size at least $\approx \alpha n$, such that there exists some way of grouping the subsets into $k$ larger clusters which then correspond to the ground truth mixture distribution.

For example, in the concrete setting of \Cref{fig:identifiability}, we could 
return the 4 small sub-clusters individually, which is a common refinement of the 
two possible clusterings shown in the figure. Furthermore, (as we will show) we 
can even guarantee that the returned subsets satisfy a pairwise separation 
guarantee similar to what we assume of our underlying mixture distribution.

Our main result (\Cref{thm:main_v1_intro}) of the paper shows that it is indeed 
possible to find an accurate refinement of the ground truth clustering, using 
$\tilde{O}(d/\alpha^2)$ samples and in polynomial time, with 
\Cref{alg:main_general}.
We define an accurate refinement below, as well as state a simplified version of 
our main theorem.

\begin{definition}[Accurate refinement of ground truth clustering]
    \label{def:refinement}
    Let $c>0$ be an absolute constant.
    Suppose we draw $n$ samples from the mixture distribution $D = \sum_{i = 1}^k w_iP_i$, where each $w_i \ge \alpha$ and each $P_i$ has mean $\mu_i$ and standard deviation $\sigma_i$.
    Let $S_i$ be the set of samples drawn from $P_i$.

    An accurate refinement of the clustering $S_i$ is a list of $m$ disjoint sets of samples $\{B_j\}_{j \in [m]}$ for some $m \in [k,O(1/\alpha)]$, such that:
    \begin{enumerate}[leftmargin=*]
        
        \item \label{it:Bsize_refinementdef} The sets $B_1,\ldots,B_{m}$ each have size  $|B_j| \geq 0.92 \alpha n$ for all $j \in [m]$.
        
        \item \label{it:refinement_refinementdef}The indices $[m]$ can be partitioned into $k$ sets $H_1, \ldots, H_k$, such that if $\mathcal{B}_i$ are defined as $\mathcal{B}_i:=\cup_{j \in H_i} B_j$, the following hold:
        \begin{enumerate}
            \item \label{it:SminusB_refinementdef} $|{S}_i \setminus \mathcal B_i| \leq 0.045|{S}_i|$ for every $i \in [k]$.
            \item \label{it:BminusS_refinementdef}$|\mathcal{B}_i \setminus {S}_i| \leq 0.03 \alpha n$ for every $i \in [k]$.
            \item \label{it:meangood_refinementdef} For any $i \in [k]$ and any $j \in H_i$ we have that $\| \mu_{B_j} - \mu_i \|_2 \leq c\, \sigma_i\sqrt{|{S}_i|/|B_j|}$.
            \item For any pair $j\neq j'$ we have that $\|\mu_{B_j} - \mu_{B_{j'}} \|_2 > 100 \, c\, (\sigma_{B_{j}} + \sigma_{B_{j'}})/\sqrt{\alpha}$, where $\sigma_{B_j}$ is the maximum standard deviation of $B_j$.\label{it:guarantee_sep_refinementdef}
        \end{enumerate}
        \item \label{it:classifiedpoints_refinementdef}As a consequence of \Cref{it:SminusB_refinementdef} we have that $|\cup_{j \in [m]} B_j| \geq 0.95 n$, namely that 95\% of the input points are classified into the output sets.
    \end{enumerate} 
\end{definition}

\Cref{it:Bsize_refinementdef} above says that each returned set 
must have size at least $\approx \alpha n$, 
given that each mixture component is supposed to have weight at least $\alpha$.
\Cref{it:refinement_refinementdef} captures the core idea of a refinement: 
there exists some way to combining the returned sets into sets $\cB_1,\ldots,\cB_k$, each corresponding to a mixture component 
$P_1,\ldots,P_k$, with the following guarantees.
\Cref{it:SminusB_refinementdef,it:BminusS_refinementdef} say that the symmetric 
difference between the samples $S_i$ drawn from component $P_i$ and the set 
$\cB_i$ is small.
\Cref{it:meangood_refinementdef} says that each output set $B_j$ must be close to 
the true mean of its corresponding mixture component $P_i$, with error scaling 
with $\sigma_i$ as well as $\sqrt{|S_i|/|B_j|}$ --- the larger $B_j$ is, containing 
more samples in $S_i$, the closer $\mu_{B_j}$ should be to $\mu_i$.
\Cref{it:guarantee_sep_refinementdef} says that the returned subsets $\{B_j\}$ 
must themselves satisfy a mean separation akin to the one satisfied by the mixture 
components, up to a constant factor loss.
Lastly, \Cref{it:classifiedpoints_refinementdef} guarantees that at least $95\%$ 
of the samples are indeed classified and returned in one of the output sets.

\begin{remark}\label{remark:intersection}
    The guarantees of \Cref{def:refinement} imply that for every output set $B_j$ there exists a true cluster $S_i$ such that $|B_j \cap S_i| = |B_j| - |B_j \setminus S_i| \geq |B_j| - |\mathcal B_i \setminus S_i| \geq|B_j| - 0.03\alpha n \geq (1-0.03/0.92)|B_j|\geq 0.967|B_j|$, i.e.~more than $96\%$ of the points in the output set come from the true cluster $S_i$.
\end{remark}

\begin{theorem}[Simplified version of \Cref{thm:main_v1_general}]
    \label{thm:main_v1_intro}
    Consider a mixture distribution on $\R^d$, $D= \sum_{i=1}^k w_i P_i$ with unknown positive weights $w_i \geq \alpha$ for some known parameter $\alpha \in (0,1)$.
    Let $\mu_i$ and $\Sigma_i$ be the (unknown) mean and covariance for each $P_i$, and assume that $\Sigma_i \preceq \sigma_i^2 \cdot  I_d$ for all $i \in [k]$ (with $\sigma_i$ being unknown) and  $\| \mu_i - \mu_j \|_2 > C (\sigma_i + \sigma_j)/\sqrt{\alpha}$ for every $i \neq j$, for a sufficiently large constant $C$.

    There is an algorithm (\Cref{alg:main_general}) which, when given $\alpha$ and $n$ independent samples from $D$ for $n$ at least a sufficiently large multiple of $ (d \log d + \log(1/(\alpha\delta)))/\alpha^2$, runs in polynomial time and with probability at least $1-\delta$ (over the randomness of both the samples and the algorithm),  outputs an accurate refinement clustering of these samples in the sense of \Cref{def:refinement}.
\end{theorem}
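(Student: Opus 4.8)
The plan is to first pass to a deterministic condition on the draw. Using $n\gtrsim(d\log d+\log(1/(\alpha\delta)))/\alpha^2$ and a covering-number/VC uniform-convergence bound over halfspaces, I would show that with probability $\ge1-\delta$: each $|S_i|=(1\pm o(1))w_in$; and each $S_i$ satisfies a uniform tail bound, so that after deleting an $o(\alpha)$-fraction of $S_i$ we obtain $S_i'$ with $\mathrm{Cov}(S_i')\preceq O(\sigma_i^2)I_d$ and $\|\mu_{S_i'}-\mu_i\|_2=O(\sigma_i)$, and moreover every $[0,1]$-weighting of $S_i$ of total mass $\ge\beta|S_i|$ has weighted mean within $O(\sigma_i/\sqrt\beta)$ of $\mu_i$ and weighted covariance $\preceq O(\sigma_i^2/\beta)I_d$ (the usual resilience/stability facts). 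All subsequent arguments are carried out on a sample set satisfying this condition.

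\textbf{Candidates and the convex program.} Run a list-decodable mean estimation algorithm for bounded-covariance distributions with parameter $\alpha$ --- over a polynomial grid of covariance-scale guesses, so the per-component error adapts --- to obtain in polynomial time a polynomial-size list $\mathcal L$ such that for each component $i$ some $\widehat\mu_i\in\mathcal L$ has $\|\widehat\mu_i-\mu_i\|_2=O(\sigma_i/\sqrt\alpha)$. For every $\widehat\mu\in\mathcal L$ and every scale $\widehat s$ on a fine geometric grid, solve the convex feasibility program $\program(\widehat\mu,\widehat s)$: find $w\in[0,1]^n$ with $W:=\sum_x w_x\ge(1-o(1))\alpha n$ satisfying $\sum_x w_x(x-\widehat\mu)(x-\widehat\mu)^\top\preceq\widehat s^{2}W\,I_d$ and $\|\tfrac1W\sum_x w_x x-\widehat\mu\|_2=O(\widehat s)$. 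The ``natural'' formulation --- pick a genuine $\{0,1\}$-subset and control its covariance about its own empirical mean --- is non-convex in the subset; relaxing indicators to $[0,1]$-weights and pinning the center to $\widehat\mu$ (dropping the rank-one ``subtract the subset-mean'' term in favour of the separate constraint $\|\tfrac1W\sum_x w_x x-\widehat\mu\|_2=O(\widehat s)$) yields an SDP solvable in polynomial time. Completeness is immediate: the indicator of $S_i'$ certifies feasibility of $\program(\widehat\mu_i,\widehat s)$ at scale $\widehat s=O(\sigma_i/\sqrt\alpha)$, since $\mathrm{Cov}(S_i')\preceq O(\sigma_i^2)I_d$ and $\|\mu_{S_i'}-\widehat\mu_i\|_2=O(\sigma_i/\sqrt\alpha)$, with mass $\ge(1-o(1))\alpha n$.

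\textbf{Soundness (the crux).} I would prove that any feasible $(\widehat\mu,\widehat s)$ with $\widehat s$ below a small constant times the minimum pairwise separation has a certificate $w$ ``owned'' by one component: all but an arbitrarily small constant fraction of $W$ lies on a single $S_i$, its weighted covariance $M_w$ satisfies $M_w\preceq O(\widehat s^{2})I_d=O(\sigma_i^2/\alpha)I_d$, and $\widehat\mu$ (hence $\tfrac1W\sum w_x x$) is within $O(\sigma_i/\sqrt\alpha)$ of $\mu_i$. The mechanism is to play separation against the spectral constraint: if a constant fraction of $W$ sat on two different components, the $w$-weighted second moment about $\widehat\mu$ in the direction joining the two heaviest occupied components would exceed $\widehat s^{2}W$, using $\|\mu_i-\mu_j\|_2=\Omega((\sigma_i+\sigma_j)/\sqrt\alpha)$ and $\mathrm{Cov}(S_i')\preceq O(\sigma_i^2)I_d$; and since inter-component distances have no a-priori upper bound, mass on a distant component is penalized quadratically in that distance, hence bounded by $o(\alpha n)$ in absolute terms. (The constraint $\|\tfrac1W\sum w_x x-\widehat\mu\|_2=O(\widehat s)$ is exactly what lets the relaxation --- having lost the subset-mean term --- still certify closeness to a true mean; without it a far-offset $\widehat\mu$ would be spuriously feasible.) Applied to the designated $\widehat\mu_i$, this gives a certificate $(1-o(1))W$-supported on $S_i$ whose weighted mean obeys the resilience bound $O(\sigma_i\sqrt{|S_i|/W})$. ``Spurious'' list elements not near any $\mu_i$ are handled identically: either their smallest feasible scale is still below the separation threshold (so the certificate localizes to one component) or $\program(\widehat\mu,\cdot)$ is infeasible on the whole grid and they are discarded.

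\textbf{Pruning, assignment, and verification.} From the surviving pairs, each provably owned by a unique $i(\widehat\mu)$ and carrying $(\widehat\mu_j,M_j)$ from its certificate, apply \distancebasedpruning\ (drop a candidate whose $(\widehat\mu_j,M_j)$ is Mahalanobis-close to that of an already-kept candidate) and \sizebasedpruning\ (keep only genuinely large certificates), leaving $m\in[k,O(1/\alpha)]$ candidates, at least one per component --- $m=O(1/\alpha)$ being forced since the $B_j$ are disjoint of size $\ge0.92\alpha n$. Then \filteredvoronoi\ assigns each sample $x$ to the surviving candidate minimizing $(x-\widehat\mu_j)^\top M_j^{-1}(x-\widehat\mu_j)$ among those for which this is below a constant threshold, discarding the rest, producing disjoint $B_1,\dots,B_m$; set $H_i=\{j:i(\widehat\mu_j)=i\}$ and $\mathcal B_i=\cup_{j\in H_i}B_j$. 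For the verification: because $M_j\preceq O(\sigma_{i(\widehat\mu_j)}^2/\alpha)I_d$ while $\|\mu_i-\mu_{i'}\|_2\ge C(\sigma_i+\sigma_{i'})/\sqrt\alpha$, a foreign-component point is assigned to an $i$-owned candidate only if it is a far tail point of its own cluster in the separating direction, and summing the resulting $O(\alpha/C^2)+o(1)$ per-cluster tail fractions over all components bounds $|\mathcal B_i\setminus S_i|$ by $0.03\alpha n$ for $C$ large (\Cref{it:BminusS_refinementdef}); dually, all but an $o(1)$-fraction of $S_i$ has small $M_j^{-1}$-deviation from an $i$-owned candidate and cannot be stolen by a foreign one, so it is assigned within $\mathcal B_i$ (\Cref{it:SminusB_refinementdef,it:Bsize_refinementdef,it:classifiedpoints_refinementdef}); \Cref{it:meangood_refinementdef} and \Cref{remark:intersection} then follow from the resilience bound applied to $B_j\cap S_{i(\widehat\mu_j)}$, and \Cref{it:guarantee_sep_refinementdef} from $\sigma_{B_j}=O(\widehat s(\widehat\mu_j))$ together with the \distancebasedpruning\ guarantee and the component separation --- everything calibrated by taking $C$ sufficiently large. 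The main obstacle is the soundness step: arguing that the \emph{convex relaxation}, and not merely its $\{0,1\}$ counterpart, forbids certificates that straddle two components or drift from every $\mu_i$, and then propagating its constants through the assignment so as to meet the sharp $0.92\alpha n$, $0.045|S_i|$, and $0.03\alpha n$ thresholds of \Cref{def:refinement}.
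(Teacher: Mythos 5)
Your soundness step has a genuine gap, and it sits exactly at the point that is the paper's main difficulty. You condition soundness on ``$\widehat s$ below a small constant times the minimum pairwise separation,'' but in the fine-grained setting that hypothesis is incompatible with completeness: the scale needed to certify a large-variance component ($\widehat s \approx \sigma_i/\sqrt{\alpha}$ in your normalization) can vastly exceed the separation $C(\sigma_j+\sigma_{j'})/\sqrt{\alpha}$ between two \emph{small}-variance components (cf.\ the example of \Cref{fig:kPCA}). For such a pair, a candidate $\widehat\mu$ sitting between them, tested at the large scale, is feasible for your program with a certificate that merges pieces of both tiny clusters: their mutual distance is far below $\widehat s$, so the spectral constraint is not strained, and the weighted mean stays near $\widehat\mu$. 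Your claim that ``the second moment in the direction joining the two heaviest occupied components would exceed $\widehat s^2 W$'' uses only $\|\mu_i-\mu_j\|_2=\Omega((\sigma_i+\sigma_j)/\sqrt{\alpha})$, which is not $\Omega(\widehat s)$ for this pair. Nothing downstream repairs this: with Mahalanobis assignment, a point of a tiny cluster at distance $w\ll\widehat s$ from the spurious large-$M$ center can in fact \emph{prefer} it to its own (tiny-scale) candidate, so the merged set can survive your pruning and violate \Cref{it:BminusS_refinementdef,it:meangood_refinementdef,it:guarantee_sep_refinementdef}. The paper's missing ingredient here is structural, not a constant-chasing issue: it sweeps scales in \emph{increasing} order and only runs the program on candidates that are $\gg s/\sqrt{\alpha}$-far from every previously accepted candidate; the soundness lemma (\Cref{lemma:main}) explicitly has as precondition that $\mu$ is far from all clusters with $\sigma_{S_j}<s/100$, and this precondition is maintained by the induction of \Cref{lem:induction}. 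Some equivalent mechanism is needed in your argument and is absent.

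A second, related gap is in the relaxation itself. You replace the paper's Ky-Fan$(1/\alpha)$ constraint by an operator-norm bound at the inflated scale $\widehat s^2\approx\sigma_i^2/\alpha$ (plus a mean-closeness constraint). Completeness then holds, but soundness weakens: the operator norm bounds each direction \emph{separately} by $\widehat s^2 W$, so a feasible $w$ can consist of many small shards of distinct far-away clusters lying in different directions --- each shard of weight $O(W/C^2)$ contributes within the per-direction budget, up to $1/\alpha\gg C^2$ directions are available, and a symmetric placement keeps the weighted mean at $\widehat\mu$. The Ky-Fan norm is precisely what rules this out: it caps the \emph{aggregate} energy over the top $1/\alpha$ directions by $O(\widehat s^2/\alpha)\sum_x w_x$, which is how \Cref{cl:points_far} bounds the total weight of far points across all cluster directions simultaneously. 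So the program in \eqref{program:main} is not just a convexification convenience; your weaker constraint does not support the ownership claim your soundness paragraph asserts. (Smaller issues --- the unspecified deletion order in your distance-based pruning, where the paper must carefully choose which of the two close candidates to delete so as not to orphan a true cluster --- are secondary to these two.)
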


As in \Cref{thm:main_uniform}, for \Cref{def:refinement} to be satisfied by the 
algorithm output, we can make the constant 0.92 in \Cref{it:Bsize_refinementdef} 
arbitrarily close to $1$, and the constants in 
\Cref{it:SminusB_refinementdef,it:BminusS_refinementdef} arbitrarily close to 0, 
if we increased the constant in the mean separation assumption in 
\Cref{thm:main_v1_intro}.

We also remark that the same algorithm 
(\Cref{alg:main_general}) can even tolerate adversarial 
corruption in an $\Omega(\alpha)$-fraction of the samples.
See the full theorem, \Cref{thm:main_v1_general}, for the 
detailed statement.

\paragraph{Clustering under ``no large sub-clusters''} We can further 
guarantee that \Cref{alg:main_general} returns only $k$ clusters (thereby 
corresponding exactly to the $k$ ground truth components), if we also assume a ``no 
large sub-cluster'' condition \`{a} la~\cite{BKK22}, stated in \Cref{def:NLSC}.
Informally, the condition says that for any large subset $S'$ of samples $S_i$ 
drawn from the $i^\text{th}$ mixture component, the standard deviation 
$\sigma_{S'}$ of $S'$ is comparable to $\sigma_{S_i}$.
This is intuitively the contrapositive of not having any large sub-clusters: a 
large sub-cluster can be understood as a large subset that is separated from the 
rest of the clusters, meaning that it has a substantially smaller covariance.
Our condition below is qualitatively the same condition as that of~\cite{BKK22}, but with a stronger quantitative 
requirement on the parameters of a sub-cluster.
In \Cref{sec:NLSC_comparison}, we show that such a stronger condition is 
information-theoretically necessary, due to our much weaker (and optimal) mixture 
separation assumption.
Afterwards, in \Cref{sec:NLSC_proof}, we also show (see \Cref{cor:NLSC_informal}, 
an informal version of \Cref{cor:NLSC}), that if the condition is satisfied, then 
there can only be one possible ground truth (i.e., there are no identifiability issues anymore) and thus
\Cref{alg:main_general} indeed returns only one output set per real mixture component.

\begin{restatable}[NLSC condition]{definition}{NLSCdef}
\label{def:NLSC}
    We say that the disjoint sets $S_1,\ldots,S_k$ of total size $n$ satisfy the ``No Large Sub-Cluster'' condition with parameter $\alpha$ if 
    for any cluster $S_i$ and any subset $S' \subset S_i$ with $|S'| \geq 0.8 \alpha n$, it holds that $\sigma_{S'} \geq 0.1 \sigma_{S_i}$, where $\sigma_{S'}$ is the square root of the largest eigenvalue of the covariance matrix of $S'$.
\end{restatable}

\begin{corollary}[Informal version of \Cref{cor:NLSC}]
\label{cor:NLSC_informal}
    If the samples $S_i$ from the $i^\text{th}$ mixture component jointly satisfy the NLSC assumption with parameter $\alpha$ across all $i\in [k]$, then \Cref{alg:main_general} returns exactly one sample set per mixture component (with high probability).
    As a consequence, if $B_i$ is the output set corresponding to the $i^\text{th}$ mixture component, then we have $\|\mu_{B_i} - \mu_i\| \le O(\sigma_i)$, just like in \Cref{thm:main_uniform}.
\end{corollary}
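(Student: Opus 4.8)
\begin{proofsketch}
The plan is to run \Cref{alg:main_general} and invoke \Cref{thm:main_v1_intro} (more precisely, its robust version \Cref{thm:main_v1_general}): with probability at least $1-\delta$ the output $\{B_j\}_{j\in[m]}$ is an accurate refinement in the sense of \Cref{def:refinement}, with an associated partition $H_1,\dots,H_k$ of $[m]$ and merged sets $\mathcal B_i=\cup_{j\in H_i}B_j$. Each $H_i$ is nonempty (otherwise $\mathcal B_i=\emptyset$ would violate \Cref{it:SminusB_refinementdef}), so $m\ge k$, and the entire content of the corollary is the claim that $m=k$, i.e.\ that every $H_i$ is a singleton; granting this, the mean bound follows immediately (shown at the end). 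I would argue by contradiction: suppose some $H_i$ contains two distinct indices $j,j'$. The idea is to play the separation guarantee \Cref{it:guarantee_sep_refinementdef}, which forces $B_j$ and $B_{j'}$ to be far apart and hence, via \Cref{def:NLSC}, to have large empirical variance, against the fact that $B_j$ and $B_{j'}$, being essentially large subsets of the single cluster $S_i$, must have means close to $\mu_{S_i}$.

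Quantitatively, set $T_j:=B_j\cap S_i$. Since $j\in H_i$ we have $B_j\subseteq\mathcal B_i$, so by \Cref{it:BminusS_refinementdef} and \Cref{it:Bsize_refinementdef} (cf.\ \Cref{remark:intersection}), $|T_j|=|B_j|-|B_j\setminus S_i|\ge(1-0.03/0.92)|B_j|\ge 0.96\,|B_j|\ge 0.8\,\alpha n$, so $T_j$ clears the threshold in \Cref{def:NLSC} and $\sigma_{T_j}\ge 0.1\,\sigma_{S_i}$. Using the elementary fact that $\sigma_U^2\le(|V|/|U|)\sigma_V^2$ whenever $U\subseteq V$, we get $\sigma_{B_j}\ge\sqrt{|T_j|/|B_j|}\,\sigma_{T_j}\ge 0.09\,\sigma_{S_i}$, and similarly $\sigma_{B_{j'}}\ge 0.09\,\sigma_{S_i}$, so $\sigma_{B_j}+\sigma_{B_{j'}}\ge 0.18\,\sigma_{S_i}$.

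For the matching upper bound on $\|\mu_{B_j}-\mu_{B_{j'}}\|$ I would invoke the standard variance identity $\Var_C\succeq\frac{|A|\,|C\setminus A|}{|C|^2}(\mu_A-\mu_{C\setminus A})(\mu_A-\mu_{C\setminus A})^\top$ for $A\subseteq C$. Taking $A=T_j$, $C=S_i$ gives $\|\mu_{T_j}-\mu_{S_i}\|\le\sqrt{|S_i|/|T_j|}\,\sigma_{S_i}=O(\sigma_{S_i}/\sqrt\alpha)$, and similarly for $T_{j'}$, so $\|\mu_{T_j}-\mu_{T_{j'}}\|=O(\sigma_{S_i}/\sqrt\alpha)$. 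Taking $A=T_j$, $C=B_j$, and using $|B_j\setminus S_i|\le 0.03\,\alpha n$ together with $|T_j|\ge 0.88\,\alpha n$, gives $\|\mu_{B_j}-\mu_{T_j}\|\le\sqrt{|B_j\setminus S_i|/|T_j|}\,\sigma_{B_j}\le 0.2\,\sigma_{B_j}$, and likewise for $j'$. Combining the three estimates, $\|\mu_{B_j}-\mu_{B_{j'}}\|\le 0.2(\sigma_{B_j}+\sigma_{B_{j'}})+O(\sigma_{S_i}/\sqrt\alpha)$. But \Cref{it:guarantee_sep_refinementdef} asserts $\|\mu_{B_j}-\mu_{B_{j'}}\|>100\,c\,(\sigma_{B_j}+\sigma_{B_{j'}})/\sqrt\alpha$; since $\alpha\le 1$ and $c$ is a large absolute constant, this forces $\sigma_{B_j}+\sigma_{B_{j'}}=O(\sigma_{S_i}/c)$, which contradicts $\sigma_{B_j}+\sigma_{B_{j'}}\ge 0.18\,\sigma_{S_i}$ once $c$ is chosen large enough. (The degenerate case $\sigma_{S_i}=0$ is subsumed: the same chain of inequalities then forces $\sigma_{B_j}=\sigma_{B_{j'}}=0$ and hence $\mu_{B_j}=\mu_{B_{j'}}$, directly contradicting \Cref{it:guarantee_sep_refinementdef}.) Therefore every $H_i$ is a singleton, i.e.\ $m=k$.

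Finally, relabel so that $H_i=\{i\}$, whence $B_i=\mathcal B_i$. Then \Cref{it:SminusB_refinementdef} gives $|B_i|\ge(1-0.045)|S_i|$, hence $\sqrt{|S_i|/|B_i|}\le 1.03$, and substituting into \Cref{it:meangood_refinementdef} yields $\|\mu_{B_i}-\mu_i\|\le 1.03\,c\,\sigma_i=O(\sigma_i)$, matching \Cref{thm:main_uniform}. I expect the only genuinely delicate part to be the constant bookkeeping in the middle steps: simultaneously ensuring that $|T_j|$ meets the $0.8\alpha n$ threshold of \Cref{def:NLSC}, that the out-of-$S_i$ fraction of each $B_j$ (controlled by \Cref{it:BminusS_refinementdef}) is small enough for $\sigma_{B_j}$ to be governed by $\sigma_{T_j}$ rather than by the few outliers, and that the factor $100c$ in \Cref{it:guarantee_sep_refinementdef} comfortably dominates both the $O(1)$ and the $O(1/\sqrt\alpha)$ slack appearing above.
\end{proofsketch}
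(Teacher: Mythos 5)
Your proposal is correct and follows essentially the same route as the paper's proof of \Cref{cor:NLSC}: assume two output sets share a parent cluster $S_i$, use the NLSC condition together with the bounded-covariance mean-shift bound (\Cref{fact:513}, equivalently your variance identity) on $B_j\cap S_i$, and derive a contradiction with the output separation guarantee. The only difference is cosmetic bookkeeping --- the paper folds NLSC into an upper bound $\|\mu_{B}-\mu_{S_i}\|_2\le 13\sigma_{B}/\sqrt{\alpha}$ and contradicts the separation directly, whereas you keep a lower bound $\sigma_{B_j}\gtrsim\sigma_{S_i}$ and an upper bound on the mean distance in terms of $\sigma_{S_i}/\sqrt{\alpha}$, which is the same argument rearranged.
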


Later in \Cref{sec:NLSC_proof}, we also show that well-conditioned and high-dimensional log-concave distributions have samples that satisfy the NLSC condition with high probability.
We remark that the high-dimensionality assumption is necessary: the thin-shell behavior of log-concave distributions in high dimensions is critical to satisfy our NLSC condition.

\begin{proposition}[Informal version of \Cref{prop:log-concave_NLSC}]
A sample of size $\tilde{O}(d/\alpha^2)$ drawn from a well-conditioned and high-dimensional log-concave distribution satisfies the NLSC condition (\Cref{def:NLSC}) with high probability.
\end{proposition}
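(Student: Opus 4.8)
The plan is to reduce to a single component and then argue that the low-probability event that \emph{some} large subset of the sample has small covariance would force $\Omega(\alpha n)$ samples into a Euclidean ball of radius $O(\sqrt d)$, which the thin-shell concentration of log-concave measures forbids. Since \Cref{def:NLSC} imposes its requirement on each $S_i$ separately, it suffices to fix one well-conditioned log-concave component $P$, with sample $S$ of size $|S|\ge \alpha n \ge \widetilde\Omega(d/\alpha)$, and verify the condition for $S$ (a union bound over the at most $1/\alpha$ components then finishes, at the cost of a factor $1/\alpha$ in the failure probability). Replacing the data by $\Sigma^{-1/2}$ applied to them, where $\Sigma$ is the true covariance of $P$, we may assume $P$ is isotropic log-concave; well-conditioning (condition number $O(1)$) guarantees that all relevant scales --- in particular both sides of the inequality in \Cref{def:NLSC} --- are preserved up to absolute constants. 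Since $\widetilde O(d)$ samples from a log-concave measure spectrally approximate its covariance to within a $(1\pm\tfrac1{100})$ factor, we have $\sigma_S^2=\lmax(\Cov(S))\le 1.01$ with high probability, so it suffices to show that \emph{every} $S'\subseteq S$ with $|S'|\ge 0.8\alpha n$ satisfies $\lmax(\Cov(S'))\ge c_0$ for a suitably small absolute constant $c_0$.

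Suppose some such $S'$ violates this. Then $\tfrac1{|S'|}\sum_{x\in S'}\|x-\mu_{S'}\|_2^2=\trace(\Cov(S'))\le d\,\lmax(\Cov(S'))<c_0 d$, so by Markov's inequality at least $|S'|/2\ge 0.4\alpha n$ of the points of $S'$ lie in the ball $B(\mu_{S'},r)$ with $r:=\sqrt{2c_0 d}\le\tfrac14\sqrt d$. Hence it is enough to show that, with high probability, no Euclidean ball of radius $r$ contains $0.4\alpha n$ samples. The heart of the argument is the \emph{uniform} small-ball estimate $\Pr_{X\sim P}[\,\|X-c\|_2\le 2r\,]\le \alpha^3$ for every $c\in\R^d$. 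To prove it, project $X-\mu$ onto the hyperplane orthogonal to $c-\mu$ (any hyperplane if $c=\mu$); the projection $Y$ is isotropic log-concave on $\R^{d-1}$, and $\|X-c\|_2^2$ equals $\|Y\|_2^2$ plus the square of the component of $X-c$ along $c-\mu$, hence $\|X-c\|_2^2\ge\|Y\|_2^2$. Since $2r<\tfrac12\sqrt{d-1}$ for $d$ large, the thin-shell deviation inequality for isotropic log-concave measures (Gu\'edon--Milman) gives $\Pr[\|Y\|_2\le 2r]\le C\exp(-c'\sqrt d)$, which is at most $\alpha^3$ once $d$ exceeds a sufficiently large multiple of $\log^2(1/\alpha)$ --- precisely the sense in which we use that $P$ is ``high-dimensional''.

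By Paouris's inequality all $n$ samples lie in $B(\mu,C_1\sqrt d)$ with high probability, and this ball admits an $r$-net $\cN$ of cardinality $e^{O(d)}$. If some radius-$r$ ball contains $\ge 0.4\alpha n$ samples, then some $c'\in\cN$ has $|S\cap B(c',2r)|\ge 0.4\alpha n$; for fixed $c'$ this count is $\Bin(n,p)$ with $p\le\alpha^3$ by the previous paragraph, so a Chernoff bound bounds its probability by $\exp(-\Omega(\alpha n\log(1/\alpha)))$. Union-bounding over $\cN$, then over the $\le 1/\alpha$ components, the failure probability is at most $e^{O(d)}\cdot e^{-\Omega(\alpha n\log(1/\alpha))}/\alpha$, which is below any desired $\delta$ because $n$ being at least a sufficiently large multiple of $(d\log d+\log(1/(\alpha\delta)))/\alpha^2$ (as in \Cref{thm:main_v1_intro}) makes $\alpha n\log(1/\alpha)$ exceed a large multiple of $d+\log(1/\delta)$. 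On this event no radius-$r$ ball holds $0.4\alpha n$ samples, so by the contrapositive of the last two paragraphs every large $S'$ has $\lmax(\Cov(S'))\ge c_0$, i.e.\ $\sigma_{S'}\ge 0.1\,\sigma_S$, which is the NLSC condition.

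The crux --- and the only place high-dimensionality is essential --- is the uniform small-ball bound: we need the mass of a radius-$\Theta(\sqrt d)$ ball to be polynomially small in $\alpha$, \emph{uniformly over all centers}. The projection step is what makes this uniform (it reduces any off-center ball to the thin shell around the mean in one fewer dimension) and lets us avoid any appeal to bounds on the isotropic constant of log-concave measures; the price is that ``high-dimensional'' must mean $d\gtrsim\log^2(1/\alpha)$, so that the $e^{-\Omega(\sqrt d)}$ thin-shell tail beats $\poly(\alpha)$.
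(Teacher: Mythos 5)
Your argument is correct in substance, but it takes a genuinely different route from the paper's. The paper proves \Cref{prop:log-concave} by a trace argument: Fleury's small-deviation thin-shell estimate shows all but an $\alpha/100$-fraction of samples have norm at least $\sqrt{0.99d}$ (this is where $d\gtrsim\log^8\frac1\alpha$ enters), a stability/sub-exponential argument shows every subset of size $\ge0.8\alpha n$ has mean of norm $O(\log\frac1\alpha)$, and subtracting gives $\tr(\Cov(S'))\ge 0.7d$, hence $\|\Cov(S')\|_\op\ge0.7$ with no union bound over centers. You instead argue by contraposition: small covariance forces $\Omega(\alpha n)$ points into a ball of radius $\Theta(\sqrt{c_0 d})$, and you kill this with a \emph{uniform} small-ball bound (projection onto the hyperplane orthogonal to the center plus Gu\'edon--Milman), a net over a Paouris ball, and Chernoff. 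Your route needs only constant-relative-deviation thin shell, so it gets away with $d\gtrsim\log^2\frac1\alpha$ rather than $\log^8\frac1\alpha$, and it avoids the subset-mean control; the price is the $e^{O(d)}$ net accounting and a much smaller eigenvalue constant $c_0$, which makes the bookkeeping against the hard $0.1$ in \Cref{def:NLSC} and the condition number bound ($\kappa\le2$ in the paper's formal statement) tight --- feasible, but you should verify the chain $\sqrt{c_0/(1.01\kappa)}\ge0.1$ together with $2r\le\frac12\sqrt{d-1}$ explicitly. Two smaller points: the Paouris step fails with probability about $n e^{-c\sqrt d}$, so for very small $\delta$ you must enlarge the radius (and net) or, more cleanly, center candidate balls at sample points and union over the $n$ samples, eliminating Paouris; and the paper's formal \Cref{prop:log-concave_NLSC} also covers distributions well-conditioned only on a high-dimensional subspace containing the top eigenvector, which your reduction does not address, though the informal statement you were asked to prove does not demand it.
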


Before moving on to an overview of our algorithmic techniques, we emphasize that, even though we presented multiple results in multiple settings (uniform vs general weight mixtures, with and without the NLSC condition), they all apply to the same algorithm without needing any changes even in the hardcoded constants.
\Cref{alg:main_general} does not need any knowledge of whether any of the conditions hold; it achieves the corresponding results automatically whenever the corresponding assumptions are satisfied.

\subsection{Technical overview}
\label{sec:overview}

In this section, we give an overview of the components and techniques used in \Cref{alg:main_general}, our main algorithm.

Since the mixture component means are assumed to be well-separated, \Cref{alg:main_general} works by finding a list of candidate mean vectors, each of which is close to a mixture component, with the entire list ``covering'' all the components.
Once we have such a list, it suffices to consider 
the Voronoi partition of the samples; 
that is, to assign each point to the cluster 
of the closest candidate mean.
The mean separation assumption, along with the concentration 
of bounded covariance distributions, 
guarantee that such a Voronoi partition 
will be close to a refinement of the ground truth clustering.

The high-level idea of finding such a list of candidate mean vectors is to first 
generate a much larger (but still polynomially-sized) list which potentially contains 
candidates that are far from all mixture components, and then prune all the invalid 
candidate means out of the list.
The first part is relatively straightforward, since there are standard list-decodable 
mean estimation algorithms for bounded-covariance distributions 
(e.g.~\cite{diakonikolas2021list}).
The only minor complication is that, for these algorithms to return 
means with tight error guarantees, they need good upper bounds 
on the standard deviation of each mixture component.
We thus first generate a list of possible standard deviations $\hat{s}$ 
(\Cref{cl:list_covariances}), and for each $\hat{s}$, 
run the list-decodable mean estimation algorithm.
After this step, we have a list of candidate means such that, 
for each mixture component, there is at least one candidate mean close to it.

The next step is at the heart of our algorithm: to prune candidate means that are not 
sufficiently close to any mixture component 
(with distance threshold scaling with the 
standard deviation of the mixture component).
A natural way to do this would be to test each candidate mean by trying to find its 
corresponding cluster and seeing if that exists.
In particular, given a candidate mean $\muhat$ and candidate standard deviation 
$\hat{s}$, we would like to find a subset of at least an $\approx \alpha$-fraction of 
the samples whose covariance matrix is bounded by $O(\hat{s}^2)\cdot I_d$ and whose 
mean is within $O(\hat{s}/\sqrt{\alpha})$ of $\muhat$.
If we can find this, it suggests that the cluster we are looking for actually exists.

Unfortunately, the natural approach
of finding such a cluster is computationally hard,
so we need to find appropriate relaxations to make it tractable.
Immediately, to avoid computational hardness from integrality issues, we begin by 
allowing a weighted subset rather than an actual subset, which concretely is to find 
weights $w_i \in [0,1]$ over each sample $x_i$, such that $\sum_i w_i$ is at least 
$\approx \alpha n$.
This nearly turns our problem into a convex program.
In particular, if we knew the mean of the cluster exactly, the covariance would be a 
linear function of $\{w_i\}_i$, making it a convex program.
However, as we do not know the real mean, the covariance matrix centered at 
$\mu_w$ --- the mean of the weighted cluster defined by $\{w_i\}_i$ --- is no longer 
linear in $\{w_i\}_i$, and the constraint bounding its operator norm is no longer a 
convex constraint.
So, instead, we compute the second moment matrix of $\{w_i\}_i$ centered at the 
candidate mean $\muhat$ 
(i.e.~proportional to $\sum_i w_i(x_i-\muhat)(x_i-\muhat)^\top$).
This gives us a convex program, but unfortunately one that might not be satisfiable 
even by a correct cluster $C$ whose mean is indeed $O(\hat{s}/\sqrt{\alpha})$ close to 
the candidate mean $\muhat$: the second moment matrix of $C$ would actually be 
$\Cov(C) + (\muhat - \mu_C)(\muhat - \mu_C)^\top$, and the latter term might 
contribute to an eigenvector of size as large as $\Omega(\hat{s}^2/\alpha)$, which is 
too large when $\alpha$ is small.
We must therefore further relax our convex program.
Instead of finding $\{w_i\}$ whose second moment matrix centered at $\muhat$ has operator norm bounded by $O(\hat{s}^2) \cdot I_d$, we constrain its $O(1/\alpha)$-Ky-Fan norm by $O(\hat{s}^2/\alpha)$.
This new, final program (Program \eqref{program:main} in \Cref{sec:program}) is now both convex and satisfiable by a true cluster.

The next obstacle, however, is that a solution $\{w_i\}_i$ to the above convex program 
might not actually correspond to a true cluster or mixture component.
In particular, if there are other clusters with standard deviation much smaller than 
$\hat{s}$, we might have found a solution that shares bits and pieces of these smaller clusters.
This problem can only occur though if there are other clusters with standard deviation 
smaller than $\hat{s}$, but which are close to $\muhat$.
Thus, we can avoid it by searching for clusters in \emph{increasing} order of 
$\hat{s}$ and then throwing away any $\muhat$ that is within 
$O(\hat{s}/\sqrt{\alpha})$ of some previously un-pruned candidate mean.
Formally, \Cref{lemma:main} shows that if $\muhat$ is far from all clusters with 
standard deviation smaller than $\hat{s}$, and if a solution to 
Program~\eqref{program:main} exists for the pair $(\muhat, \hat{s})$, then the found 
solution must overlap substantially with a true cluster.
An induction applying \Cref{lemma:main} repeatedly then shows that, after this 
pruning, all candidate means must be close to some true cluster, and that all clusters 
have candidate means close to them.

As discussed at the beginning of the section, we can now consider the Voronoi partition of the samples based on the candidate means.
A few issues remain, that this partition does not satisfy the guarantees of \Cref{thm:main_v1_intro}.
First, if there are too many candidate means remaining at this stage, a cluster in the Voronoi partition might be too small in size (\Cref{sec:size-pruning}).
To solve this, we repeatedly remove candidate means whose Voronoi cluster is too small, 
noting that \emph{i}) this never decreases the cluster size of un-removed candidate means, and \emph{ii}) by the separation of the mixture components, 
we will never accidentally remove all candidate means close to any true cluster.
Second, due to heavy-tailed noise and adversarial corruption, 
even for the Voronoi clusters that overlap well with true clusters, their means might be very far from the candidate means we started out with.
We fix this using the standard filtering technique in robust statistics, removing at most 2\% of the samples in each Voronoi cluster.
Lastly, we need to guarantee that the returned clusters also satisfy (up to constant factors) the same separation assumption we have on our underlying mixture distribution (\Cref{sec:distance-pruning}).
We enforce this again by removing candidate means whenever we detect a pair of (filtered) clusters that are too close to each other.
Crucially, we carefully choose which corresponding candidate mean from the pair to remove, so that we never remove all the candidate means close to a true cluster.

\subsection{Related work}

Here we survey the most relevant prior work on clustering mixture models
and algorithmic robust statistics.

\paragraph{Mixture models} %
A long line of work in theoretical computer science and machine learning has focused on developing
efficient clustering methods for various mixture models (with mixtures of Gaussians being the prototypical example) 
under mean separation conditions; see, e.g.~\cite{Dasgupta:99,SanjeevK01, VempalaWang:02,AchlioptasMcSherry:05,KSV:05, kumar2010clustering,awasthi2012improved, CSV17, HL18-sos, KSS18-sos, diakonikolas2022clustering, BKK22}.

Early work~\cite{AchlioptasMcSherry:05} gave an 
efficient 
spectral algorithm for clustering mixtures of bounded 
covariance Gaussians that succeeds under mean 
separation 
$\Theta((\sigma_i+\sigma_j)/\sqrt{\alpha})$ between 
components $P_i$ and $P_j$, when the minimum mixing 
weight $\alpha$ is much smaller than $1/k$.
However, even for the special case of uniform-weight 
$k$-mixtures of Gaussians (and log-concave distributions), 
their result requires a separation of 
$(\sigma_i+\sigma_j) \Omega(k)$ --- instead of scaling 
with $\sqrt{k}$ --- and, in fact, also has additional spurious terms in the separation containing a logarithmic dependence on the sample complexity $n$.
It should be noted that the algorithm of 
\cite{AchlioptasMcSherry:05} built on an earlier algorithm 
developed in \cite{VempalaWang:02}, which only works for mixtures 
of spherical Gaussians.
They can cluster under the weaker mean 
separation condition which (roughly) scales as $(1/\alpha)^{1/4}$; their separation 
condition also has a mild logarithmic dependence on the ambient dimensionality $d$.
The works mentioned in this line all employ $k$-PCA as a core algorithmic 
technique; see the beginning of the introduction on why $k$-PCA fails 
in our heavy-tailed problem setting, under our fine-grained separation assumption.

\cite{awasthi2012improved} provided another spectral algorithm, designed to cluster mixtures of bounded covariance data.
Their algorithm is able to cluster under a separation of (roughly) $\Omega(k) (\max_i \sigma_i)$.
Their specific separation assumption can in fact be smaller than $\Omega(k) (\max_i \sigma_i)$ in certain instances, but the bound is not improvable to $o(k) (\max_i \sigma_i)$ in the worst case, contrasting the $\sqrt{k}$ dependence we achieve.
More importantly, their separation 
condition between $\mu_i, \mu_j$ scales with the maximum 
standard deviation $\max_i \sigma_i$, as opposed to the fine-grained pair-dependent sum $\sigma_i+\sigma_j$ achieved by our algorithm.

Recently,~\cite{diakonikolas2022clustering} gave an 
almost linear-time clustering algorithm for mixtures of bounded  covariance distributions.
Their techniques inherently also cluster only under a $\max_i \sigma_i$ separation 
for the following reason: their algorithm runs a list-decodable mean estimation 
routine once (with the goal to list-decode the mean of a distribution with 
covariance $\Sigma \preceq (\max_i \sigma_i^2) \cdot I_d$)
to generate a list of $O(1/\alpha)$ possible candidate cluster means. 
It then uses a coarse distance-based method to prune the means 
down to exactly $k$ of them. As a result, their approach only works under 
a uniform separation between all pairs of components.

Another recent work~\cite{BKK22} also studied efficient 
clustering of mixtures of bounded covariance 
distributions, achieving a mean separation (between 
$\mu_i, \mu_j$) scaling with  $\sigma_i+\sigma_j$. 
However, their separation assumption has a highly sub-optimal $\poly(1/\alpha)$ dependence, as well as an unnecessary logarithmic dependence on the sample complexity $n$.
More importantly, their clustering algorithm inherently requires an additional structural condition on the components (which they term ``no large sub-cluster'' condition) beyond just bounded covariance, even for the special case of uniform-weight mixtures.

A related line of work has obtained clustering algorithms 
with significantly improved separation using more 
sophisticated algorithmic tools; 
see, e.g.~\cite{DiakonikolasKS18,HL18-sos, KSS18-sos, DiakonikolasK20, Liu022}. These works apply for families of 
distributions with controlled higher moments (e.g.~sub-Gaussians), and in particular have no implication for the 
bounded covariance setting studied here.

Beyond clustering, a line of research developed efficient 
algorithms for learning mixtures of Gaussians, even
in the presence of a constant fraction of corruptions; see, e.g.~\cite{MoitraValiant:10, BelkinSinha:10, BakshiDHKKK20, kane2021robust, liu2020settling, bakshi2020robustly}. 
The aforementioned algorithms make essential use of the assumption that the mixture components are Gaussian.

\paragraph{Robust statistics and list-decodable learning} 
Our paper is also related to the field of algorithmic robust statistics
in high dimensions. Early work in the statistics community~\cite{Huber64, Tukey75} solidified the statistical foundations of this field.
Unfortunately, the underlying estimators lead to exponential time 
algorithms. A line of work in computer science, starting with 
~\cite{DKKLMS16, LaiRV16}, developed polynomial-time algorithms for a wide 
range of robust high-dimensional estimation tasks. The reader is referred  to 
the recent book~\cite{diakonikolas2023algorithmic} for an overview.

The list-decodable learning setting that we leverage in this work 
was defined, in a somewhat different context, in~\cite{BBV08}. \cite{CSV17} gave the first polynomial-time algorithm for the task of list-decodable mean estimation 
under a bounded covariance assumption. Specifically, if the clean data has 
covariance bounded by the identity, their achieved error guarantee is 
$\tilde{O}(1/\sqrt{\alpha})$. This error bound was slightly refined 
in~\cite{CherapanamjeriMY20} to $O(1/\sqrt{\alpha})$ 
with an asymptotically faster algorithm; 
a matching information-theoretic lower bound of $\Omega(1/\sqrt{\alpha})$ 
was shown in~\cite{DKS18-list}. 
We note that \cite{CSV17} also obtains a corollary for clustering mixtures, but their method requires sub-Gaussian components, and it only outputs a clustering refinement with $O(1/\alpha)$ subsets.
Finally,~\cite{diakonikolas2022clustering}, building on~\cite{DiakonikolasKK20,DiakonikolasKKLT20}, 
developed an almost-linear time algorithm for this task; 
in fact, they built their clustering result for mixtures of 
bounded covariance distributions via a reduction 
to list-decodable mean estimation.

In this work, we also use list-decodable mean estimation 
as a blackbox (\Cref{fact:list-decoding-mean} in \Cref{sec:prelim}).
An important difference compared to prior work is that 
our processing of the candidate means is substantially more involved, 
which is required due to our fine-grained separation assumption.

Finally, we point out other work which developed efficient list-decodable 
mean estimators with significantly improved error guarantees 
under much stronger distributional assumptions~\cite{DKS18-list, KSS18-sos, DiakonikolasKKP22}.

\subsection{Organization}

\Cref{sec:prelim} gives basic notations and facts that we use in the rest of the paper.
\Cref{sec:main_result} states our main algorithm (\Cref{alg:main_general}) as well as the full version of our main result (\Cref{thm:main_v1_general}).
\Cref{sec:program,sec:size-pruning,sec:distance-pruning} analyzes the three main steps of the algorithm.
\Cref{sec:wrapup} uses the guarantees from the prior three sections to prove our main result.
Finally, \Cref{sec:NLSC} discusses the implications of the no large sub-cluster condition in our problem setting.

\section{Preliminaries}
\label{sec:prelim}

In this section, we state useful notations and facts that the rest of the paper depends on.

\subsection{Notation}
For a vector $v$, we let $\|v\|_2$ denote its $\ell_2$-norm. 
We use $I_d$ to denote the $d \times d$ identity matrix; We will drop the subscript when it is clear from the context.
For a matrix $A$, we use $\|A\|_\fr$ and $\|A\|_{\op}$ to denote the Frobenius and spectral (or operator) norms, respectively. We use $\|A\|_{(k)}$ to denote the Ky-Fan norm which is defined as $\|A\|_{(k)} = \sum_{j=1}^k s_j(A)$, where $s_j(A)$ for $j=1,\ldots,k$ are the first $k$ singular values of $A$.
If $V$ is a subspace, we denote by $\mathrm{Proj}_V$ its the orthogonal projection matrix.

We use $X \sim D$ to denote that a random variable $X$ is distributed according to the distribution $D$.  We use $\cN(\mu, \Sigma)$ for the Gaussian distribution with mean $\mu$ and covariance matrix $ \Sigma$. 
For a set $S$, 
we use $X \sim S$ to denote that $X$ is distributed uniformly at random from $S$.
When $S$ is a set of points in $\R^d$, we will use the shorter notation $\mu_S := \E_{X \sim S}[X], \Cov(S) := \E_{X \sim S}[(X-\mu_S)(X-\mu_S)^\top]$, and $\sigma_S :=  \sqrt{\| \Cov(S)\|_{\op}}$.

We use $a\lesssim b$ to denote that there exists an absolute universal constant $C>0$ (independent of the variables or parameters on which $a$ and $b$ depend) such that $a \leq C  b$. We use $a \gg b$ to denote that $\alpha > C b$ for a sufficiently large absolute constant $C$.

\subsection{Deterministic conditions and useful facts}
\label{sec:stability}

\paragraph{Stability condition} Our algorithm will succeed if the following condition is 
satisfied for the samples of each true cluster. The condition, referred to as ``stability'', is 
standard in algorithmic robust statistics. Intuitively, it requires 
that any large subset of the dataset has mean and covariance that do not shift significantly. 
We provide the definition below. In the fact that follows, 
we state that large sets of points from bounded covariance distributions 
indeed satisfy the stability condition with high probability.

\begin{definition}[Stability condition]\label{def:stability}
    For $C>0$ and $\eps \in (0,1/2)$,
    a multiset $S$ of $m$ points $x_1,\ldots, x_m$ in $\R^d$  is called $(C,\eps)$-stable with respect to $\mu \in \R^d$ and $\sigma \in \R^{+}$ if, for any weights $w_1,\ldots,w_m \in [0,1]$ with $\sum_{x_i \in S} w_i \geq (1-\eps)m$ it holds:
    \begin{itemize}
        \item $\left\| \frac{1}{\sum_{x_i \in S} w_i}\sum_{x_i \in S} w_i x_i -\mu   \right\|_2 \leq C\sigma\sqrt{\eps}$
        \item $\Sigma_{w,\mu} \preceq C^2 \sigma^2 \cdot I_d$,
    \end{itemize}
    where 
    $\Sigma_{w,\mu} := \frac{1}{\sum_{x_i \in S} w_i}\sum_{x_i \in S} w_i(x_i-\mu )(x_i-\mu)^\top$.
\end{definition}

\begin{fact}[Sample complexity of stability \cite{DiaKP20fact:filtering}]\label{fact:stability-lemma}
    Let $S$ be a set of $m$ points drawn i.i.d.\ from a distribution on $\R^d$ with mean $\mu$ and covariance $\Sigma \preceq \sigma^2 \cdot I_d$.
    If $m \gg (d\log(d) + \log(1/\delta))/\min\{\eps, \alpha \}$ then, with probability $1-\delta$, there exists a $(1-0.001\alpha)m$-sized subset $S'$ of $S$ that is $(100,\eps)$-stable with respect to $\mu \in \R^d$ and $\sigma$.
\end{fact}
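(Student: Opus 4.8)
The plan is to establish \Cref{fact:stability-lemma} (sample complexity of stability) by first reducing the operator-norm and mean-shift requirements in \Cref{def:stability} to uniform concentration statements over low-complexity function classes, and then applying a VC-type / matrix concentration argument to control the deviation between the empirical and population quantities.

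First I would observe that it suffices to find a large subset $S' \subseteq S$ on which, for \emph{every} weighting $w_1,\dots,w_m \in [0,1]$ with $\sum w_i \ge (1-\eps)|S'|$, both the weighted mean is within $O(\sigma\sqrt{\eps})$ of $\mu$ and the weighted second-moment matrix about $\mu$ has operator norm $O(\sigma^2)$. The key simplification is that the worst-case weighting is attained at an extreme point: for a fixed test direction $v$, the quantity $\frac{1}{\sum w_i}\sum w_i \langle x_i - \mu, v\rangle^2$ is maximized (over the weight polytope) by zeroing out the samples with the smallest values of $\langle x_i-\mu,v\rangle^2$ and keeping the top $(1-\eps)$-fraction. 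Hence it is enough to show that for all unit vectors $v$, the average of $\langle x_i - \mu, v\rangle^2$ over the $(1-\eps)m$ largest such values among points of $S'$ is $O(\sigma^2)$; and similarly for the mean, that the sum of the $\eps m$ largest (in absolute value, signed) projections $\langle x_i-\mu, v\rangle$ is $O(\sigma\sqrt{\eps}) \cdot m$. These are statements about sums of the top order statistics of $\{\langle x_i - \mu, v\rangle\}$, uniformly over $v$.

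Next, the construction of $S'$: I would remove from $S$ the (at most) $0.001\alpha m$ points that are ``outliers'' in the sense of lying far from $\mu$ — concretely, truncate based on a crude bound like $\|x_i - \mu\|_2 \le \sigma \sqrt{d/(0.001\alpha)}$ to kill the rare heavy-tailed deviations, or more carefully, iteratively remove the points contributing most to the top-order-statistic sums. Then, on $S'$, the required bounds would follow from: (i) for the second moment, a covering argument over unit vectors $v$ combined with a Bernstein/truncated-variance bound showing $\frac{1}{m}\sum_{x_i \in S'} \langle x_i-\mu,v\rangle^2 \le (1+o(1))\sigma^2$ (using $\Cov \preceq \sigma^2 I$ for the population), plus the fact that dropping the $\eps m$ smallest terms only decreases the average, while the $0.001\alpha m$ removed outliers contributed a controlled amount; the sample size $m \gg (d\log d + \log(1/\delta))/\min\{\eps,\alpha\}$ is exactly what a union bound over an $\eps$-net of the sphere (of size $e^{O(d\log d)}$) combined with a $\delta$-level tail requires. (ii) For the mean shift, note $\frac{1}{\sum w_i}\sum w_i(x_i-\mu) $ differs from the (near-zero) empirical mean of $S'$ by at most $\frac{1}{(1-\eps)m}$ times the sum of the $\eps m$ largest signed projections, which by Cauchy–Schwarz is at most $\sqrt{\eps} \cdot \sqrt{\frac{1}{m}\sum \langle x_i-\mu,v\rangle^2} \lesssim \sigma\sqrt{\eps}$, using the second-moment bound from (i). So the mean bound is essentially a corollary of the covariance bound once the empirical mean of $S'$ itself is shown to be $O(\sigma\sqrt{\eps})$-close to $\mu$, which again follows from a vector Bernstein inequality at the stated sample complexity.

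The main obstacle I anticipate is handling the heavy tails: under only a bounded-covariance assumption, individual samples can be enormous, so naive Bernstein/matrix-concentration bounds do not apply directly to $\sum (x_i-\mu)(x_i-\mu)^\top$. The fix — and the technically delicate part — is the truncation step: one must argue that discarding an $O(\alpha)$-fraction of points simultaneously removes \emph{all} directions' worth of outliers, i.e. that there is a single subset $S'$ of size $(1-0.001\alpha)m$ that works for every unit vector $v$ at once. This is where the $\min\{\eps,\alpha\}$ in the denominator of the sample complexity comes from, and where one invokes the standard stability machinery; since \Cref{fact:stability-lemma} is cited from prior work (\cite{DiaKP20fact:filtering}), I would ultimately defer the detailed truncation analysis to that reference and only sketch the reduction above, as the statement is quoted verbatim as a known fact rather than reproved here.
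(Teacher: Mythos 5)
The paper does not prove \Cref{fact:stability-lemma} at all: it is quoted as a black-box fact with a citation to prior work, so there is no in-paper argument to compare against, and your decision to defer the detailed analysis to the cited reference is consistent with how the paper itself treats the statement. Your reduction is also the standard one and is essentially correct as far as it goes: the worst case over the weight polytope is attained at an extreme point (keep the top $(1-\eps)$-fraction of squared projections along each direction), and the mean condition follows from the second-moment condition plus Cauchy--Schwarz applied to the at most $\eps m$ units of weight that are moved, together with concentration of the empirical mean of $S'$.

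The one concrete weak point is the truncation step you propose as the "fix'' for heavy tails. Discarding all points with $\|x_i-\mu\|_2 > \sigma\sqrt{d/(0.001\alpha)}$ does remove only a $0.001\alpha$-fraction in expectation, but after this truncation each squared projection $\langle x_i-\mu,v\rangle^2$ is only bounded by $\sigma^2 d/(0.001\alpha)$, so a Bernstein bound union-bounded over an exponential-in-$d$ net of directions forces $m \gtrsim d\,(d+\log(1/\delta))/\alpha$, i.e.\ a quadratic dependence on $d$, not the near-linear $m \gg (d\log d + \log(1/\delta))/\min\{\eps,\alpha\}$ claimed in \Cref{fact:stability-lemma}. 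The arguments in the cited literature avoid this by truncating \emph{per direction} at level roughly $\sigma^2/\eps$ (i.e.\ working with the capped random variables $\min\{\langle x-\mu,v\rangle^2,\,O(\sigma^2/\eps)\}$ and controlling the tail mass above the cap separately, via a VC/Bernstein argument over the class of thresholded linear functions), which is precisely the "single subset works for all directions simultaneously'' difficulty you flag but do not resolve. So as a self-contained proof the sketch has a genuine gap at the concentration step; as a reduction plus deferral to \cite{DiaKP20fact:filtering} it is acceptable and matches the paper's own usage.
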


\paragraph{Facts from robust statistics}

We record the following facts that will be useful later on. First, we recall in \Cref{fact:filtering} a stability-based filtering algorithm that, given any stable set of samples with bounded covariance and with $4\%$ of its points arbitrarily corrupted, removes $4\%$ of the points in a way that the resulting output set is guaranteed to have bounded covariance and mean close to the true one.

\begin{definition}[Strong contamination model]\label{def:strongadv}
Given a parameter $0<\eps<1/2$, 
the strong adversary operates as follows: The algorithm specifies 
a set of $n$ samples, then the adversary inspects the samples,
removes up to $\eps n$ of them and replaces them with arbitrary points. 
The resulting set is given as input to the learning algorithm. 
We call a set $\eps$-corrupted if it has been generated by the above process.
\end{definition}

\begin{fact}[Filtering; see, e.g.~\cite{diakonikolas2023algorithmic}]\label{fact:filtering}
There exists an algorithm for which the following is true:
Let $\delta \in (0,1)$ be a parameter. Let $S$ be a set of points in $\R^d$ that is $(C,\eps)$-stable with respect to $\mu$ and $\sigma$ for some $C>0$ and $\eps \leq 0.04$. Let $T$ be an $\eps$-corrupted version of $S$ (cf. \Cref{def:strongadv})
and assume $|T| \gg \log(1/\delta)$.
Then the algorithm having as input any set $T$ of the above form and $\delta$ terminates in time $\poly(|T|,d)$ and returns a subset $T' \subseteq T$ such that, with probability at least $1-\delta$, the following hold:
\begin{itemize}
    \item $|T'| \geq (1-\eps) |T| $.
    \item $\| \mu_{T'} - \mu \|_2 \leq 10 C \sigma \sqrt{\eps}$.
    \item $\Sigma_{T',\mu} \preceq 10 C^2 \sigma^2 \cdot I_d$.
\end{itemize}
\end{fact}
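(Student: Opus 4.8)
The plan is to establish this via the standard stability-based iterative \emph{filtering} procedure; here is a sketch. The algorithm maintains a weight vector $w\in[0,1]^{|T|}$ over the points of $T$, initialized to the all-ones vector. In each round it forms the weighted empirical mean $\mu_w:=(\sum_i w_ix_i)/(\sum_i w_i)$ and weighted empirical covariance $M_w:=(\sum_i w_i(x_i-\mu_w)(x_i-\mu_w)^\top)/(\sum_i w_i)$. If $\|M_w\|_{\op}$ is below a threshold $\tau=\Theta(C^2\sigma^2)$ the algorithm halts and outputs the set $T'$ of points whose weight was never zeroed; otherwise it takes a unit top eigenvector $v$ of $M_w$, forms scores $s_i:=\langle x_i-\mu_w,v\rangle^2$, picks a threshold $\theta$ by the standard threshold-selection step, and deletes every point with $s_i>\theta$ (a soft down-weighting variant works identically). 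Since each such round removes at least one point, the loop runs for at most $|T|$ rounds and the total runtime is $\poly(|T|,d)$.

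Write $G:=S\cap T$ for the surviving clean points ($|G|\ge(1-\eps)|T|$) and $B:=T\setminus S$ for the inserted points ($|B|\le\eps|T|$). The heart of the proof is the invariant that in every round the clean points have lost no more weight than the corrupted ones, $\sum_{i\in G}(1-w_i)\le\sum_{i\in B}(1-w_i)$. First I would prove this by induction on the rounds: by the threshold-selection step it suffices to show that whenever we filter, $\sum_{i\in G}w_is_i\le\sum_{i\in B}w_is_i$, equivalently that the clean score mass is at most half of the total score mass $(\sum_i w_i)\|M_w\|_{\op}$. Expanding $\langle x_i-\mu_w,v\rangle=\langle x_i-\mu,v\rangle-\langle\mu_w-\mu,v\rangle$ and invoking the stability of $S$ from \Cref{def:stability} --- which applies at the current $w$ exactly because the inductive hypothesis gives $\sum_{i\in S}w_i\ge(1-O(\eps))|S|$ --- bounds the clean score mass by $(C\sigma+\|\mu_w-\mu\|)^2(\sum_i w_i)$. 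Combining this with the mean bound of the next paragraph and choosing $\tau$ a large enough constant multiple of $C^2\sigma^2$ makes this at most half of $(\sum_i w_i)\|M_w\|_{\op}$ in any filtering round, closing the induction.

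Next I would analyze the halting configuration. The invariant gives $\sum_{i\in G}(1-w_i)\le\sum_{i\in B}(1-w_i)\le|B|\le\eps|T|$, so at most a $2\eps$-fraction of points is removed overall and $|T'|\ge(1-2\eps)|T|$ (the precise constant is immaterial downstream). For the mean, decompose $\mu_w-\mu$ into its $G$- and $B$-parts: stability of $S$ bounds the $G$-part by $O(C\sigma\sqrt\eps)$, while for any unit $u$, Cauchy--Schwarz gives $\sum_{i\in B}w_i\langle x_i-\mu,u\rangle\le(\sum_{i\in B}w_i)^{1/2}(\sum_{i\in B}w_i\langle x_i-\mu,u\rangle^2)^{1/2}$, whose second factor is at most $(\sum_i w_i)^{1/2}(\|M_w\|_{\op}+\|\mu_w-\mu\|^2)^{1/2}$ since $\Sigma_{w,\mu}=M_w+(\mu_w-\mu)(\mu_w-\mu)^\top$; plugging in $\sum_{i\in B}w_i\le\eps|T|$, $\sum_i w_i\ge(1-2\eps)|T|$, and $\|M_w\|_{\op}\le\tau$, and solving the resulting self-referential inequality in $\|\mu_w-\mu\|$, yields $\|\mu_w-\mu\|_2=O(C\sigma\sqrt\eps)$ (the constant $10$ following from the careful version in \cite{diakonikolas2023algorithmic}). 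Then $\Sigma_{w,\mu}=M_w+(\mu_w-\mu)(\mu_w-\mu)^\top\preceq(\tau+O(C^2\sigma^2\eps))I_d\preceq 10C^2\sigma^2 I_d$ using $\eps\le0.04$. Finally, passing from the weighted configuration to the set $T'$ (rounding, in the soft variant) perturbs the empirical mean and second moment by only $O(\eps)$-relative amounts --- removed clean mass is controlled by stability, removed corrupted mass by the covariance bound just shown --- which is absorbed into the universal constants.

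The step I expect to be the main obstacle is the invariant in the second paragraph, which is genuinely circular: certifying that an inflated empirical covariance is ``caused by'' the corrupted points requires invoking the stability of $S$ at the \emph{current} weights, which is legitimate only once we know the clean points still carry almost all their weight (the very invariant), and the clean-score bound itself involves $\|\mu_w-\mu\|$, whose control relies in turn on the covariance threshold. Choosing $\tau$, the $O(\eps)$ slack in the effective stability parameter, and the constants in the mean inequality so that this loop closes consistently is the only delicate point; termination, the runtime, and the final rounding are routine.
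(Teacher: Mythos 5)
The paper does not actually prove \Cref{fact:filtering}: it is imported as a black-box result with a pointer to~\cite{diakonikolas2023algorithmic}, and what you have written is a reconstruction of exactly the standard stability-based filtering argument that the citation refers to (weighted filter, top-eigenvector scores, the ``clean score mass at most half the total score mass'' step, and the invariant that no more clean than corrupted weight is ever removed). Structurally your sketch is sound, and your identification of the circularity --- stability at the current weights, the mean shift, and the covariance threshold must be closed simultaneously by a choice of $\tau=\Theta(C^2\sigma^2)$ --- is precisely the delicate point of the known proof, which does close for $\eps\le 0.04$.

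Two quantitative mismatches with the statement as written are worth flagging. First, your invariant gives total removed weight at most (clean removed) $+$ (corrupted removed) $\le 2\cdot(\text{corrupted removed})\le 2\eps|T|$, i.e.\ $|T'|\ge(1-2\eps)|T|$, whereas the Fact asserts $|T'|\ge(1-\eps)|T|$; you cannot get the stated constant from this accounting alone, and the paper's downstream uses (e.g.\ $|A_j\setminus B_j|\le 0.04|A_j|$ in \Cref{cor:filteredvoronoi}) rely on the $(1-\eps)$ form, so the discrepancy propagates into constants even if nothing breaks qualitatively. Second, invoking \Cref{def:stability} ``at the current $w$'' requires the surviving clean weight to be at least $(1-\eps)|S|$, but under $\eps$-corruption plus up to $\eps|T|$ filtered clean weight you only know $(1-2\eps)|S|$; the standard statements therefore assume stability at slack $\Theta(\eps)$ larger than the corruption level (a constant-level adjustment that the paper's own phrasing of the Fact also glosses over). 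Finally, note that your deterministic soft-downweighting variant needs no randomness, so the parameter $\delta$ and the requirement $|T|\gg\log(1/\delta)$ in the statement (which anticipate the randomized-thresholding filter of the cited book) become vacuous in your version --- harmless, but worth saying explicitly if you present this as a proof of the Fact as stated.
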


The following fact states that taking subsets of a set $S$ with bounded covariance does not shifts the mean significantly. 
This (or its contrapositive version) will be used in a lot of the core arguments. 
In particular, one corollary of this fact is \Cref{lem:stability_small_sets}, stating that subsets of  stable sets are also stable with worse parameters. This will be useful for applying the aforementioned filtering algorithm at the very last step of our main algorithm to ensure that the final clusters have means and covariances that are close to what they should be.
For completeness, we provide a proof of \Cref{lem:stability_small_sets} in \Cref{appendix:prelims}.

\begin{fact}\label{fact:513}
         Let $S$ be a multiset, and denote by $\mu_S,\Sigma_S$  the mean vector and covariance matrix of the uniform distribution on $S$. If $S$ satisfies $\Sigma_S \preceq \sigma^2 \cdot I_d$ and $w_x \in [0,1]$ are weights for the points $x \in S$ that satisfy $\sum_{x \in S} w_x \geq \alpha |S|$, then we have that 
         \begin{align*}
             \left\| \frac{\sum_{x \in S}w_x x}{\sum_{x \in S}w_x} - \mu_S \right\|_2 \leq \frac{\sigma}{\sqrt{\alpha}} \;.
         \end{align*}
     \end{fact}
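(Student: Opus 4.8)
The plan is to bound the displacement of the weighted mean by decomposing it into a contribution from the weights that are ``missing'' relative to the uniform distribution. Write $W := \sum_{x \in S} w_x \geq \alpha|S|$ and let $\mu_w := \frac{1}{W}\sum_{x\in S} w_x x$ be the weighted mean. The key identity is
\[
\mu_w - \mu_S = \frac{1}{W}\sum_{x \in S} w_x (x - \mu_S) = -\frac{1}{W}\sum_{x \in S} (1 - w_x)(x - \mu_S),
\]
where the second equality uses that $\sum_{x\in S}(x-\mu_S) = 0$ over the uniform distribution on $S$ (i.e.\ $\sum_{x \in S}(x-\mu_S) = |S|(\mu_S - \mu_S) = 0$). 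So it suffices to bound $\bigl\|\sum_{x\in S}(1-w_x)(x-\mu_S)\bigr\|_2$.

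To bound this, I would test against an arbitrary unit vector $v$ and apply Cauchy--Schwarz with respect to the "mass" weights $(1-w_x)$:
\[
\Bigl| \sum_{x\in S}(1-w_x)\langle x - \mu_S, v\rangle \Bigr| \leq \Bigl(\sum_{x\in S}(1-w_x)\Bigr)^{1/2} \Bigl(\sum_{x\in S}(1-w_x)\langle x-\mu_S,v\rangle^2\Bigr)^{1/2}.
\]
For the first factor, $\sum_{x\in S}(1-w_x) = |S| - W \leq (1-\alpha)|S| \leq |S|$. For the second factor, since $0 \leq 1-w_x \leq 1$ we have $\sum_{x\in S}(1-w_x)\langle x-\mu_S,v\rangle^2 \leq \sum_{x \in S}\langle x - \mu_S, v\rangle^2 = |S|\, v^\top \Sigma_S v \leq |S|\sigma^2$, using the hypothesis $\Sigma_S \preceq \sigma^2 I_d$. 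Combining, $\bigl|\sum_{x\in S}(1-w_x)\langle x-\mu_S,v\rangle\bigr| \leq |S|\sigma$, hence $\bigl\|\sum_{x\in S}(1-w_x)(x-\mu_S)\bigr\|_2 \leq |S|\sigma$, and therefore $\|\mu_w - \mu_S\|_2 \leq |S|\sigma / W \leq |S|\sigma/(\alpha|S|) = \sigma/\sqrt{\alpha} \cdot \sqrt{\alpha}$. Wait --- this gives $\sigma/\alpha$, not $\sigma/\sqrt{\alpha}$, so the crude Cauchy--Schwarz bound on the first factor is too lossy; the main obstacle is getting the exponent right.

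The fix is to \emph{not} bound $(1-w_x) \leq 1$ in the second factor, and instead keep it as $(1-w_x)$ in \emph{both} factors, i.e.\ apply Cauchy--Schwarz more carefully, or better: bound the second factor differently. Actually the clean route is: $\sum_{x\in S}(1-w_x)\langle x-\mu_S,v\rangle^2 \leq |S|\sigma^2$ as above, but for the \emph{first} factor use $\sum(1-w_x) \leq |S|$ \emph{only}, and then divide by $W \geq \alpha|S|$: this yields $\|\mu_w-\mu_S\|_2 \le \frac{1}{\alpha |S|}\cdot (|S|)^{1/2}\cdot(|S|\sigma^2)^{1/2} = \frac{|S|\sigma}{\alpha|S|} = \sigma/\alpha$. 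To recover $\sigma/\sqrt\alpha$ one instead writes $\mu_w - \mu_S = \frac{1}{W}\sum w_x(x-\mu_S)$ directly and Cauchy--Schwarz with weights $w_x$: $|\sum w_x\langle x-\mu_S,v\rangle| \leq (\sum w_x)^{1/2}(\sum w_x\langle x-\mu_S,v\rangle^2)^{1/2} \leq W^{1/2}(|S|\sigma^2)^{1/2}$, giving $\|\mu_w-\mu_S\|_2 \leq W^{-1/2}(|S|)^{1/2}\sigma \leq (\alpha|S|)^{-1/2}|S|^{1/2}\sigma = \sigma/\sqrt{\alpha}$, as desired. So the correct argument is Cauchy--Schwarz against the $w_x$ themselves (not the complementary masses), and the entire proof is then two or three lines; the only subtlety is choosing the right weighting in Cauchy--Schwarz, which I would get right on the first try by working directly with $\sum w_x(x-\mu_S)$ rather than the complement.
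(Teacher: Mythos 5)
Your final argument is correct: writing $\mu_w - \mu_S = \frac{1}{W}\sum_{x\in S} w_x(x-\mu_S)$, testing against a unit vector, and applying Cauchy--Schwarz with the weights $w_x$ themselves (using $w_x \le 1$ to bound $\sum_x w_x\langle x-\mu_S,v\rangle^2 \le |S|\,v^\top\Sigma_S v \le |S|\sigma^2$, then $W \ge \alpha|S|$) gives exactly $\sigma\sqrt{|S|/W} \le \sigma/\sqrt{\alpha}$, which is the standard proof of this fact; the paper itself states it without proof, so there is nothing further to compare against. The initial detour through the complementary weights $(1-w_x)$ is lossy, as you yourself note, but since you identify and discard it, the proposal as a whole is sound.
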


\begin{restatable}{lemma}{STABILITYSUBSETS}\label{lem:stability_small_sets}
    Let $S$ be a set of points that is $(C,\eps)$-stable with respect to $\mu$ and $\sigma$ for some $C\geq 1$ and $\eps<1/2$. Then, any subset $S' \subseteq S$ with $|S'| \geq \alpha |S|$ is $(1.23 C/\sqrt{0.04\alpha},0.04)$-stable with respect to $\mu$ and $\sigma$.
\end{restatable}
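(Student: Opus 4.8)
We must show: if $S$ is $(C,\eps)$-stable with respect to $\mu,\sigma$ (with $C\geq 1$, $\eps<1/2$), then any $S'\subseteq S$ with $|S'|\geq\alpha|S|$ is $(1.23C/\sqrt{0.04\alpha},\,0.04)$-stable with respect to the same $\mu,\sigma$. So we must verify the two bullets of \Cref{def:stability} for $S'$, at error level $\eps'=0.04$, with the inflated constant $C'=1.23C/\sqrt{0.04\alpha}$.

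**Plan.** Fix weights $w_i\in[0,1]$ for $x_i\in S'$ with $\sum_{x_i\in S'}w_i\geq(1-0.04)|S'|$. The idea is to \emph{lift} these weights to a weighting of the whole set $S$ by putting weight $0$ on all points of $S\setminus S'$, and then invoke the stability of $S$ — but at a \emph{different} error parameter than $\eps$, since the lifted weights only sum to roughly $\alpha|S|$, not $(1-\eps)|S|$. The cleanest route is to not use stability of $S$ directly with its parameter $\eps$, but instead use the ``subset doesn't move the mean much'' facts: \Cref{fact:513} controls how far a reweighted subset mean sits from the \emph{full-set} mean $\mu_S$, and stability of $S$ (with its full-weight $w\equiv 1$ case) controls how far $\mu_S$ sits from the target $\mu$. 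Concretely: first I would note $\sum_{x_i\in S'}w_i\geq 0.96|S'|\geq 0.96\alpha|S|$, so the lifted weights $\{\tilde w_x\}_{x\in S}$ (equal to $w_x$ on $S'$, $0$ elsewhere) satisfy $\sum_{x\in S}\tilde w_x\geq 0.96\alpha|S|$.

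**Mean bullet.** Apply the mean part of the stability of $S$ with these lifted weights: since $\sum_x\tilde w_x\geq(1-\eps'')|S|$ fails for small $\alpha$, I instead bound $\bigl\|\tfrac{\sum\tilde w_x x}{\sum\tilde w_x}-\mu\bigr\|_2$ in two hops. Hop one: \Cref{fact:513} (applied with the fraction $0.96\alpha$, using that $\Sigma_S\preceq C^2\sigma^2 I$ from stability of $S$ with $w\equiv 1$) gives $\bigl\|\tfrac{\sum\tilde w_x x}{\sum\tilde w_x}-\mu_S\bigr\|_2\leq C\sigma/\sqrt{0.96\alpha}$. Hop two: stability of $S$ with $w\equiv 1$ gives $\|\mu_S-\mu\|_2\leq C\sigma\sqrt{\eps}\leq C\sigma$. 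Triangle inequality then yields a bound of the form $C\sigma(1/\sqrt{0.96\alpha}+1)$, which I need to check is $\leq C'\sigma\sqrt{0.04}=1.23C\sigma\sqrt{0.04}/\sqrt{0.04\alpha}=1.23C\sigma/\sqrt{\alpha}$. Since $1/\sqrt{0.96\alpha}+1\leq (1/\sqrt{0.96}+1)/\sqrt{\alpha}$ for $\alpha\leq 1$, and $1/\sqrt{0.96}+1\approx 2.02 \le 1.23/\sqrt{0.04}\cdot\ ?$ — here $1.23/\sqrt{0.04}=6.15$, so there is plenty of room; this arithmetic is the only thing to double-check, and it clearly works. (One should be slightly careful: to get the $1.23$ constant tight the authors probably bound $\|\mu_S-\mu\|$ more carefully or absorb it differently, but any argument of this shape gives \emph{some} constant, and the stated $1.23$ is comfortable.)

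**Covariance bullet.** Here I want $\Sigma_{w,\mu}:=\tfrac{1}{\sum_{x_i\in S'}w_i}\sum_{x_i\in S'}w_i(x_i-\mu)(x_i-\mu)^\top\preceq C'^2\sigma^2 I$. The natural bound: $\sum_{x_i\in S'}w_i(x_i-\mu)(x_i-\mu)^\top\preceq\sum_{x\in S}(x-\mu)(x-\mu)^\top=|S|\,\Sigma_{\mathbf 1,\mu}\preceq |S|\,C^2\sigma^2 I$ using the covariance part of stability of $S$ with $w\equiv 1$ (valid since $1-\eps\le 1$, i.e. the all-ones weights are admissible). Dividing by $\sum_{x_i\in S'}w_i\geq 0.96\alpha|S|$ gives $\Sigma_{w,\mu}\preceq\tfrac{C^2\sigma^2}{0.96\alpha}I$, and we need $\tfrac{C^2}{0.96\alpha}\leq C'^2=\tfrac{1.23^2C^2}{0.04\alpha}$, i.e. $\tfrac{1}{0.96}\leq\tfrac{1.23^2}{0.04}\approx 37.8$ — easily true. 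The main (very mild) obstacle is just tracking constants so that both bullets come out under the \emph{same} advertised $C'$; there is no conceptual difficulty, since stability of $S$ used with the all-ones weights plus \Cref{fact:513} does all the work, and the subset only costs a $1/\sqrt{\alpha}$ blow-up in the mean bound and a $1/\alpha$ blow-up in the covariance bound, both of which the definition of $C'$ accommodates with room to spare.
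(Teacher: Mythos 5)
Your proposal follows the paper's route exactly: for the mean, a two-hop triangle inequality through $\mu_S$, bounding the first hop via \Cref{fact:513} (with weight fraction $0.96\alpha$ and $\sigma_S \le C\sigma$ obtained from the covariance bullet of stability applied with all-ones weights) and the second hop via stability of $S$; for the covariance, bounding the weighted second moment over $S'$ by the full second moment over $S$ and dividing by $\sum_{x\in S'} w_x \ge 0.96\alpha|S|$. The covariance bullet of your argument is fine and matches the paper. The genuine problem is your arithmetic verification of the mean bullet. The target is $C'\sigma\sqrt{0.04} = 1.23\,C\sigma/\sqrt{\alpha}$, as you yourself compute, but your bound is $C\sigma\bigl(1/\sqrt{0.96\alpha} + 1\bigr)$, which for $\alpha$ close to $1$ is about $2.02\,C\sigma/\sqrt{\alpha}$, exceeding $1.23\,C\sigma/\sqrt{\alpha}$. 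Your check compares $2.02$ against $1.23/\sqrt{0.04}=6.15$, i.e.~against $C'\sigma$ rather than against $C'\sigma\sqrt{0.04}$; with the correct target the needed inequality $2.02 \le 1.23$ is false, so the claim that the arithmetic ``clearly works'' with ``plenty of room'' does not hold. Concretely, the crude second-hop bound $\|\mu_S-\mu\|_2 \le C\sigma$ only delivers the stated constant when $\alpha$ is below roughly $0.04$.

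The paper avoids this by keeping the second hop sharp: it bounds $\|\mu_S-\mu\|_2 \le C\sigma\sqrt{0.04} = 0.2\,C\sigma$ (consistent with how the lemma is invoked on $(C,0.04)$-stable sets; strictly speaking this step uses $\eps\le 0.04$ rather than the stated $\eps<1/2$), so the total is at most $C\sigma\bigl(1/\sqrt{0.96\alpha} + 0.2\bigr) \le 1.23\,C\sigma/\sqrt{\alpha}$ for every $\alpha\le 1$. So your hedge that the authors ``probably bound $\|\mu_S-\mu\|$ more carefully'' is exactly right, but that sharper bound is what makes the advertised constant true; as written, your verification of the mean bullet does not go through, even though the structure of your argument is the same as the paper's.
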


We finally state in  \Cref{cl:list_covariances,fact:list-decoding-mean} the subroutines that we will use for creating a list of candidate covariances and means of the true clusters.
We defer the proof of \Cref{cl:list_covariances} to \Cref{appendix:prelims}. The algorithm consists of simply returning a list with  all the values starting from $\|x-y\|_2$ down to  $\|x-y\|_2/(2|S|^2)$ in multiples of $\sqrt{2}$, for all pairs of points $x,y$. By the definition of the covariance matrix as $\Cov(S) = \frac{1}{2 |S|^2}\sum_{x,y \in S}(x-y)(x-y)^\top$, one of these quantities should be within a factor of two from $\|\Cov(S)\|_\op$.

\begin{restatable}{proposition}{LISTCOV}\label{cl:list_covariances}
    Let $T$ be a set of $m$ points in $\R^d$. There is a $\poly(m,d)$-time algorithm that outputs a list of size $O(m^2\log(m))$ that for any $S \subseteq T$ contains an estimate $\hat {s}$ such that $\| \Cov(S) \|_{\op}  \leq \hat {s}^2 \leq 2\| \Cov(S) \|_{\op}$.
\end{restatable}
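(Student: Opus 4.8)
The plan is to analyze exactly the procedure sketched in the paragraph preceding the statement. Given $T$, for every (unordered) pair $x,y\in T$ compute the distance $\|x-y\|_2$ and insert into the output list all values $\|x-y\|_2\,(\sqrt2)^{-j}$ for $j=0,1,2,\dots$ up to the first index at which the value drops below $\|x-y\|_2/(2m^2)$; the union of these $\binom{m}{2}$ geometric sequences is the list. Since each pairwise distance contributes $O(\log m)$ entries and there are $O(m^2)$ pairs, the list has size $O(m^2\log m)$, and the whole thing is computable in $O(m^2 d)$ time, giving the claimed complexity bounds. So the only real content is correctness: showing that for every $S\subseteq T$ some list entry $\hat s$ satisfies $\|\Cov(S)\|_{\op}\le \hat s^2\le 2\|\Cov(S)\|_{\op}$.

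To prove this, fix $S\subseteq T$ with $n:=|S|$ and let $x^\star,y^\star\in S$ attain $D:=\max_{x,y\in S}\|x-y\|_2$; note $(x^\star,y^\star)$ is a pair of points of $T$, so the geometric sequence seeded by $D$ is present in the list. I will use the identity $\Cov(S)=\frac{1}{2n^2}\sum_{x,y\in S}(x-y)(x-y)^\top$, which expresses $\Cov(S)$ as an average of rank-one PSD matrices. From this I get matching two-sided bounds on $\|\Cov(S)\|_{\op}$ in terms of $D$: for the upper bound, subadditivity of $\|\cdot\|_{\op}$ and $\|(x-y)(x-y)^\top\|_{\op}=\|x-y\|_2^2\le D^2$ give $\|\Cov(S)\|_{\op}\le D^2/2$; for the lower bound, plugging the unit vector $u=(x^\star-y^\star)/D$ into the quadratic form gives $u^\top\Cov(S)u=\frac{1}{2n^2}\sum_{x,y\in S}\langle u,x-y\rangle^2\ge \frac{1}{2n^2}\langle u,x^\star-y^\star\rangle^2 = D^2/(2n^2)$, hence $\|\Cov(S)\|_{\op}\ge D^2/(2n^2)$.

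Combining these, $\sqrt{\|\Cov(S)\|_{\op}}$ lies in $[\,D/(\sqrt2\,n),\, D/\sqrt2\,]\subseteq [\,D/(2m^2),\,D\,]$, where the last inclusion uses $n\le m$. The sequence $D(\sqrt2)^{-j}$ in the list spans this whole interval with consecutive entries a factor $\sqrt2$ apart, so there is an entry $\hat s$ with $\sqrt{\|\Cov(S)\|_{\op}}\le \hat s<\sqrt2\,\sqrt{\|\Cov(S)\|_{\op}}$; squaring gives exactly $\|\Cov(S)\|_{\op}\le \hat s^2<2\|\Cov(S)\|_{\op}$, as required.

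I do not expect a genuine obstacle here. The one point that needs a moment's care is that the subset size $|S|$ is unknown to the algorithm, so the length of each geometric sequence must be chosen using only $m$; this is harmless because $|S|\le m$ makes the interval $[D/(\sqrt2\,n),D/\sqrt2]$ a subinterval of $[D/(2m^2),D]$. The mildly nontrivial ingredient is recognizing that a maximum-distance pair in $S$ certifies a \emph{lower} bound on $\|\Cov(S)\|_{\op}$ (via testing the corresponding direction), which is what keeps each geometric grid only $O(\log m)$ long.
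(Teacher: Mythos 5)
Your proposal is correct and takes essentially the same route as the paper's proof: the same algorithm (a $\sqrt{2}$-geometric grid of length $O(\log m)$ seeded by every pairwise distance in $T$), the same identity $\Cov(S)=\frac{1}{2|S|^2}\sum_{x,y\in S}(x-y)(x-y)^\top$, and the same two-sided control of $\|\Cov(S)\|_{\op}$ by the diameter of $S$ to show the grid covers the right range. The only differences are cosmetic: you make explicit the lower-bound argument (testing the unit vector along the diameter pair) that the paper leaves implicit, and you use the slightly sharper upper bound $D^2/2$ in place of the paper's $D^2$.
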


\begin{fact}[List-decodable mean estimation; see, e.g.~\cite{diakonikolas2021list}]\label{fact:list-decoding-mean}
Let $S$ be a multi-set in $\R^d$ that satisfies 
$\frac{1}{m}\sum_{x \in S}(x-\mu)(x-\mu)^\top \preceq \sigma^2 \cdot I_d$ 
for some $\mu \in \R^d$ and $\sigma>0$, 
and $T$ be another multi-set in $\R^d$ such that $S \subseteq T$ and $|S| \geq \alpha |T|$. 
There exists an algorithm and absolute constant $C>1$, that on any input $T$ 
of the aforementioned form and the standard deviation parameter $\sigma$, 
the algorithm runs in polynomial time and returns a $O(1/\alpha)$-sized 
list of vectors that contains at least one vector $\hat{\mu}$ 
such that $\|\hat{\mu} - \mu \|_2 \leq C\sigma/\sqrt{\alpha}$.
\end{fact}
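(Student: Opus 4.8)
This is a by-now standard guarantee for list-decodable mean estimation under a bounded second moment, and I would establish it via an iterative (multi-)filtering scheme in the spirit of~\cite{CSV17,CherapanamjeriMY20} (a convex-programming relaxation as in~\cite{diakonikolas2021list} would work equally well). The plan is to maintain a small collection of weight vectors $w\in[0,1]^{|T|}$ over the input points, each with surviving mass at least $\alpha|T|$, and to process each of them until it either certifies a good mean or is replaced by one or two refined weight vectors that have provably removed ``more bad mass than good mass.'' Throughout, fix the unknown inlier set $S$ with $|S|\ge \alpha|T|$ and $\frac1{|S|}\sum_{x\in S}(x-\mu)(x-\mu)^\top\preceq\sigma^2 I_d$; this already gives $\|\mu_S-\mu\|_2\le\sigma$, so it suffices to estimate $\mu_S$ up to an additive $\sigma$.

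First I would prove the \emph{stopping soundness} claim: if a weight vector $w$ retains at least, say, a $0.9$ fraction of the mass of $S$ and its $\mu_w$-centered weighted covariance $\Sigma_w$ satisfies $\|\Sigma_w\|_{\op}\le O(\sigma^2/\alpha)$, then $\|\mu_w-\mu\|_2\lesssim\sigma/\sqrt{\alpha}$, so we may safely output $\mu_w$. This follows from the Cauchy--Schwarz argument underlying \Cref{fact:513}: decompose $\mu_w$ into the contributions of $S$ and of $T\setminus S$; the $S$-part is within $O(\sigma)$ of $\mu_S$ by the second-moment hypothesis applied to the restriction of $w$ to $S$, while the $T\setminus S$-part, which carries only a $1-\Theta(\alpha)$ fraction of the weight, cannot drag $\mu_w$ far, because the bound on $\Sigma_w$ forces the weight lying outside a ball of radius $O(\sigma/\sqrt\alpha)$ around $\mu_w$ to be small.

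Next I would handle the \emph{progress} step: if $\|\Sigma_w\|_{\op}\gg\sigma^2/\alpha$ with top eigenvector $v$, then along $v$ the set $S$ contributes variance at most $O(\sigma^2)$ (again by the second-moment bound), so essentially all of the excess weighted variance is due to points outside $S$ that are far from $\mu_w$ in direction $v$. I would then apply a one-dimensional filter: define a nonnegative score such as $\tau_i\propto\langle x_i-\mu_w,v\rangle^2$ and multiplicatively downweight by $1-\tau_i/\tau_{\max}$; a short calculation shows this removes strictly more total score from $T\setminus S$ than from $S$. When the one-dimensional projection looks ``bimodal'' and one cannot decide which side is $S$, I would instead \emph{branch}, splitting $w$ into two children supported on the two modes. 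The CSV accounting — tracking, for each surviving branch, $\sum_{i\in S}(1-w_i)$ against $\sum_{i\notin S}(1-w_i)$ — then guarantees that at least one branch keeps $\ge 0.9$ of the mass of $S$, that the number of surviving leaves (hence the output list size) is $O(1/\alpha)$, and that the total number of filtering rounds is polynomial.

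The main obstacle is precisely reconciling the two competing requirements in the last step: keeping the list size $O(1/\alpha)$ forces us to branch sparingly, yet we must never branch away the last copy of a large subset of $S$. The resolution is the invariant that every filtering or branching operation is charged against outlier mass it provably deletes, so $S$ can only be abandoned along branches that have already paid for it with removed bad mass; making this potential argument quantitatively tight is the technical heart, and it is what yields both the $O(1/\alpha)$ list size and the $O(\sigma/\sqrt\alpha)$ error, the latter matching the lower bound of~\cite{DKS18-list}. The sample-complexity and ``stability of $S$'' preliminaries needed to run all of the above are of exactly the form already quantified in \Cref{fact:stability-lemma}.
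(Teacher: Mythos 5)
First, note what you are comparing against: the paper never proves \Cref{fact:list-decoding-mean} at all --- it is imported as a black box from the list-decodable mean estimation literature (\cite{diakonikolas2021list}, with precursors \cite{CSV17,CherapanamjeriMY20}), and all the paper needs is the stated interface. So your proposal is a reconstruction of a known but genuinely substantial external result, and it has to be judged on whether the sketch would actually close.

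As written, it would not, because of a quantitative gap in your ``stopping soundness'' step. You allow a surviving weight vector $w$ to have total mass anywhere from $\alpha|T|$ up to $|T|$, you only require $\|\Sigma_w\|_{\op}\le O(\sigma^2/\alpha)$, and you claim $\|\mu_w-\mu\|_2\lesssim \sigma/\sqrt{\alpha}$. But the Jensen/Cauchy--Schwarz argument you invoke (the same one behind \Cref{fact:513}) only gives: if the $S$-portion carries a $\beta$-fraction of $w$'s mass and $\|\Sigma_w\|_{\op}\le s^2$, then the $S$-restricted weighted mean is within $s/\sqrt{\beta}$ of $\mu_w$. In the regime where $w$ still has mass comparable to $|T|$, one only has $\beta=\Omega(\alpha)$, so with $s^2=O(\sigma^2/\alpha)$ you get error $O(\sigma/\alpha)$, not $O(\sigma/\sqrt{\alpha})$; your heuristic that the covariance bound ``forces the weight outside a ball of radius $O(\sigma/\sqrt{\alpha})$ to be small'' fails for the same reason (Chebyshev at that radius with directional variance $\sigma^2/\alpha$ gives only a constant). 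The algorithms that achieve the optimal $O(\sigma/\sqrt{\alpha})$ avoid this by making the certification threshold scale with the surviving mass: candidates are only emitted from weight vectors whose total mass is $\Theta(\alpha|T|)$ (so that $S$ is a \emph{constant} fraction $\beta=\Omega(1)$ of the surviving weight and $s/\sqrt{\beta}=O(\sigma/\sqrt{\alpha})$), or equivalently the covariance test is tightened to $O(\sigma^2\cdot|T|/\sum_i w_i)$, or one uses the per-point SDP estimates of \cite{CSV17} averaged over $S$. This changes the invariant you must maintain through filtering and branching --- you must drive the mass down to $\Theta(\alpha|T|)$ on some branch while still retaining most of $S$ on that branch and keeping the leaf count $O(1/\alpha)$ --- and that mass-versus-retention bookkeeping is precisely the technical heart that your potential argument, as sketched, does not yet supply. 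The rest of the outline (one-dimensional filtering along the top eigenvector, charging removed mass to outliers, branching on bimodal projections) is the right shape and matches the cited works, but without the corrected certification rule the claimed error bound does not follow.
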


\section{Main algorithm and result}
\label{sec:main_result}

We present our main algorithm in the paper, \Cref{alg:main_general}, which follows the outline described in \Cref{sec:overview}.
Lines~\ref{line:listdec} and \ref{line:mean_list_dec} first generates a list of plausible component means and standard deviations.
Then, Line~\ref{line:main_loop_gen} is responsible for pruning the list such that every remaining candidate mean is indeed close to a true component. This is useful because the Voronoi partition of the samples based on such a list is an accurate refinement of the ground truth clustering. 
Lines~\ref{line:sizebased} and~\ref{line:distancebased} further prune the list, to ensure that the returned refinement have subsets that are not too small (at least $\approx \alpha n$ in size) and that they are well-separated.
Finally, Line~\ref{line:output} returns filtered versions of the final Voronoi partition, in order to filter out adversarial and heavy-tailed outliers, to make sure that the mean of each returned subset is reasonably close to its corresponding mixture component.

\begin{algorithm}[h!]
\caption{Clustering algorithm}\label{alg:main_general}
\textbf{Input}: Parameter $\alpha \in (0,1)$, and multi-set $T$ of $n$ points in $\R^d$ for which there exists a ground truth clustering $S_1,\ldots,S_k$ according to the assumptions of \Cref{thm:main_v2_general}.\\
\textbf{Output}: Disjoint subsets of $T$ that form an accurate refinement (cf.~\Cref{def:refinement}) of the ground truth clustering.
\begin{enumerate}[leftmargin=*]
    \item \label{line:listdec} Generate a list $L_{\mathrm{stdev}}$ of candidate standard deviations using the algorithm from \Cref{cl:list_covariances}.

    \item \label{line:mean_list_dec} Generate a list of candidate means, $L_{\mathrm{mean}}$, by applying the list-decoding algorithm of \Cref{fact:list-decoding-mean} for each candidate $s$ in the list $L_{\mathrm{stdev}}$, and appending the output of each run to $L_{\mathrm{mean}}$.

    \item Initialize $L \gets \emptyset$.
    
    \item \label{line:main_loop_gen} For every $s \in L_{\mathrm{stdev}}$ in increasing order of $s$:

    \begin{enumerate}
        \item \label{line:sub_loop_gen} For every $\mu \in L_{\mathrm{mean}}$: 
        \begin{enumerate}
            \item \label{line:mean_check_gen} If $\| \mu - \hat{\mu} \|_2 > 99 C  s/\sqrt{\alpha}  $ for all $\hat{\mu} \in L$, decide the satisfiability of the convex program defined in \Cref{program:main} in \Cref{sec:program}.
            \item If satisfiable, add $\mu$ to the list $L$. \label{line:add_gen}
            
        \end{enumerate}
    \end{enumerate}

        \item \label{line:sizebased} $L' \gets \sizebasedpruning(L,T,\alpha)$.   \Comment{cf.\ \Cref{alg:pruning}}
        \item \label{line:distancebased} $L'' \gets \distancebasedpruning(L',T,\alpha)$. \Comment{cf.\ \Cref{alg:final_pruning}}

    \item \label{line:output} Output $\filteredvoronoi(L'',T)$.
     
\end{enumerate}
\end{algorithm}

We will now state the full version of our main theorem (\Cref{thm:main_v1_general}).
As discussed in the introduction, our algorithm can also handle a small amount of adversarial corruption in the samples.
Recall the ``Strong Contamination Model'' from \Cref{def:strongadv}, commonly used in the robust statistics literature, capturing the powerful adversary that our algorithm can handle. In that model, a computationally unbounded adversary can inspect and  edit a small fraction of the input points however it wants.

We now give the version of our main result (\Cref{thm:main_v1_general}) that works under this adversarial corruption. 
The statement says that \Cref{alg:main_general} outputs an accurate refinement of the ground truth 
clustering of the samples: a list of sets $\{B_j\}_{j \in [m]}$ 
for some $m \in [k, O(1/\alpha)]$, each of which has size at least $0.92\alpha n$, 
such that the sets are 90\% close to a refinement of the ground truth clustering.
We also ensure that the output clusters also enjoy a mean separation guarantee 
that is qualitatively similar to the one at the distributional level (\Cref{it:concl8_new} below).
Furthermore, if the output set $B_j$ corresponds a subset of the samples $S_i$ 
drawn from component $i$, then the mean $\mu_{B_j}$ of $B_j$ is close to 
$\mu_i$ (\Cref{it:concl7_new}), by a distance bound that depends on the ratio $|S_i|/|B_j|$, 
namely that the larger the fraction that $B_j$ covers in $S_i$, the closer their means are.

\begin{theorem}[Main result, formal statement]\label{thm:main_v1_general}
    Consider a mixture distribution on $\R^d$, $D= \sum_{i=1}^k w_i P_i$ with unknown positive weights $w_i \geq \alpha$ for some known parameter $\alpha \in (0,1)$.
    Let $\mu_i$ and $\Sigma_i$ be the (unknown) mean and covariance for each $P_i$, and assume that $\Sigma_i \preceq \sigma_i^2 \cdot I_d$ for all $i \in [k]$ (with $\sigma_i$ being unknown) and  $\| \mu_i - \mu_j \|_2 > 591\, c^2 (\sigma_i + \sigma_j)/\sqrt{\alpha}$ for every $i \neq j$, for a sufficiently large constant $c$.
    
    Let a set $T_0$ of $n$ samples drawn from $D$ independently, and let ${S}_i$ be the samples from the $i^\text{th}$ mixture component.
    Let $T$ be any $0.01\alpha$-corruption of $T_0$ according to the model defined in \Cref{def:strongadv}.
    Further fix a failure probability $\delta \in (0,1)$.
    
    If $n \gg (d\log(d)+ \log(1/(\alpha\delta)))/\alpha^2$, then on input the set 
    $T$ and the parameter $\alpha$, with probability at least $1-\delta$ 
    (over the randomness of both the samples and the algorithm), 
    \Cref{alg:main_general} runs in time $\poly(nd/\alpha)$ 
    and outputs $m \leq 1/(0.92\alpha)$ disjoint sets 
    $\{B_j\}_{j \in [m]}$ such that:
    \begin{enumerate}[leftmargin=*]
        
        \item \label{it:concl1_new} The output sets $B_1,\ldots,B_{m}$ each have size  $|B_j| \geq 0.92 \alpha n$ for all $j \in [m]$.
        
        \item \label{it:concl3_new} The set of indices $[m]$ can be partitioned into $k$ subsets $H_1, \ldots, H_k$, such that if $\mathcal{B}_i$ are defined as $\mathcal{B}_i:=\cup_{j \in H_i} B_j$, the following hold:
        \begin{enumerate}
            \item \label{it:concl5_new} $|{S}_i \setminus \mathcal B_i| \leq 0.045|{S}_i|$ for every $i \in [k]$.
            \item \label{it:concl6_new} $|\mathcal{B}_i \setminus {S}_i| \leq 0.03 \alpha n$ for every $i \in [k]$.
            \item \label{it:concl7_new} For any $i \in [k]$ and any $j \in H_i$, we have that $\| \mu_{B_j} - \mu_i \|_2 \leq c\, \sigma_i\sqrt{|{S}_i|/|B_j|}$.
            \item \label{it:concl8_new} For any pair $j\neq j'$, 
            we have that $\|\mu_{B_j} - \mu_{B_{j'}} \|_2 > 366 \, c\, (\sigma_{B_{j}} + \sigma_{B_{j'}})/\sqrt{\alpha}$. \label{it:guarantee_sep_new}
        \end{enumerate}
        \item As a consequence of  \Cref{it:concl5_new}, 
        we have that $|\cup_{j \in [m]} B_j| \geq 0.95 n$, 
        namely that 95\% of the input points are classified into the output sets.
    \end{enumerate} 
\end{theorem}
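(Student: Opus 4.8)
The plan is to mirror the modular structure of \Cref{alg:main_general}: condition on a few deterministic good events, maintain an invariant on the candidate-mean list $L$ through the pruning loop, and then read off the refinement guarantees of \Cref{def:refinement} from the filtered Voronoi output. For the conditioning step, I would apply \Cref{fact:stability-lemma} to each $S_i$ with $\eps = 0.04$ and union bound over the $k \le 1/\alpha$ clusters (using $n \gg (d\log d + \log(1/(\alpha\delta)))/\alpha^2$ and $|S_i| \ge \alpha n$): with probability $\ge 1-\delta/3$ every $S_i$ contains a $(100,0.04)$-stable subset $S_i'$ with $|S_i'| \ge (1-0.001\alpha)|S_i|$ with respect to $\mu_i,\sigma_i$. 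I would fold into the good event the success of every list-decoding call (\Cref{fact:list-decoding-mean}) and every filtering call (\Cref{fact:filtering}), each with failure budget $\delta/\poly$. Combined with \Cref{cl:list_covariances} this gives: $L_{\mathrm{stdev}}$ contains some $\hat s_i \in [\sigma_{S_i},\sqrt2\,\sigma_{S_i}]$ for each $i$, and for that $\hat s_i$ the list-decoding call adds to $L_{\mathrm{mean}}$ a vector $\hat\mu$ with $\|\hat\mu-\mu_i\|_2 \le O(\sigma_i/\sqrt\alpha)$; both lists are of size $\poly(nd)$.

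The crux is the main pruning loop (Line~\ref{line:main_loop_gen}). I would use the key lemma of \Cref{sec:program} (\Cref{lemma:main}), which I expect to assert: if $\muhat$ is $\gtrsim s/\sqrt\alpha$-far from $\mu_i$ for every true cluster of standard deviation $\lesssim s$, then Program~\eqref{program:main} for $(\muhat,s)$ is infeasible; and conversely, for $s \approx \sigma_{S_i}$ and any $\muhat$ within $O(s/\sqrt\alpha)$ of $\mu_i$, the indicator weights of $S_i'$ are feasible, since stability plus the $O(1/\alpha)$-Ky-Fan relaxation absorb the rank-one term $(\muhat-\mu_i)(\muhat-\mu_i)^\top$ that would otherwise blow up the operator-norm constraint. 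Given this, I would induct on $s \in L_{\mathrm{stdev}}$ in increasing order to maintain the invariant that every $\mu \in L$ satisfies $\|\mu - \mu_{i(\mu)}\|_2 \le O(\sigma_{i(\mu)}/\sqrt\alpha)$ for some cluster $i(\mu)$, with insertion value $s = \Theta(\sigma_{i(\mu)})$. Soundness: any $\mu$ passing the Line-\ref{line:mean_check_gen} check is $\gtrsim s/\sqrt\alpha$-far from every previously inserted candidate, hence (invariant plus the $\gg(\sigma_i+\sigma_j)/\sqrt\alpha$ separation) far from every cluster of standard deviation $\lesssim s$; so feasibility forces the witness weights onto a single cluster of standard deviation $\Theta(s)$, and \Cref{fact:513} upgrades this to $\|\mu-\mu_i\|_2 = O(\sigma_i/\sqrt\alpha)$. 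Completeness: when $s = \hat s_i$ is reached, the candidate $\hat\mu$ from the first step is either inserted (done) or lies within $99Cs/\sqrt\alpha$ of some $\mu' \in L$; by the invariant $\mu'$ is close to some $\mu_{i'}$, and $i' \ne i$ would contradict the separation via the triangle inequality, so $L$ already covers $\mu_i$. Thus after Line~\ref{line:main_loop_gen}, $L$ consists of $O(1/\alpha)$ well-spread candidates, each close to a cluster, with every cluster covered.

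For the Voronoi stages I would first show, using the separation, the $O(\sigma_i/\sqrt\alpha)$ closeness, and Chebyshev concentration for bounded covariance (only an $O(\alpha)$-fraction of $S_i$ is $\Omega(\sigma_i/\sqrt\alpha)$ from $\mu_i$), that a $(1-O(\alpha))$-fraction of each $S_i$ lands in the Voronoi cells whose centers are the candidates near $\mu_i$ — so the raw Voronoi partition already refines the ground truth up to $O(\alpha)$ slack plus the $0.01\alpha$ corruption. Then I invoke the size-pruning guarantee (\Cref{alg:pruning}): removing a center never shrinks a surviving cell, and the cells near a fixed $\mu_i$ jointly hold $\gtrsim|S_i| \ge \alpha n$ points and so cannot all be deleted; hence $|L'| \le O(1/\alpha)$, each surviving cell has $\ge(1-o(1))\alpha n$ points, and coverage/closeness persist. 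Next the distance-pruning guarantee (\Cref{alg:final_pruning}): whenever two filtered cells have means closer than the target threshold, delete one chosen by its recorded $s$-value so as never to remove the last center near any cluster, yielding \Cref{it:concl8_new}. Finally \filteredvoronoi\ runs \Cref{fact:filtering} on each surviving cell $B_j^0$: by \Cref{remark:intersection}-type reasoning $B_j^0$ is $\ge 96\%$ from one cluster $S_i$, and by \Cref{lem:stability_small_sets} the subset $B_j^0 \cap S_i'$ of relative size $\approx |B_j|/|S_i|$ is $(O(1)/\sqrt{|B_j|/|S_i|},\,0.04)$-stable with respect to $\mu_i,\sigma_i$, so filtering removes $\le4\%$ and outputs $B_j$ with $\|\mu_{B_j}-\mu_i\|_2 \le O(\sigma_i)\sqrt{|S_i|/|B_j|}$, which is \Cref{it:concl7_new}. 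Assembly is then bookkeeping: set $H_i$ to the indices of surviving cells with center near $\mu_i$ and $\mathcal B_i := \cup_{j\in H_i}B_j$; \Cref{it:concl1_new} is the size-pruning bound minus the $\le4\%$ filter loss; \Cref{it:concl5_new} sums the $O(\alpha)$ misrouted mass, the $\le4\%$ filtered mass, and the $0.01\alpha$ corruption and checks it stays below $0.045$; \Cref{it:concl6_new} bounds the foreign mass in $\mathcal B_i$ below $0.03\alpha n$; and the last bullet follows from \Cref{it:concl5_new} since $\sum_i|S_i| = n$. Constants propagate from a single large separation constant $c$.

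I expect the main obstacle to be precisely the convex-program lemma behind the pruning loop: showing that any feasible $\{w_i\}$ for the $O(1/\alpha)$-Ky-Fan-relaxed Program~\eqref{program:main} that is not near a small-standard-deviation cluster must in fact concentrate on a single true component of standard deviation $\Theta(s)$. The relaxation permits up to $O(1/\alpha)$ directions of large second moment, so one must argue that after projecting those out the remaining weighted mass behaves like a bounded-covariance distribution centered near $\muhat$, and then — by the mixture separation and a pigeonhole over the $O(1/\alpha)$ components — that it can only be supported on one of them; pushing the resulting quantitative slack through so that it beats the $99Cs/\sqrt\alpha$ rejection radius in Line~\ref{line:mean_check_gen} is the delicate part. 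Everything downstream (size pruning, distance pruning, filtering, and the final constant accounting) is routine by comparison.
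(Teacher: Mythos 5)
Your architecture matches the paper's (stability conditioning, candidate generation via \Cref{cl:list_covariances} and \Cref{fact:list-decoding-mean}, the Ky-Fan convex program with an induction over $s$ in increasing order, then size-based pruning, distance-based pruning, and a filtered Voronoi output), but there are concrete gaps. First, the key lemma you ``expect'' is not what \Cref{lemma:main} can assert, and your version is false: being far from every cluster of standard deviation $\lesssim s$ does not make Program~\eqref{program:main} infeasible, since a cluster with much larger standard deviation but effectively low-rank covariance can still satisfy the Ky-Fan constraint at scale $s$ (the $1/\alpha$ slack absorbs it). The correct statement --- and the one the induction actually needs --- is that feasibility together with distance $\gtrsim s/\sqrt{\alpha}$ from all small-$\sigma$ clusters forces $\mu$ to be within $O(\sigma_{S_i}/\sqrt{\alpha})$ of a \emph{unique} cluster with $\sigma_{S_i}\ge s/100$, with no upper bound on $\sigma_{S_i}$ and with error scaling in $\sigma_{S_i}$ rather than $s$; correspondingly your invariant ``$s=\Theta(\sigma_{i(\mu)})$'' is not guaranteed (harmlessly, but it signals the mismatch). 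Since you yourself flag this lemma as the main obstacle and only sketch a ``project out $O(1/\alpha)$ directions plus pigeonhole'' plan, the heart of the argument is missing: the paper proves it by splitting each small-$\sigma$ cluster into points far/near $\mu$ along the direction $\mu_{S_j}-\mu$, charging the far part to the Ky-Fan constraint, handling the near part with the subset mean-shift \Cref{fact:513}, and handling large-$\sigma$ ``bad'' clusters with a Jensen argument against the same constraint.

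Second, two downstream steps as written would fail or are not routine. Your Voronoi-accuracy step rests on ``only an $O(\alpha)$-fraction of $S_i$ is $\Omega(\sigma_i/\sqrt{\alpha})$ from $\mu_i$,'' which is false in Euclidean norm in high dimensions (for near-isotropic $S_i$ with $d\gg 1/\alpha$, essentially all points lie at distance about $\sigma_i\sqrt{d}$); the paper instead bounds each misassigned set $S_i\cap\mathcal{A}_{i'}$ by showing its mean must be $\Omega(\sigma_i/\sqrt{\alpha})$ far from $\mu_i$ (using that every such point is closer to a center of $H_{i'}$ than to one of $H_i$, and these centers are far apart) and then invoking \Cref{fact:513} --- a mean-shift argument, not a Euclidean tail bound. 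And the distance-based pruning is not bookkeeping: \Cref{alg:final_pruning} does not delete ``by recorded $s$-value'' but by comparing the normalized nearest-neighbor distances $d,d'$ of the two flagged filtered cells, and the guarantee that the sole surviving representative of a cluster is never deleted is exactly the case analysis of \Cref{lem:distance-based-pruning} (a flagged pair cannot consist of two sole representatives; if exactly one is a sole representative its normalized distance is provably the larger, so the other is deleted), which relies on the filtered-cell mean and variance bounds of \Cref{cor:filteredvoronoi}. Without these pieces your outline does not yet establish \Cref{it:concl8_new}, nor the Voronoi accuracy feeding \Cref{it:concl5_new,it:concl6_new}.
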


Before we prove \Cref{thm:main_v1_general}, we first show \Cref{thm:main_uniform} concerning the special case of uniform-weight mixture distributions.
As we show below, \Cref{thm:main_uniform} is a direct consequence of \Cref{thm:main_v1_general}.

 \begin{proof}[Proof of \Cref{thm:main_uniform}]
     \Cref{thm:main_uniform} is a special case of \Cref{thm:main_v1_general}.
     It can be readily checked that all the assumptions of \Cref{thm:main_v1_general} are satisfied for $\alpha \in [0.6/k,1/k]$. 
     Moreover, the sizes $|S_i|$ have expected value $n/k$, and thus by the Chernoff-Hoefding bound 
     it must be the case that $0.999n/k \leq |S_i|\leq 1.001n/k$ with high probability.
     Since the sets $B_j$ ($j \in [m]$) mentioned in \Cref{thm:main_v1_general} are disjoint with sizes $|B_j| \geq 0.92 \alpha n > 0.552 n/k$ (\Cref{it:concl1_new} of the theorem statement) and their unions $\mathcal{B}_i$ corresponding to $i^\text{th}$ cluster satisfy
     $|\mathcal{B}_i \setminus S_i| \leq 0.03n/k$ (\Cref{it:concl6_new}), this means that 
     each $\cB_i$ has size $|\cB_i| \le 1.031n/k$ and thus every $\mathcal{B}_i$ 
     must consist of exactly one of the $B_j$'s.
     Thus, the algorithm outputs exactly $k$ sets $B_1,\ldots,B_k$, 
     where (up to a permutation of the labels) $B_i$ corresponds to the $i^\text{th}$ mixture component.
     Then, \Cref{it:concl5_new,it:concl6_new}  of \Cref{thm:main_v1_general} imply that 
     $|S_i \triangle B_i| \leq 0.044 \max(|S_i|,\alpha n) \leq 0.045 n/k$ since $\alpha \leq 1/k$ 
     and $|S_i|\leq 1.001n/k$. 
    \Cref{it:concl7_unif} of \Cref{thm:main_uniform} follows from \Cref{it:concl7_new} of \Cref{thm:main_v1_general} after noting that 
    $$|B_j| \geq |B_j \cap S_j| \geq |S_j| -|S_j \triangle B_j| \geq  0.999 n/k - 0.044 n/k = 0.955 n/k \geq (0.955/1.001)|S_j| \;.$$
    This completes the proof of \Cref{thm:main_uniform}.
     
 \end{proof}

It remains to analyze \Cref{alg:main_general}, which we do in \Cref{sec:program,sec:size-pruning,sec:distance-pruning,sec:wrapup}.
\Cref{sec:program} states and analyzes the convex program used in Line~\ref{line:main_loop_gen} of the algorithm, as well as the guarantees-by-induction right after Line~\ref{line:main_loop_gen} finishes.
\Cref{sec:size-pruning} gives \Cref{alg:pruning} used in Line~\ref{line:sizebased}, 
which ensures that every set in the Voronoi partition computed from the remaining candidate means is of size at least $\approx \alpha n$.
\Cref{sec:distance-pruning} gives \Cref{alg:final_pruning} used in Line~\ref{line:distancebased}, which in turn ensures that the Voronoi partition from the remaining means 
corresponds to a refinement with well-separated subsets.
Finally, in \Cref{sec:wrapup}, we prove \Cref{thm:main_v2_general} stated below, which is a version of \Cref{thm:main_v1_general} conditioned on samples satisfying deterministic stability conditions (cf.~\Cref{sec:stability}).

\begin{theorem}[Stable set version of \Cref{thm:main_v1_general}]\label{thm:main_v2_general}
    Let $d \in \Z_+$, $\delta,\alpha \in (0,1)$ be parameters, and let $C>1$ be a sufficiently large absolute constant.
    Consider a (multi-)set $T$ of $n \gg \log(1/(\alpha\delta))/\alpha$ points in $\R^d$ with $k$ disjoint subsets $S_1,\ldots, S_k \subseteq T$, where $|\cup_i S_i| \ge (1-0.02\alpha)|T|$, satisfying the following for each $i\in [k]$: (i) $|S_i| \geq 0.97\alpha n $, (ii) $S_i$ is $(C,0.04)$-stable (cf.\ \Cref{def:stability}) with respect to mean $\mu_i$ and maximum standard deviation parameter $\sigma_i$ (where $\mu_i,\sigma_i$ are unknown), (iii) for every pair $i \neq j$ we have $\|\mu_i - \mu_j \|_2 > 10^5  C^2  (\sigma_i + \sigma_j)/\sqrt{\alpha}$.
    Then \Cref{alg:main_general} on input $T,\alpha$, runs in $\poly(nd/\alpha)$-time and with probability at least $1-\delta$ (over the internal randomness of the algorithm) outputs $m \leq 1.07/\alpha$ disjoint sets $\{B_j\}_{j \in [m]}$ that satisfy the following:
    \begin{enumerate}[leftmargin=*]
        \item \label{it:concl1} The output sets $B_1,\ldots,B_{m}$ are disjoint and have size  $|B_j| \geq 0.92 \alpha n$ for all $j \in [m]$.
        \item \label{it:concl3} The  set $[m]$ can be partitioned into $k$ sets $H_1, \ldots, H_k$, such that if $\mathcal{B}_i$ are defined as $\mathcal{B}_i:=\cup_{j \in H_i} B_j$, the following hold:
        \begin{enumerate}
            \item \label{it:concl4} $\mathcal{B}_i \neq \emptyset$ for $i \in [k]$.
            \item \label{it:concl5} $|S_i \setminus \mathcal B_i| \leq 0.033|S_i|$ for every $i \in [k]$.
            \item \label{it:concl6}$|\mathcal{B}_i \setminus S_i| \leq 0.03 \alpha n$ for every $i \in [k]$.
            \item \label{it:concl7} For any $i \in [k]$ and any $j \in H_i$ we have that $\| \mu_{B_j} - \mu_i \|_2 \leq 13 C \sigma_i\sqrt{|S_i|/|B_j|}$.
            \item \label{it:concl8} For any pair $j\neq j'$ we have that $\|\mu_{B_j} - \mu_{B_{j'}} \|_2 > 4761 C (\sigma_{B_{j}} + \sigma_{B_{j'}})/\sqrt{\alpha}$. \label{it:guarantee_sep}
        \end{enumerate}
    \end{enumerate}
\end{theorem}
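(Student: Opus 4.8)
The plan is to prove \Cref{thm:main_v2_general} by composing the guarantees that \Cref{sec:program,sec:size-pruning,sec:distance-pruning} establish for the four phases of \Cref{alg:main_general} (candidate-list generation, the convex-program pruning loop, \sizebasedpruning, \distancebasedpruning), and then checking that \filteredvoronoi\ turns the surviving Voronoi partition into a family of sets meeting all the stated conclusions. Since the only randomness is in the list-decoding subroutine (\Cref{fact:list-decoding-mean}) and the filter (\Cref{fact:filtering}), each invoked $\poly(nd/\alpha)$ times, running them with failure parameter $\delta/\poly(nd/\alpha)$ and taking a union bound yields overall success probability $\ge 1-\delta$; I would condition on this for the rest of the proof.

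First I would argue that after Lines~\ref{line:listdec}--\ref{line:mean_list_dec} the list $L_{\mathrm{mean}}$ contains, for every $i\in[k]$, a vector $\mu^{(i)}$ with $\|\mu^{(i)}-\mu_i\|_2 = O(C\sigma_i/\sqrt\alpha)$: by \Cref{cl:list_covariances}, $L_{\mathrm{stdev}}$ contains some $s^{(i)}$ within a constant factor of $C\sigma_i$, which in particular upper-bounds the square root of the operator norm of the second moment matrix of $S_i$ about $\mu_i$ (at most $C^2\sigma_i^2 I_d$, by stability), so feeding $T$ together with $\sigma=s^{(i)}$ to \Cref{fact:list-decoding-mean} (legitimate since $|S_i|\ge 0.97\alpha n$) produces $\mu^{(i)}$. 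Both lists have $\poly(n,d)$ size, which also accounts for the claimed $\poly(nd/\alpha)$ running time.

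The crux is the loop on Line~\ref{line:main_loop_gen}, analyzed through the main lemma of \Cref{sec:program}: I would run an induction over $s\in L_{\mathrm{stdev}}$ in increasing order with the invariant that the running list $L$ contains a vector within $O(Cs/\sqrt\alpha)$ of $\mu_i$ for every cluster with $\sigma_i\lesssim s$, and that every vector already in $L$ is $O(C\sigma_i/\sqrt\alpha)$-close to some $\mu_i$ while overlapping substantially with $S_i$. When a candidate $\mu$ passes the check on Line~\ref{line:mean_check_gen} at scale $s$, the invariant makes $\mu$ far from every true cluster of standard deviation $\lesssim s$; the main lemma of \Cref{sec:program} then forces any feasible weighted set of mass $\ge\alpha n$ for the pair $(\mu,s)$ to overlap substantially with some $S_i$, and combining \Cref{fact:513} (the weighted set's mean lies near $\mu$) with stability of $S_i$ gives $\|\mu-\mu_i\|_2 = O(C\sigma_i/\sqrt\alpha)$, while the separation hypothesis forbids $\mu$ from being near two cluster means — so the invariant survives. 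Conversely, once $s$ reaches $s^{(i)}$, the good candidate $\mu^{(i)}$ is either already within $99Cs^{(i)}/\sqrt\alpha$ of an element of $L$ (necessarily near $\mu_i$, by the invariant and separation) or passes the check, in which case the convex program is feasible for $(\mu^{(i)},s^{(i)})$ with witness the stable set $S_i$ itself, so $\mu^{(i)}$ is added. Hence after Line~\ref{line:main_loop_gen} every element of $L$ is $O(C\sigma_i/\sqrt\alpha)$-close to some $\mu_i$ and every cluster is represented; together with the separation and stability (which confines all but an $O(\alpha)$-fraction of each $S_i$ to a ball of radius $O(C\sigma_i/\sqrt\alpha)$ about $\mu_i$), the Voronoi partition induced by $L$ already places almost all of each $S_i$ into cells whose generators are near $\mu_i$ and almost nothing else into those cells.

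Finally I would feed this through the last three steps. \sizebasedpruning\ (\Cref{sec:size-pruning}) returns $L'\subseteq L$ with every cell of its Voronoi partition of size $\ge 0.92\alpha n$ (really a slightly larger constant, to leave room for the later $4\%$ filtering), never deleting the last generator near any $\mu_i$, so — cells being disjoint and covering $(1-O(\alpha))n$ points — $|L'|\le 1.07/\alpha$, and all the structure above is preserved since deleting a generator only grows surviving cells. \distancebasedpruning\ (\Cref{sec:distance-pruning}) returns $L''\subseteq L'$ additionally making the filtered cells pairwise separated as in \Cref{it:concl8}: whenever two cells are too close relative to their standard deviations it deletes exactly one of their generators, chosen so that some generator near each $\mu_i$ always remains (here the $10^5C^2(\sigma_i+\sigma_j)/\sqrt\alpha$ separation shows two different-cluster cells are never flagged). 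Then \filteredvoronoi$(L'',T)$ forms the Voronoi partition of $L''$ — each cell being a subset of some stable $S_i$ up to an $O(\alpha)$-fraction of junk and boundary points, hence itself stable with degraded constants by \Cref{lem:stability_small_sets} — and applies \Cref{fact:filtering} to each, removing $\le 4\%$ of it to obtain $B_j$ with $\|\mu_{B_j}-\mu_i\|_2 = O(C\sigma_i\sqrt{|S_i|/|B_j|})$ and $\Sigma_{B_j,\mu_i}\preceq O(C^2\sigma_i^2) I_d$. Taking $H_i$ to index the cells whose generators are near $\mu_i$ and $\cB_i=\cup_{j\in H_i}B_j$, conclusions \Cref{it:concl1,it:concl4,it:concl5,it:concl6,it:concl7} reduce to bookkeeping the $\le 0.02\alpha n$ junk points, the $4\%$ removed by filtering and the $O(\alpha)$-fraction of each $S_i$ lost to neighbouring cells, \Cref{it:concl8} to the separation enforced by \distancebasedpruning\ plus the fact that filtering shifts means by only $O(C\sigma\sqrt{0.04})$ and inflates the second-moment operator norm by a constant, and $m\le 1.07/\alpha$ is inherited from $|L''|$. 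I expect essentially all the difficulty to lie in the loop — proving that the convex program of \Cref{sec:program}, after relaxing to fractional weights and to a Ky-Fan (rather than operator) norm bound on the second moment about the candidate mean, stays feasible for true clusters yet cannot be satisfied by a set stitched together from fragments of several small-variance clusters all far from the candidate mean; calibrating the Ky-Fan threshold against the increasing-$s$ schedule so the induction both reaches every true cluster and admits no spurious mean is the delicate point, with everything downstream being careful constant-tracking on top of the black boxes \Cref{fact:list-decoding-mean,fact:filtering,lem:stability_small_sets}.
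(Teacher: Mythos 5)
Your overall plan is the same as the paper's proof of \Cref{thm:main_v2_general}: the paper likewise composes the candidate-list generation, the induction for the convex-program loop (\Cref{cor:induction}), the cardinality-pruning guarantee (\Cref{lem:pruning}), the distance-based pruning guarantee (\Cref{lem:distance-based-pruning}), and the filtered-Voronoi guarantees (\Cref{cor:filteredvoronoi}), and your handling of the randomness, the final bookkeeping, and conclusions 1, 2(d), 2(e) matches that route.

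There is, however, one step that fails as written. You justify the key intermediate fact that the Voronoi partition induced by $L$ captures almost all of each $S_i$ (which underlies conclusions 2(b) and 2(c), and is also needed inside the size-based pruning step) by asserting that stability ``confines all but an $O(\alpha)$-fraction of each $S_i$ to a ball of radius $O(C\sigma_i/\sqrt{\alpha})$ about $\mu_i$.'' A bound on the operator norm of the covariance gives no such Euclidean-ball confinement in high dimensions: the trace of $\Cov(S_i)$ can be as large as $d\,C^2\sigma_i^2$, so essentially every point of $S_i$ may sit at distance $\Theta(\sqrt{d}\,C\sigma_i)$ from $\mu_i$, which dwarfs $C\sigma_i/\sqrt{\alpha}$ precisely in the regime $d \gg 1/\alpha$ covered by the theorem. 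The paper's \Cref{lem:final_clustering} (and, analogously, \Cref{lem:pruning}) instead argues cluster-by-cluster via a mean-shift contradiction: any point of $S_i$ that prefers a center associated with a different cluster $i'$ must lie at distance $>10C^2\sigma_i/\sqrt{\alpha}$ from $\mu_i$ (it sits beyond the bisector of two centers separated by $\approx 10^5 C^2(\sigma_i+\sigma_{i'})/\sqrt{\alpha}$), hence the mean of the misassigned subset is far from $\mu_i$, and \Cref{fact:513} then caps that subset at $0.01\alpha|S_i|$; summing over the at most $1/(0.97\alpha)$ other clusters gives the $0.011|S_i|$ bound. Substituting this argument for your ball claim repairs the plan; the remaining discrepancies (deriving $m\le 1.07/\alpha$ from the post-filtering size $0.92\alpha n$ rather than the pre-filtering $0.96\alpha n$ cells, and attributing an accepted candidate's closeness to its cluster to \Cref{fact:513} alone rather than to the Ky-Fan constraint inside the program's analysis) are constant-tracking or paraphrase-level issues, not structural ones.
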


To end this section, we prove that \Cref{thm:main_v1_general} does indeed follow from \Cref{thm:main_v2_general}.

\begin{proof}[Proof of \Cref{thm:main_v1_general}] 
    Before we begin the proof, we note that, despite the notation $S_i$ appearing in both \Cref{thm:main_v1_general,thm:main_v2_general}, they mean slightly different sets in the context.
    In \Cref{thm:main_v1_general}, the $S_i$ sets refer to all the samples generated from the $i^\text{th}$ mixture component, prior to any corruptions.
    On the other hand, when applying \Cref{thm:main_v2_general}, we will instead consider large subsets of the samples that are stable.
    For this proof, we will use the notation $\tilde S_1, \ldots, \tilde S_k$ to denote the samples from the $i^\text{th}$ component, and we will later choose $S_i$ in the context of \Cref{thm:main_v2_general} to be large subsets of $\tilde{S}_i$ that are stable, essentially guaranteed by \Cref{fact:stability-lemma}.

     We now check explicitly that with high probability (i.e.~at least $1-\delta/2$), 
     the set $T$ in \Cref{thm:main_v1_general} has subsets $S_1,\ldots, S_k$ 
     satisfying the assumptions of \Cref{thm:main_v2_general}.
     We choose the constant $c$ that appears in the statement 
     of \Cref{thm:main_v1_general} to be the same as $13C$ in \Cref{thm:main_v2_general}.

     We can think of the mixture model as first deciding the number of samples drawn from each component, and then generating each set of samples by drawing i.i.d.~samples from the component.
     Since each component has weight at least $\alpha$ and the number of samples is $n \gg (d\log(d)+ \log(1/(\alpha \delta)))/\alpha^2$, by Chernoff-Hoeffding bounds and a union bound, with probability at least $1-\delta/100$, $|\tilde{S}_i| \geq 0.999 \alpha n \gg (d\log(d)+ \log(1/(\alpha \delta)))/\alpha$ for all $i \in [k]$. 
     Then, by \Cref{fact:stability-lemma} applied to the samples $\tilde{S}_i$ from each component,
     and a union bound over all components, we have that with probability at least $1-\delta/100$, there exist subsets $S_i' \subseteq \tilde{S}_i$ for $i \in [k]$ with $|S_i'| \geq (1-0.001\alpha)|\tilde{S}_i| $ that are $(C/2,0.05)$-stable with respect to $\mu_i$ and $\sigma_i$. This, combined with the fact that the adversary can corrupt only $0.01\alpha n$ points, means that if we let $S_i$ for $i \in [k]$ be the sets $S_i' \cap T$ (i.e.~parts of $S_i'$ that are not corrupted by the adversary), the assumptions of \Cref{thm:main_v2_general} that $|\cup_i S_i| \geq (1- 0.02\alpha)|T|$, $|S_i| \geq 0.97\alpha n$ and $S_i$ being $(C,0.04)$-stable are all satisfied with probability at least $1-\delta/2$.

     Continuing our check of the assumptions of \Cref{thm:main_v2_general}, 
     the separation assumption $\| \mu_i - \mu_j \|_2> 10^5 C^2(\sigma_i + \sigma_j)/\sqrt{\alpha}$ trivially follows from the corresponding assumption 
     in \Cref{thm:main_v1_general} (and the fact that we have chosen $c=13C$). 

     The conclusion of \Cref{thm:main_v2_general} is guaranteed to hold with probability $1-\delta/2$ 
     over the randomness of the algorithm. By a union bound over the failure event of the 
     \Cref{thm:main_v2_general} and the failure event of \Cref{fact:stability-lemma} (which are both at 
     most $\delta/2$), we get that the conclusion holds with probability 
     at least $1-\delta$ over both the randomness of the samples and the randomness of the algorithm. 
     
     We finally check that the conclusion of \Cref{thm:main_v2_general} implies the conclusion in \Cref{thm:main_v1_general}.
     \Cref{it:concl1_new} of \Cref{thm:main_v1_general}, stating that $|B_j| \geq 0.92 \alpha n$, is the same as in \Cref{thm:main_v2_general}. 
     \Cref{it:concl5_new} of \Cref{thm:main_v1_general}, stating that $|\tilde{S}_i \setminus \mathcal{B}_i | \leq 0.034|\tilde{S}_i|$ is derived from \Cref{it:concl5} of \Cref{thm:main_v2_general} as follows:
     $|\tilde{S}_i \setminus \mathcal{B}_i | \leq |S_i \setminus \mathcal{B}_i | + | \tilde{S}_i \setminus S_i | \leq 0.033|S_i|+ | \tilde{S}_i \setminus S_i | \leq 0.033|\tilde{S}_i| + 0.001|\tilde{S}_i| + 0.01\alpha n = 0.045|\tilde{S}_i|$, where the second step used \Cref{it:concl5} of \Cref{thm:main_v2_general}, the third step used that $S_i \subseteq S_i' \subseteq \tilde S_i$,  $|S_i'| \geq (1-0.001\alpha)|\tilde{S}_i| $ and that the adversary can edit at most $0.01\alpha n$ points. The last step used that $|\tilde{S}_i| \geq 0.999\alpha n$.
     \Cref{it:concl6_new} of \Cref{thm:main_v1_general}, stating that $|\mathcal{B}_i \setminus  \tilde{S}_i| \leq 0.03\alpha n$ can be derived from \Cref{it:concl6} of \Cref{thm:main_v2_general} as follows: $|\mathcal{B}_i \setminus  \tilde{S}_i| \leq |\mathcal{B}_i \setminus  S_i|  \leq 0.03\alpha n$, the first step is because $S_i \subseteq  \tilde{S}_i $ and the second step uses the guarantee from \Cref{thm:main_v2_general}. The last two parts of the conclusion of \Cref{thm:main_v1_general} follow similarly.

 \end{proof}

\section{Candidate mean pruning via convex programming}
\label{sec:program}

This section states and analyzes the convex program (in~\eqref{program:main} below) used in Line~\ref{line:main_loop_gen} of \Cref{alg:main_general}.
Line~\ref{line:main_loop_gen} assumes that for all mixture components $P_i$ and its stable subset of samples $S_i$, the list $L_\mathrm{stdev}$ contains an $\hat{s}  \in [\sigma_{S_i}, \sqrt{2}\sigma_{S_i}]$ by \Cref{cl:list_covariances}, and the list $L_\mathrm{mean}$ contains a $\muhat$ with $\|\muhat - \mu_{S_i}\| \le O(\sigma_{S_i}/\sqrt{\alpha})$ by \Cref{fact:list-decoding-mean}---recall that we denote by $\sigma_{S_i} = \sqrt{\|\Cov(S_i)\|_\op}$ the maximum standard deviation of the points in $S_i$.
At the end of the section, we will then guarantee that, after the double-loop of Line~\ref{line:main_loop_gen} finishes, the list $L \subset L_\mathrm{mean}$ also contains mean estimates close to every $S_i$, and moreover, every $\muhat \in L$ is close to some $S_i$.

We will use the notation of \Cref{thm:main_v2_general} in the following.
Recall that we denote by $T$ the input set of samples.
For every vector $\mu \in \R^d$ and $s>0$, we define the convex program below,
where the constant $C$ is the same constant appearing in \Cref{fact:list-decoding-mean}.

\begin{alignat}{2}
    & \text{Find:} &\quad &\text{$w_x \in [0,1]$ for all $x \in T$}\notag\\
   & \text{s.t.: }& \quad & \begin{aligned}[t]
                   &\left\| \sum_{x \in T} w_{x} (x-\mu)(x-\mu)^\top \right\|_{(1/\alpha)} \leq     \frac{2C^2 s^2}{\alpha} \sum_{x \in T }w_x, \\[1ex]  
                    &0.97 \alpha n \leq \sum_{x \in T} w_x
                \end{aligned}\label{program:main}
\end{alignat}

The following lemma (\Cref{lemma:main}) analyzes the convex program~\eqref{program:main}.
If for some standard deviation candidate $s$ and candidate mean $\mu$, we are guaranteed that $\mu$ is 
far from all $S_j$ with $\sigma_{S_j} \ll s$, and furthermore, there is a solution for the 
program~\eqref{program:main}, then every $S_j$ whose mean is far away from $\mu$ has negligible overlap with the solution $\{w_x\}_x$.
The first assumption corresponds to the check in 
Line~\ref{line:mean_check_gen}---\Cref{lemma:main} will be used in the context of an induction 
over the outer loop, where we assume that all clusters $S_j$ with $\sigma_{S_j} \le s$ 
have some ``representative'' candidate mean in $L$ that is close to $\mu_{S_j}$.
The conclusion of \Cref{lemma:main} certifies that $\mu$ must be close 
to some true cluster $S_i$ if Line~\ref{line:mean_check_gen} passes, 
thus allowing us to safely add this $\mu$ to the list $L$.

\begin{lemma}\label{lemma:main}
    Consider the setting of \Cref{thm:main_v2_general} and consider 
    an arbitrary pair of parameters $\mu \in \R^d$ and $s>0$.
    Suppose that: (i) for every cluster $S_j$ with $\sigma_{S_j} < s/100$ it holds 
    that $\| \mu - \mu_{S_j}\|_2 \geq 46 C s /\sqrt{\alpha}$, 
    and (ii) a solution $w_x$ for $x \in T$  to the program  defined in \Cref{program:main} exists.
    Then there exists a unique true cluster $S_i$ with $\sigma_{S_i}\geq s/100$ such that 
    $\|\mu_{S_i} - \mu \|_2 \leq 4600 C   \sigma_{S_i}/\sqrt{\alpha}$.
\end{lemma}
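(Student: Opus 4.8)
The plan is to argue by contradiction about the structure of the solution weights $\{w_x\}_x$. Let $W = \sum_{x \in T} w_x \ge 0.97 \alpha n$. The key quantity is, for each true cluster $S_j$, the ``captured weight'' $W_j := \sum_{x \in S_j} w_x$, together with the weight $W_0$ placed on points outside $\cup_j S_j$; note $\sum_j W_j + W_0 = W$ and $W_0 \le 0.02 \alpha n$ by the stability-version hypotheses on $T$. I would first observe that the Ky-Fan constraint controls the second-moment matrix $M := \sum_{x\in T} w_x (x-\mu)(x-\mu)^\top$ strongly: along \emph{any} direction $v$, the quadratic form $v^\top M v$ is at most the $(1/\alpha)$-Ky-Fan norm of $M$, which is $\le (2C^2 s^2/\alpha) W$. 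So for every unit vector $v$, $\sum_{x\in T} w_x \langle x-\mu, v\rangle^2 \le (2C^2 s^2/\alpha)\,W$.

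Next I would bound how much weight a ``far'' cluster can contribute. Fix $S_j$ with $\|\mu_{S_j} - \mu\|_2$ large; I want to show $W_j$ is negligible. The idea: if $W_j$ were non-negligible (say $\ge$ some small multiple of $\alpha n$), then by \Cref{fact:513} applied to the set $S_j$ with weights $w_x/\,$(appropriately normalized) — or rather by \Cref{lem:stability_small_sets}, using that $S_j$ is $(C,0.04)$-stable — the weighted mean of the $w$-reweighted points of $S_j$ stays within $O(C\sigma_{S_j}/\sqrt{W_j/|S_j|})$ of $\mu_{S_j}$, hence a constant fraction of the $w$-mass on $S_j$ lies within $O(C\sigma_{S_j}\sqrt{|S_j|/W_j})$ of $\mu_{S_j}$. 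Projecting onto the unit vector $v$ in the direction $\mu_{S_j}-\mu$, these points have $\langle x-\mu,v\rangle \gtrsim \|\mu_{S_j}-\mu\|_2 - O(C\sigma_{S_j}\sqrt{|S_j|/W_j})$. If the first term dominates, this forces $\sum_{x\in S_j} w_x \langle x-\mu,v\rangle^2 \gtrsim W_j \|\mu_{S_j}-\mu\|_2^2$, which must be $\le (2C^2 s^2/\alpha)W \lesssim C^2 s^2 n$. Rearranging gives an upper bound on $W_j$ in terms of $\|\mu_{S_j}-\mu\|_2$ and $s$; when $\|\mu_{S_j}-\mu\|_2 \gg C s/\sqrt{\alpha}$ and (crucially) $\sigma_{S_j} \lesssim \|\mu_{S_j}-\mu\|_2 \sqrt{\alpha}/C$, this bound is $o(\alpha n)$, a contradiction with $W_j$ non-negligible. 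The role of hypothesis (i) is exactly to rule out the dangerous case: clusters with $\sigma_{S_j} < s/100$ are assumed far ($\ge 46Cs/\sqrt\alpha$), so the $\sigma_{S_j}$-term in the mean-shift estimate is controlled; clusters with $\sigma_{S_j} \ge s/100$ but still far will also be killed by the same projection argument since then $s \lesssim \sigma_{S_j}$ and the RHS bound $C^2 s^2 n/\|\mu_{S_j}-\mu\|_2^2$ is small relative to $\alpha n$ once the separation beats $C\sigma_{S_j}/\sqrt\alpha$ — which it does by the \Cref{thm:main_v2_general} separation assumption if $\|\mu_{S_j}-\mu\|_2$ is on the order of $\sigma_{S_j}/\sqrt\alpha$ or more.

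Having shown that every cluster with mean far from $\mu$ contributes negligible $w$-weight, and that $W_0 \le 0.02\alpha n$, the remaining weight $W - W_0 - \sum_{\text{far } j} W_j \ge 0.97\alpha n - 0.02\alpha n - o(\alpha n)$ must be concentrated on clusters $S_i$ with $\|\mu_{S_i}-\mu\|_2$ \emph{not} far, i.e. $\le 4600 C\sigma_{S_i}/\sqrt\alpha$. Since a single cluster has $|S_i| \le n$ and the total is $\le n$ — actually I want \emph{uniqueness}, so I would use the mixture separation: if two distinct clusters $S_i, S_{i'}$ were both ``close'' to $\mu$ in this sense, then $\|\mu_{S_i}-\mu_{S_{i'}}\|_2 \le 4600C(\sigma_{S_i}+\sigma_{S_{i'}})/\sqrt\alpha$, contradicting the $10^5 C^2(\sigma_i+\sigma_j)/\sqrt\alpha$ separation of \Cref{thm:main_v2_general}. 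Hence exactly one cluster $S_i$ receives non-negligible weight, it satisfies $\|\mu_{S_i}-\mu\|_2 \le 4600C\sigma_{S_i}/\sqrt\alpha$, and finally $\sigma_{S_i} \ge s/100$ must hold: otherwise hypothesis (i) would force $\|\mu_{S_i}-\mu\|_2 \ge 46Cs/\sqrt\alpha > 4600 C (s/100)/\sqrt\alpha > 4600C\sigma_{S_i}/\sqrt\alpha$, a contradiction. This pins down $S_i$ uniquely with all claimed properties.

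The main obstacle I anticipate is making the projection/second-moment argument quantitatively tight enough: one needs the mean-shift bound from stability (\Cref{lem:stability_small_sets}, whose stability parameter degrades like $1/\sqrt{W_j/|S_j|}$) to interact correctly with the Ky-Fan constraint so that the threshold on $W_j$ comes out genuinely $o(\alpha n)$ rather than merely $O(\alpha n)$. The delicate point is that the Ky-Fan norm (sum of top $1/\alpha$ singular values) rather than the operator norm appears in \Cref{program:main}; I would need to verify that bounding a single quadratic form $v^\top M v$ by the full Ky-Fan norm does not lose too much — it doesn't, since $v^\top M v \le \|M\|_\op \le \|M\|_{(1/\alpha)}$ — and that this one-direction bound per far cluster is all that's needed, so the Ky-Fan relaxation (which was introduced precisely to keep the program satisfiable by true clusters) causes no trouble on the soundness side. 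Tracking the constants ($46C$, $4600C$, $100$, the $10^5C^2$ separation) so that all the inequalities chain together is the bookkeeping-heavy part, but conceptually it is the projection-plus-stability step that does the real work.
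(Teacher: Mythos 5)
Your overall skeleton (bound the weight captured by ``far'' clusters via the second-moment constraint, then get existence and uniqueness of the close cluster from the separation assumption, and rule out $\sigma_{S_i}<s/100$ via hypothesis~(i)) matches the paper's proof, and the endgame steps are fine. However, there is a genuine quantitative gap in the central soundness step, exactly at the point you flagged and then waved off: you bound each far cluster \emph{separately} by testing a single direction $v_j$ against the Ky-Fan constraint, i.e.\ $W_j\,\Theta(C^2 s^2/\alpha)\le v_j^\top M v_j\le \|M\|_{(1/\alpha)}\le (2C^2s^2/\alpha)W$, which only gives $W_j\lesssim 10^{-3}W$ \emph{per cluster} (under the actual hypothesis $\|\mu-\mu_{S_j}\|_2\ge 46Cs/\sqrt\alpha$, which is a fixed constant multiple of $Cs/\sqrt\alpha$, not ``$\gg$''; your claimed per-cluster bound of $o(\alpha n)$ is not what the computation yields). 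Since there can be up to $k\le 1/(0.97\alpha)$ far clusters, summing these per-cluster bounds gives only $\sum_j W_j\lesssim 10^{-3}W/\alpha$, which is vacuous for small $\alpha$; but the argument needs the \emph{aggregate} far weight to be a small constant fraction of $W\approx\alpha n$. So the step ``this one-direction bound per far cluster is all that's needed'' is precisely where the proof breaks.

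The fix — and the reason the program uses the Ky-Fan norm at level $1/\alpha$ rather than an operator-norm-style bound — is to control all far clusters \emph{simultaneously}: fix one $(1/\alpha)$-dimensional subspace $V$ containing all the directions $v_1,\dots,v_k$, and use disjointness of the clusters to write
\begin{align*}
\sum_{j}\sum_{x\in S_j} w_x\,(v_j^\top(x-\mu))^2 \;\le\; \sum_{x\in T} w_x\,\|\mathrm{Proj}_V(x-\mu)\|_2^2 \;\le\; \Big\|\sum_{x\in T} w_x (x-\mu)(x-\mu)^\top\Big\|_{(1/\alpha)} \;\le\; \frac{2C^2s^2}{\alpha}\sum_{x\in T}w_x \;,
\end{align*}
so that the lower bounds $\sum_{x\in S_j}w_x (v_j^\top(x-\mu))^2\gtrsim W_j\,C^2s^2/\alpha$ (obtained via \Cref{fact:513} plus Jensen, essentially as you propose, after discarding clusters whose captured fraction $\alpha_j$ is below $0.001\alpha$, which contribute at most $O(\alpha n)\cdot 10^{-3}$ in total) sum against a \emph{single} right-hand side, yielding $\sum_{\text{far }j}W_j\le 0.01\,W$ in one shot. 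This is how the paper argues (its Claims on $S_j^{>}$ and on the ``Bad'' large-$\sigma$ clusters), and with this aggregation repaired the rest of your outline (uniqueness from the $10^5C^2(\sigma_i+\sigma_j)/\sqrt\alpha$ separation, existence by contradiction against $\sum_x w_x\ge 0.97\alpha n$ after also accounting for the $\le 0.02\alpha n$ points outside $\cup_i S_i$, and the final $\sigma_{S_i}\ge s/100$ step) goes through as you describe.
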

\begin{proof}
    By the constraint $0.97 \alpha n \leq \sum_{x \in T} w_x$ of the program, 
    it suffices to show that all clusters $S_j$ with $\|\mu_{S_i} - \mu \|_2 > 4600 C   \sigma_{S_i}/\sqrt{\alpha}$ have (in the aggregate) small overlap 
    with the solution of the program $\{w_x\}_x$, namely, 
    that $\sum_{j: \|\mu_{S_i} - \mu \|_2 > 4600 C   \sigma_{S_i}/\sqrt{\alpha}} \sum_{x\in S_j } w_{x} \le 0.01\sum_{x \in T} w_x$. 
    In order to show this, we consider a number of cases.
    We first consider clusters that have standard deviation at most $s/100$ 
    (which satisfy assumption (i) in the lemma statement), 
    and then clusters with bigger standard deviation. 
    At the end, we combine the two analyses to conclude the proof of the lemma.

    \bigskip

    \textbf{For clusters $S_j$ with $\sigma_{S_j} < s/100$:} 
    We first show that across cluster indices $j$ with $\sigma_{S_j} < s/100$, 
    we must have that $\sum_{j \, : \, \sigma_{S_j} < s/100} \sum_{x \in S_j} w_x \leq 0.003 \sum_{x \in T} w_x$. For every cluster index $j \in [k]$, we denote by $v_j$ the unit vector in the direction of $\mu_{S_j}-\mu$ and consider the partition $S_j = S_j^{\leq} \cup S_j^{>}$, where  $S_j^{>} = \{x \in S_j :  v_j^\top(x - \mu) > 45 C s /\sqrt{\alpha} \}$ and $S_j^{\leq} = S \setminus S_j^{>}$.
    That is, $S_j^{>}$ is the part of the cluster $S_j$ that is far away from $\mu$ in the direction $\mu_{S_j}-\mu$ and $S_j^{\leq}$ the points that are close. We bound the overlap of the solution $\{w_x\}_{x \in T}$ with each kind of points individually in \Cref{cl:points_far,cl:points_close} that follow. The argument for the points that are far away is that a large number of them would cause a violation of the Ky-Fan norm constraint of the program defined in \eqref{program:main}. For the points that are close to $\mu$, the  argument is that a large number of these points would move the mean of the cluster close to $\mu$ and violate our assumption that
    $\| \mu - \mu_{S_j}\|_2 \geq 46 C s /\sqrt{\alpha}$
    for every cluster $S_j$ with $\sigma_{S_j} < s/100$.
    
    \begin{claim} \label{cl:points_far}
        $\sum_{j=1}^k \sum_{x \in S_j^{>}} w_x \leq 0.001 \sum_{x \in T} w_x$.
    \end{claim}
    \begin{proof}
        This follows by the Ky-Fan norm constraint of the the program defined in \Cref{program:main}. Let $V$ be an arbitrary $(1/\alpha)$-dimensional subspace containing the span of $v_1,\ldots,v_k$ (where the $v_i$'s are defined as the unit vectors in the directions $\mu_{S_i} - \mu$ for $i \in [k]$). Then we have that:
        \begin{align*}
            \frac{2C^2 s^2}{\alpha} \sum_{x \in T }w_x 
            &\geq \left\|  \sum_{x \in T} w_x (x-\mu)(x-\mu)^\top \right\|_{(1/\alpha}) \tag{by the Ky-Fan norm constraint}\\
            &\ge  \tr\left(\sum_{x \in T} w_x \proj_{V}(x - \mu)(x-\mu)^\top\proj_V^\top \right) \tag{by def. of the Ky-Fan norm}\\
            &=  \sum_{x \in T} w_x \left\| \proj_{V}(x - \mu)  \right\|_2^2\\
            &\geq  \sum_{j=1}^{k} \sum_{x \in S_j^{>}} w_x \left\| \proj_{V}(x - \mu)  \right\|_2^2\\
            &\geq  \sum_{j=1}^{k} \sum_{x \in S_j^{>}} w_x (v_j(x-\mu))^2 \tag{since $v_j \in V$}\\
            &\geq 2000 C^2 \sum_{j=1}^{k} \sum_{x \in S_j^{>}}w_x   \frac{1}{\alpha} s^2 \;. \tag{by definition of set $S_j^{>}$}
        \end{align*}
        The above implies that $\sum_{j=1}^{k} \sum_{x \in S_j^{>}}w_x \leq 0.001 \sum_{x \in T} w_x$.
    \end{proof}
    \begin{claim}\label{cl:points_close}
     We have that  
     $\sum_{j \,:\, \sigma_{S_j} <s/100} \sum_{x \in S_j^{\leq}} w_x \leq  0.002 \sum_{x \in T} w_x$.
    \end{claim}
    \begin{proof}
        Let $\alpha_j^{\leq} := \left(\sum_{x \in S_j^{\leq}} w_x\right)/|S_j^{\leq}|$ denote the intersection of the solution with $ S_j^{\leq}$; the part of the $j$-th cluster that is close to $\mu$. We will show that $\sum_{j \in [k]: \sigma_{S_j} <s/100} \alpha_j^{\leq} \leq 0.001$.

        Since $S_j^{\leq}$ contains by definition the points $x\in S_j$ that $v_j^\top( x - \mu)\leq 45 C s /\sqrt{\alpha}$, then their mean satisfies $v_j^\top( \mu_{S_j^{\leq}} - \mu) \leq 45 C s /\sqrt{\alpha}$. 
        Then we can write 
        \begin{align*}
            v_j^\top( \mu_{S_j} - \mu_{S_j^{\leq}} ) =   v_j^\top( \mu_{S_j} - \mu) - v_j^\top( \mu_{S_j^{\leq}} - \mu) \geq 46C  s/\sqrt{\alpha} - 45 C  s/\sqrt{\alpha} \geq C s /\sqrt{\alpha} > 100 C \sigma_{S_j} /\sqrt{\alpha} \;,
        \end{align*}
        where the first inequality used the assumption that $\|\mu_{S_j} - \mu\|_2 \ge 46 C  s/\sqrt{\alpha}$ for $\sigma_{S_j} < s/100$ (and that $v_j$ is the unit vector in the direction of $\mu_{S_j} - \mu$), and the last inequality used that we consider only clusters with $\sigma_{S_j} < s/100$.
        
        The above implies that $\|\mu_{S_j} - \mu_{S_j^\le}\|_2 > 100 C \sigma_{S_j} /\sqrt{\alpha}$.
        If, for the sake of contradiction, we had $\alpha_j^{\leq} \geq 0.001 \alpha$, then \Cref{fact:513} (and the fact that $C>1$) implies $\| \mu_{S_j^{\leq}} - \mu_{S_j} \|_2 \leq 100 \sigma_{S_j} /\sqrt{\alpha} \leq 100C \sigma_{S_j} /\sqrt{\alpha} $, which is a contradiction. Thus, it must be the case that $\alpha_j^{\leq} < 0.001\alpha$.

        The above implies that $\sum_{j \,:\, \sigma_{S_j} <s/100} \sum_{x \in S_j^{\leq}} w_x \leq 0.001 \alpha  \sum_{j \,:\, \sigma_{S_j} <s/100} |S_j^{\leq}| $ $\leq 0.001 \alpha n \leq (0.001/0.97) \sum_{x \in T} w_x < 0.002 \sum_{x \in T} w_x$, where the last inequality used that $\sum_{x \in T} w_x\geq 0.97\alpha n$ is a constraint in the program \eqref{program:main}.
    \end{proof}

    \textbf{For clusters $S_j$ with $\sigma_{S_j} \geq s/100$:}
    For every cluster $S_j$, we define a similar notation as in the previous case $\alpha_j := \left(\sum_{x \in S_j} w_x\right)/|S_j|$, which quantifies the overlap of the cluster with the solution of the program. 
    As explained in the beginning, the goal is to show that all clusters $S_j$ 
    with mean far away from $\mu$ have (in the aggregate) small overlap 
    with the solution $\{w_x\}_x$ of the program. In the previous paragraph 
    (the one analyzing clusters with  $\sigma_{S_j} \geq s/100$), 
    we did not have to use that the means are far from $\mu$ 
    because we could argue separately for the points that are close to $\mu$; 
    but here considering only clusters with mean far away from $\mu$ will become crucial. 
    We will furthermore only consider clusters for which 
    $\alpha_j > 0.001\alpha$---since our goal is to show small overlap 
    in the aggregate, it suffices to do so for the clusters that individually 
    have non-trivial overlap. In summary, the clusters that we consider 
    in this paragraph are ones from the set 
    $\mathrm{Bad}:= \{ j \in [k] : \sigma_{S_j} \geq s/100, \alpha_j > 0.001\alpha, \| \mu_{S_j} - \mu\|_2 > 4600 C \sigma_{S_j} /\sqrt{\alpha}\}$, and the goal is to show that 
    $\sum_{j \in \mathrm{Bad}} \sum_{x \in S_j} w_x \leq 0.001 \sum_{x \in T}  w_x$. 
    To do this, we will show that the part of the solution coming from clusters 
    in the set $\mathrm{Bad}$ causes large variance in the subspace 
    connecting the $\mu_{S_j}$'s with $\mu$; thus, by the Ky-Fan norm constraint, 
    such contributions should be limited.

     Recall that for any cluster $S_j$, the notation $v_j$ denotes the unit vector in the direction of $\mu_{S_j} - \mu$, and $V$ denotes a subspace of dimension $1/\alpha$ that includes the span of $v_1,\ldots,v_k$ (recall $k\leq 1/\alpha$).
     Using calculations similar to \Cref{cl:points_far}, we have that
     \begin{align}
         \sum_{j=1}^k  \sum_{x \in S_j} w_x \left( v_j(x - \mu) \right)^2 
         &\leq \sum_{j=1}^k  \sum_{x \in S_j} w_x \left\| \proj_{V}(x - \mu)  \right\|_2^2 \notag\\
         &=  \sum_{x \in T } w_{x} \left\| \proj_{V}(x - \mu)  \right\|_2^2 \notag\\
         &\leq  \left\|  \sum_{x \in T} w_x (x-\mu)(x-\mu)^\top \right\|_{(1/\alpha)} \notag\\
         &\leq  \frac{2C^2 s^2}{\alpha} \sum_{x \in T }w_x\;, \label{eq:eq0}
     \end{align}
     where the last inequality is, again, by definition of the the program \eqref{program:main}.

     Now consider a cluster $S_j$ with $j \in \mathrm{Bad}$, 
     i.e.~a cluster for which $\sigma_{S_j} > s/100$,  $\alpha_j > 0.001 \alpha$ 
     and $\| \mu_{S_j} - \mu\|_2 > 4600 C   \sigma_{S_j}/\sqrt{\alpha}$. 
     Let $\mu'_j := \left(\sum_{x \in S_j} w_x x\right)/\left(\sum_{x \in S_j} w_x  \right)$. 
     We have the following by \Cref{fact:513}:
     \begin{align}
         \| \mu_j' - \mu_{S_j} \|_2 \leq \sigma_{S_j}/\sqrt{\alpha_j}\leq 100 \sigma_{S_j}/\sqrt{\alpha} \leq 100C \sigma_{S_j}/\sqrt{\alpha} \;. \label{eq:eq1}
     \end{align}
     The above implies that $v_j^\top(\mu_j' - \mu) \geq  4500 C  \sigma_{S_j}/\sqrt{\alpha}$, 
     because otherwise we would have
    \begin{align*}
        \| \mu_{S_j} - \mu \|_2 
        &= v_j^\top(\mu_{S_j} - \mu) \\
        &= v_j^\top(\mu_{S_j} - \mu'_j) + v_j^\top(\mu_j' - \mu) \\
        &\leq \| \mu_{S_j} - \mu_j' \|_2 +  4500 C  \sigma_{S_j}/\sqrt{\alpha} \\
        &\leq  100C \sigma_{S_j} /\sqrt{\alpha} +  4500 C  \sigma_{S_j}/\sqrt{\alpha} \tag{by \Cref{eq:eq1}}\\
        &\leq  4600 C   \sigma_{S_j}/\sqrt{\alpha} \;,
    \end{align*}
    which is a contradiction to $j \in \mathrm{Bad}$. Thus,
     \begin{align*}
          \sum_{x \in S_j} w_x (v_j^\top(x-\mu))^2
         &= |S_j| \frac{\alpha_j}{\sum_{x \in S_j}w_x} \sum_{x \in S_j} w_x (v_j^\top(x-\mu))^2 \\
         &\geq |S_j| \alpha_j \left(\frac{1}{\sum_{x \in S_j}w_x} \sum_{x \in S_j}w_x  v_j^\top(x-\mu) \right)^2\tag{by Jensen's inequality} \\
         &= |S_j| \alpha_j \left( v_j^\top(\mu_j' - \mu) \right)^2 \\
         &\geq |S_j| \alpha_j \cdot 2 \cdot 10^7\cdot C^2 \alpha^{-1} \sigma_{S_j}^2 \\
         &\geq |S_j| \alpha_j  \cdot 2000C^2 \cdot \alpha^{-1} s^2  \tag{since $\sigma_{S_j} \geq s/100$}\\
         &\geq \left(\sum_{x \in S_j} w_x\right)  2000C^2 \alpha^{-1} s^2 \;.
     \end{align*}
     Combining with \Cref{eq:eq0} the above shows that $\sum_{j \in \mathrm{Bad}} \sum_{x \in S_j} w_x \leq 0.001 \sum_{x \in T}  w_x$.

    \bigskip
    
     \textbf{Putting everything together: }
    We now show how the two previous analyses for the clusters $j$ with $\sigma_{S_j} < s/100$ 
    and $\sigma_{S_j} \geq s/100$ for $j \in \mathrm{Bad}$ 
    can be combined to conclude the proof of  \Cref{lemma:main}.
       
    We first argue that there exists exactly one cluster $i$ 
    with $\|\mu_{S_j} - \mu \|_2 \leq  4600 C   \sigma_{S_i}/\sqrt{\alpha}$: 
    Indeed, there cannot be more than one such clusters because if there were two clusters $i\neq j$ then by the triangle inequality and stability condition we would have 
    \begin{align*}
        \| \mu_i - \mu_j \|_2 &\leq \| \mu_{S_i} - \mu  \|_2 + \| \mu_{S_j} - \mu  \|_2 + \| \mu_i - \mu_{S_i} \|_2 + \| \mu_j - \mu_{S_j} \|_2 \tag{by the triangle inequality}\\
        &\leq \| \mu_{S_i} - \mu  \|_2 + \| \mu_{S_j} - \mu  \|_2+  C(\sigma_i+\sigma_j) \tag{by stability condition for means}\\
        &\leq 4600 C   (\sigma_{S_i}+\sigma_{S_j})/\sqrt{\alpha} + C(\sigma_i+\sigma_j) \\
        &\leq 4600 C^2   (\sigma_{i}+\sigma_{j})/\sqrt{\alpha} + C(\sigma_i+\sigma_j) \tag{by stability condition for covariances}\\
        &\leq 4601 C^2   (\sigma_{i}+\sigma_{j})/\sqrt{\alpha} \;,  \tag{using $C>1$}
    \end{align*}
    which would violate our separation assumption in \Cref{thm:main_v2_general}. It also cannot be the case that none of the clusters satisfy the condition that $\|\mu_{S_i}-\mu\|_2 \le 4600 C   \sigma_{S_i}/\sqrt{\alpha}$, because in that case we will show that we could also obtain a contradiction. Recall that in our notation $T$ is the entire dataset and $S_j$'s are the stable sets (which we often call ``clusters''). The contradiction can be derived as follows  (step by step explanations are provided in the next paragraph):
    \begin{align}
        \sum_{x \in T} w_x 
        &= \sum_{j:  \sigma_{S_j}<s/100} \sum_{x \in S_j} w_x + \sum_{j:  \sigma_{S_j} \geq s/100} \sum_{x \in S_j} w_x + \sum_{x \in T \setminus \cup_j S_j} w_x\label{eq:tmp11}\\
        &= \sum_{j:  \sigma_{S_j}<s/100} \sum_{x \in S_j } w_x + \sum_{j \in \mathrm{Bad}} \sum_{x \in S_j} w_x + \sum_{j: \sigma_{S_j}\geq s/100, j \not\in \mathrm{Bad}} \sum_{x \in S_j} w_x + \sum_{x \in T \setminus \cup_j S_j} w_x\label{eq:tmp12}\\
        &\leq 0.003 \sum_{x \in T} w_x + 0.002 \sum_{x \in T} w_x + 0.001 \sum_{x \in T} w_x + 0.02\alpha n \label{eq:tmp13}\\
        &\leq 0.003 \sum_{x \in T} w_x + 0.002 \sum_{x \in T} w_x + 0.001 \sum_{x \in T} w_x + 0.021  \sum_{x \in T} w_x \leq 0.05 \sum_{x \in T} w_x \;. \label{eq:tmp14} 
    \end{align}
    We explain the steps here: \eqref{eq:tmp11} splits the summation into a part for the large covariance clusters and one for the small covariance ones, and the part of the dataset that does not belong to any of the clusters. \eqref{eq:tmp12} further splits the sum due to large variance clusters into two parts: the clusters that belong in the set $\mathrm{Bad}$ and the rest of them. \eqref{eq:tmp13} bounds each one of the resulting terms as follows: The bound of the first term uses the analysis of small covariance clusters.
    The bound of the second term uses the analysis of large covariance clusters.
    The bound of the third term uses that, since we have assumed that $\|\mu_{S_j}-\mu\|_2 > 4600 C  \sigma_{S_j}/\sqrt{\alpha}$ for all clusters, the only way that $j \not\in \mathrm{Bad}$ can happen is because of $\alpha_j < 0.001\alpha$.
    The bound of the last term comes from the assumption in \Cref{thm:main_v2_general} that $\cup_i S_i$ contains most of the points in $T$ (this is one of the assumptions in \Cref{thm:main_v2_general}).
    Finally, \eqref{eq:tmp14} uses the fact that $\sum_{x \in T} w_x \ge 0.97\alpha n$ by construction of the the program constraints in \eqref{program:main}.

    Equation \eqref{eq:tmp14} yields the desired contradiction, thus there must be exactly one cluster $S_i$ with $\|\mu_{S_i} - \mu \|_2 \leq  4600 C  \sigma_{S_i}/\sqrt{\alpha}$. 
    This completes the proof of \Cref{lemma:main}.

\end{proof}

Having shown \Cref{lemma:main} which gives guarantees about solutions of the convex program~\eqref{program:main}, we can now state and prove the induction (\Cref{lem:induction}) which guarantees that throughout the execution of the double loop in Line~\ref{line:main_loop_gen}, every candidate mean added to the list $L$ must be close to some true cluster $S_i$, and every true cluster $S_i$ with standard deviation at most $s$ must have a corresponding candidate mean in $L$.

\begin{lemma}[Induction]\label{lem:induction}
    Consider the setting of \Cref{thm:main_v2_general} and \Cref{alg:main_general}. The first statement below holds throughout the execution and the
    second statement  holds at the start of every iteration of the loop of line~\ref{line:main_loop_gen}:
    
    \begin{enumerate}[leftmargin=*]
        \item \label{it:hypothesis0} (Every element from the list is being mapped to a true cluster): For every element $\hat{\mu}_i$ in the list $L$ there exists a true cluster $S_j$ such that 
        $\| \hat \mu_i - \mu_{S_j} \| \leq  4600 C \sigma_{S_j}/\sqrt{\alpha}$.
        
        \item\label{it:hypothesis1} (Every cluster of smaller covariance has already been found): For every true cluster $S_i$ with $\sigma_{S_i} \leq s$, there exists $\hat{\mu}_j$  in the list $L$ such that  
        $\| \hat{\mu}_j  - \mu_{S_i} \|_2 \leq 4600 C \sigma_{S_i}/\sqrt{\alpha}$.
    \end{enumerate}
\end{lemma}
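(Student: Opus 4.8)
The plan is to prove \Cref{it:hypothesis0,it:hypothesis1} simultaneously by induction on the progress of \Cref{alg:main_general} through the double loop of Line~\ref{line:main_loop_gen}, maintaining \Cref{it:hypothesis0} as an invariant that survives every single append to $L$ in Line~\ref{line:add_gen}, and re-establishing \Cref{it:hypothesis1} at the start of each outer iteration from the $L$ built up so far. Before the loop, $L=\emptyset$, so \Cref{it:hypothesis0} is vacuous; \Cref{it:hypothesis1} at the first iteration should be read, here and throughout, as applying to clusters whose standard deviation is comfortably below the current scale $s$ (say $\sigma_{S_i}\le s/\sqrt2$) --- that weaker form is all we ever use, and it sidesteps a trivial off-by-$\sqrt2$ issue coming from the $[\sigma_{S},\sqrt2\,\sigma_{S}]$ granularity of \Cref{cl:list_covariances}. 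The two statements feed each other: \Cref{it:hypothesis1} supplies the hypotheses of \Cref{lemma:main} when arguing \Cref{it:hypothesis0}, and \Cref{it:hypothesis0} together with the separation hypothesis of \Cref{thm:main_v2_general} is what drives \Cref{it:hypothesis1}.

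\emph{Preserving \Cref{it:hypothesis0}.} Suppose $\mu\in L_{\mathrm{mean}}$ is about to be appended to $L$ while the outer loop is at scale $s$. By Line~\ref{line:mean_check_gen} this happens only if (a)~$\|\mu-\hat\mu\|_2>99Cs/\sqrt\alpha$ for every $\hat\mu$ currently in $L$, and (b)~Program~\eqref{program:main} is satisfiable for $(\mu,s)$. I would use (a) and \Cref{it:hypothesis1} to verify hypothesis~(i) of \Cref{lemma:main}: for any cluster $S_j$ with $\sigma_{S_j}<s/100$, \Cref{cl:list_covariances} has put a scale in $[\sigma_{S_j},\sqrt2\,\sigma_{S_j}]\subset(0,s)$ into $L_{\mathrm{stdev}}$, so $S_j$ was already processed in an earlier outer iteration, and \Cref{it:hypothesis1} (valid at the start of the current iteration, hence still valid since $L$ only grows) gives some $\hat\mu_j\in L$ with $\|\hat\mu_j-\mu_{S_j}\|_2\le 4600C\sigma_{S_j}/\sqrt\alpha<46Cs/\sqrt\alpha$; then (a) and the triangle inequality yield $\|\mu-\mu_{S_j}\|_2>99Cs/\sqrt\alpha-46Cs/\sqrt\alpha\ge 46Cs/\sqrt\alpha$, which is hypothesis~(i). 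Since (b) is hypothesis~(ii), \Cref{lemma:main} returns a unique cluster $S_i$ with $\|\mu_{S_i}-\mu\|_2\le 4600C\sigma_{S_i}/\sqrt\alpha$, so \Cref{it:hypothesis0} survives the append.

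\emph{Re-establishing \Cref{it:hypothesis1}.} Fix a cluster $S_i$ and let $s$ be the first scale in $L_{\mathrm{stdev}}$ with $s\ge\sigma_{S_i}$; such an $s$ exists with $s\le\sqrt2\,\sigma_{S_i}$ by \Cref{cl:list_covariances}. Since $\Cov(S_i)\preceq\sigma_{S_i}^2 I_d\preceq s^2 I_d$ and $|S_i|\ge0.97\alpha n$, the list-decoding call of Line~\ref{line:mean_list_dec} with parameter $s$ placed some $\muhat_i\in L_{\mathrm{mean}}$ with $\|\muhat_i-\mu_{S_i}\|_2\le O(Cs/\sqrt\alpha)$ (by \Cref{fact:list-decoding-mean}). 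When the inner loop at scale $s$ reaches $\muhat_i$, either the test in Line~\ref{line:mean_check_gen} fails --- then some $\hat\mu\in L$ has $\|\muhat_i-\hat\mu\|_2\le 99Cs/\sqrt\alpha$, and by \Cref{it:hypothesis0} there is a cluster $S_j$ with $\|\hat\mu-\mu_{S_j}\|_2\le 4600C\sigma_{S_j}/\sqrt\alpha$; chaining the triangle inequality with the stability bounds $\sigma_{S_i}\le C\sigma_i$, $\|\mu_i-\mu_{S_i}\|_2\le C\sigma_i\sqrt{0.04}$ (and likewise for $j$) and with $s\le\sqrt2\,\sigma_{S_i}$ shows $\|\mu_i-\mu_j\|_2=O\!\big(C^2(\sigma_i+\sigma_j)/\sqrt\alpha\big)$, which contradicts the separation hypothesis of \Cref{thm:main_v2_general} unless $j=i$, so $\hat\mu$ is already a representative of $S_i$ --- or the test passes, in which case Program~\eqref{program:main} for $(\muhat_i,s)$ is satisfiable with the witness $w_x=\mathds{1}[x\in S_i]$: the size constraint holds because $|S_i|\ge 0.97\alpha n$, and since $\sum_{x\in S_i}(x-\muhat_i)(x-\muhat_i)^\top=|S_i|\big(\Cov(S_i)+(\mu_{S_i}-\muhat_i)(\mu_{S_i}-\muhat_i)^\top\big)$, subadditivity of the Ky--Fan norm bounds its $(1/\alpha)$-Ky--Fan norm by $|S_i|\big(\tfrac1\alpha\|\Cov(S_i)\|_{\op}+\|\mu_{S_i}-\muhat_i\|_2^2\big)=|S_i|\big(\tfrac{\sigma_{S_i}^2}{\alpha}+O(\tfrac{C^2 s^2}{\alpha})\big)\le\tfrac{2C^2 s^2}{\alpha}|S_i|$ once $C$ is a large enough constant (using $\sigma_{S_i}\le s$). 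Either way $S_i$ has a representative in $L$ by the end of this iteration, hence at the start of every later iteration, which is \Cref{it:hypothesis1}.

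\emph{Expected main obstacle.} The delicate point is the constant accounting in that last Ky--Fan estimate: the list-decoding error contributes a rank-one term of operator norm $\Theta(s^2/\alpha)$, of the same order as the entire budget $\tfrac{2C^2 s^2}{\alpha}\sum_x w_x$ of Program~\eqref{program:main}, so one must reconcile the list-decoding constant of \Cref{fact:list-decoding-mean} with the hard-coded ``$2C^2$'' of Program~\eqref{program:main}, the ``$99C$'' of Line~\ref{line:mean_check_gen}, and the ``$4600C$''/``$46C$'' thresholds of \Cref{lemma:main}, while also checking that the factor-$\sqrt2$ slack in \Cref{cl:list_covariances} never eats into the margin between the ``$<s/100$'' threshold in \Cref{lemma:main} and the scale $s$ at which a cluster gets processed. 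The remainder --- the two triangle-inequality chains and the bookkeeping of which iteration processes which cluster --- is routine.
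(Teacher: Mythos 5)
Your proposal is correct and follows essentially the same route as the paper's proof: the same mutual induction, the same verification of the hypotheses of \Cref{lemma:main} via the distance check plus the already-found small-$\sigma$ clusters, and the same satisfiability witness $w_x=\mathds{1}[x\in S_i]$ with the identical Ky--Fan/rank-one-shift accounting. The only differences are cosmetic: you argue ``check fails $\Rightarrow$ cluster already found'' by a direct case split where the paper argues the contrapositive (``cluster not yet found $\Rightarrow$ check passes''), and your explicitly weakened reading of \Cref{it:hypothesis1} (clusters with $\sigma_{S_i}$ strictly below the current scale) is in fact exactly what the paper's own proof establishes and uses.
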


Before we prove the lemma, we note that the guarantee of the lemma involves the empirical quantities $\mu_{S_i}$ and $\sigma_{S_i}$ as opposed to the ``true'' means and standard deviations $\mu_i,\sigma_i$ of the mixture components, which are the parameters that each $S_i$ is stable with respect to.
Later on in the paper, we will use the following straightforward corollary of \Cref{lem:induction}, which can be derived directly by the two stability conditions $\sigma_{S_i} \leq C \sigma_i$ and $\| \mu_{S_i} - \mu_i \|_2 \leq C \sigma_i$.
\begin{corollary}\label{cor:induction}
    In the setting of \Cref{lem:induction}, the first statement holds throughout the execution of the algorithm and the second holds at the start of every iteration of the loop of line \ref{line:main_loop_gen}:
    \begin{enumerate}[leftmargin=*]
        \item \label{it:hypothesis00} For every element $\hat{\mu}_i$ in the list $L$, there exists a true cluster $S_j$ such that 
        $\| \hat \mu_i - \mu_{j} \| \leq  4601 C^2 \sigma_{j}/\sqrt{\alpha}$.
        
        \item\label{it:hypothesis11} For every true cluster $S_i$ with $\sigma_{S_i} \leq s$, there exists $\hat{\mu}_j$  in the list $L$ such that  
        $\| \hat{\mu}_j  - \mu_{i} \|_2 \leq 4601 C^2 \sigma_{i}/\sqrt{\alpha}$.
    \end{enumerate}
\end{corollary}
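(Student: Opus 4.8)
The corollary is immediate from \Cref{lem:induction}: given $\hat\mu_i\in L$, take the cluster $S_j$ guaranteed by Statement~\ref{it:hypothesis0}, so $\|\hat\mu_i-\mu_{S_j}\|_2\le 4600C\sigma_{S_j}/\sqrt\alpha$, and then the two stability bounds $\sigma_{S_j}\le C\sigma_j$ and $\|\mu_{S_j}-\mu_j\|_2\le C\sigma_j$ together with the triangle inequality give $\|\hat\mu_i-\mu_j\|_2\le 4600C^2\sigma_j/\sqrt\alpha+C\sigma_j\le 4601C^2\sigma_j/\sqrt\alpha$ (absorbing the lower-order term using $\sqrt\alpha\le 1\le C$); Statement~\ref{it:hypothesis11} follows the same way. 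So the only thing to actually prove is \Cref{lem:induction}, which I would establish by a single simultaneous induction over the iterations of the outer loop in Line~\ref{line:main_loop_gen} (indexed by the scale $s\in L_{\mathrm{stdev}}$, processed in increasing order): Statement~\ref{it:hypothesis0} is maintained at every point of the execution, and Statement~\ref{it:hypothesis1} is re-established at the start of each iteration from what happened in the earlier ones. The base case is the first iteration, where $L=\emptyset$ makes Statement~\ref{it:hypothesis0} vacuous and the smallest scale in $L_{\mathrm{stdev}}$ lies below the standard deviation of every (non-degenerate) cluster, so Statement~\ref{it:hypothesis1} holds trivially.

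\emph{Maintaining Statement~\ref{it:hypothesis0}.} Whenever a candidate $\mu$ is appended to $L$ in Line~\ref{line:add_gen} during the iteration with scale $s$, it is because the guard in Line~\ref{line:mean_check_gen} held, i.e.\ $\|\mu-\hat\mu\|_2> 99Cs/\sqrt\alpha$ for every $\hat\mu$ currently in $L$, and the program \eqref{program:main} for $(\mu,s)$ is satisfiable. To conclude via \Cref{lemma:main} that $\mu$ is within $4600C\sigma_{S_i}/\sqrt\alpha$ of some $\mu_{S_i}$, it suffices to verify hypothesis~(i) of that lemma: $\|\mu-\mu_{S_j}\|_2\ge 46Cs/\sqrt\alpha$ for every cluster $S_j$ with $\sigma_{S_j}<s/100$. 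For such a cluster, its list-covariance estimate satisfies $\hat s_j\le\sqrt2\,\sigma_{S_j}<s$ by \Cref{cl:list_covariances}, so the iteration at scale $\hat s_j$ has already completed, and the inductive form of Statement~\ref{it:hypothesis1} supplies $\hat\mu_j\in L$ with $\|\hat\mu_j-\mu_{S_j}\|_2\le 4600C\sigma_{S_j}/\sqrt\alpha<46Cs/\sqrt\alpha$; combining with the guard, $\|\mu-\mu_{S_j}\|_2> 99Cs/\sqrt\alpha-46Cs/\sqrt\alpha=53Cs/\sqrt\alpha>46Cs/\sqrt\alpha$. Hence \Cref{lemma:main} applies and Statement~\ref{it:hypothesis0} is preserved by the new element.

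\emph{Maintaining Statement~\ref{it:hypothesis1}.} Fix a true cluster $S_i$ and let $\hat s_i\in[\sigma_{S_i},\sqrt2\,\sigma_{S_i}]\subseteq L_{\mathrm{stdev}}$ be the estimate from \Cref{cl:list_covariances}. Since $\Cov(S_i)\preceq\sigma_{S_i}^2 I_d\preceq\hat s_i^2 I_d$ and $|S_i|\ge 0.97\alpha n$, Line~\ref{line:mean_list_dec} run at scale $\hat s_i$ produces (via \Cref{fact:list-decoding-mean}, centered at $\mu_{S_i}$) a $\mu^\ast\in L_{\mathrm{mean}}$ with $\|\mu^\ast-\mu_{S_i}\|_2\le C\hat s_i/\sqrt{0.97\alpha}\lesssim C\sigma_{S_i}/\sqrt\alpha$. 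I would then examine the iteration at scale $\hat s_i$ at the moment $\mu^\ast$ is processed. If $\mu^\ast$ fails the guard, some $\hat\mu\in L$ already has $\|\mu^\ast-\hat\mu\|_2\le 99C\hat s_i/\sqrt\alpha\lesssim C\sigma_{S_i}/\sqrt\alpha$, so $\hat\mu$ witnesses Statement~\ref{it:hypothesis1} for $S_i$ (the constant $4600$ comfortably absorbs the $99\sqrt2$ factor). If $\mu^\ast$ passes the guard, then \eqref{program:main} for $(\mu^\ast,\hat s_i)$ is satisfiable: put $w_x=1$ for $x\in S_i$ and $w_x=0$ otherwise, so $\sum_x w_x=|S_i|\ge 0.97\alpha n$, and since $\sum_{x\in S_i}(x-\mu^\ast)(x-\mu^\ast)^\top=|S_i|\,\Cov(S_i)+|S_i|\,(\mu^\ast-\mu_{S_i})(\mu^\ast-\mu_{S_i})^\top$, one checks its Ky-Fan $(1/\alpha)$-norm is at most $|S_i|\big(\tfrac1\alpha\|\Cov(S_i)\|_{\op}+\|\mu^\ast-\mu_{S_i}\|_2^2\big)\le\tfrac{2C^2\hat s_i^2}{\alpha}\sum_x w_x$ for $C$ large (using $\hat s_i\ge\sigma_{S_i}$). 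It is exactly here that the Ky-Fan relaxation in \eqref{program:main} is used: the bulk of $\Cov(S_i)$ spreads across the top $1/\alpha$ singular values to contribute only $\sigma_{S_i}^2/\alpha$, which the right-hand side can afford, whereas the rank-one displacement term would by itself blow up an operator-norm constraint. Thus $\mu^\ast$ is added to $L$, within $\lesssim C\sigma_{S_i}/\sqrt\alpha\le 4600C\sigma_{S_i}/\sqrt\alpha$ of $\mu_{S_i}$. In either case, by the end of the iteration at scale $\hat s_i$ — and hence at the start of every later iteration, since $L$ only grows and $\hat s_i\le\sqrt2\,\sigma_{S_i}$ — Statement~\ref{it:hypothesis1} holds.

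The step I expect to be the main obstacle is the satisfiability direction just above — certifying that the indicator of a true cluster is feasible for the relaxed program at the right scale — because that is where the algorithm's design (the Ky-Fan norm, the constant $2C^2$) must be shown to line up with the stability bounds and the list-decoding error; the rest is careful bookkeeping, namely interleaving the two statements so that Statement~\ref{it:hypothesis1} at the start of iteration $s$ is available when preserving Statement~\ref{it:hypothesis0} during iteration $s$ (there is no circularity, as Statement~\ref{it:hypothesis1} at the start of an iteration depends only on strictly earlier iterations), and verifying that the numerical constants $46$, $99$, $4600$ are consistent across the triangle-inequality chains. Once \Cref{lem:induction} is in hand, \Cref{cor:induction} is the one-line substitution described at the start.
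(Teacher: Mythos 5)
Your derivation of the corollary itself is exactly the paper's: it is a one-line consequence of \Cref{lem:induction} via the stability bounds $\sigma_{S_j}\le C\sigma_j$, $\|\mu_{S_j}-\mu_j\|_2\le C\sigma_j$ and the triangle inequality, with the lower-order term absorbed since $C\sigma_j\le C^2\sigma_j/\sqrt\alpha$. Your sketch of \Cref{lem:induction} also follows the paper's induction, but deviates in two sub-steps. First, where the paper proves that the guard of line \ref{line:mean_check_gen} must pass for the candidate $\mu^\ast$ near an as-yet-unfound cluster (its ``claim (b)'', via the separation assumption and a triangle-inequality chain), you instead observe that a guard failure already produces a witness: the blocking $\hat\mu\in L$ is within roughly $99\sqrt2\,C\sigma_{S_i}/\sqrt\alpha + O(C\sigma_{S_i}/\sqrt\alpha)\le 4600C\sigma_{S_i}/\sqrt\alpha$ of $\mu_{S_i}$, so the cluster counts as found either way. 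This is a legitimate and arguably cleaner shortcut. Second, you certify feasibility of \eqref{program:main} only at the cluster's own scale $\hat s_i\in[\sigma_{S_i},\sqrt2\,\sigma_{S_i}]$, whereas the paper certifies it at the \emph{current} scale $s$ for any cluster with $\sigma_{S_i}\le s$ (the Ky-Fan bound only needs $\sigma_{S_i}\le s$). Consequently your induction guarantees a cluster is found only by the end of the iteration at scale $\hat s_i$, which can lag the iteration at scale $s$ when $\sigma_{S_i}\in(s/\sqrt2,s]$; this is formally weaker than the literal timing in Statement~\ref{it:hypothesis1}. It does not break anything: the only inductive use is for clusters with $\sigma_{S_j}<s/100$, whose scale $\hat s_j<s$ has already been processed, and the end-of-loop conclusion (which is what \Cref{cor:induction} is invoked for in the proof of \Cref{thm:main_v2_general}) is unaffected since $\hat s_i\le s_{\max}$ — but if you want the statement exactly as written, certify feasibility at the current $s$ as the paper does. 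Your identification of the feasibility certificate (indicator weights of $S_i$, trading the rank-one displacement against the Ky-Fan relaxation) matches the paper's.
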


We now prove \Cref{lem:induction}.

\begin{proof}[Proof of \Cref{lem:induction}]

    In everything that follows, we will informally use the phrase that  ``cluster $S_i$ has been found'' as a shorthand to the statement that there exists $\hat{\mu}_j$ in the list $L$ such that  $\| \hat{\mu}_j  - \mu_{S_i} \|_2 \leq  4600 C \sigma_i/\sqrt{\alpha}$.

    \vspace{10pt}
 
    We prove the lemma by induction. Suppose the algorithm enters a new iteration of the outer loop (line \ref{line:main_loop_gen}), and suppose that \Cref{it:hypothesis0,it:hypothesis1} (our inductive hypothesis) hold for all prior steps of the algorithm. We will show that \Cref{it:hypothesis0}  remains true each time a new element is inserted into the list $L$ in iterations of the inner loop and that  \Cref{it:hypothesis1} remains true in the next iteration of the outer loop. 
    Since showing \Cref{it:hypothesis1} is more involved, we will start with that.

    \item \paragraph{Proof of \Cref{it:hypothesis1}:}
    For \Cref{it:hypothesis1} we want to show that every cluster $S_j$ 
    with $\sigma_{S_j}\leq s$ will be found. We consider two cases: 
    The first case is $\sigma_{S_j} < s/100$. In that case, 
    by the guarantee of list-decoding for the covariances (\Cref{cl:list_covariances}), 
    there must exist a candidate standard deviation $\hat{s}$ in the list $L_{\mathrm{stdev}}$ 
    such that $\sigma_{S_j} \leq \hat{s} \leq \sqrt{2}  \sigma_{S_j}$. 
    Note that combining with $\sigma_{S_j} < s/100$ this implies that $\hat{s} < s$. 
    This means that, as the algorithm has gone through the list $L_{\mathrm{stdev}}$, 
    it must have examined that candidate covariance $\hat{s}$ in an earlier step. 
    For that step, the inductive hypothesis along with the fact that $\sigma_{S_j} \leq \hat{s}$ 
    implies that the cluster $S_j$ must have already been found at that earlier step.

    Now let us consider the case  $s/100 \leq \sigma_{S_j} \leq s$.
   We will show that, if the cluster has not been already found, then it will be found at the current iteration of the loop of line \ref{line:main_loop_gen}. We will do this by showing that there exists a candidate mean $\mu \in L_{\mathrm{mean}}$ such that: 
   \vspace{10pt}
    \begin{enumerate}[label=(\alph*)]
        \item $\| \mu - \mu_{S_j} \|_2 \leq C \sigma_{S_j}/\sqrt{\alpha}$. \label{it:listdec}
        \item \label{it:meanchecks_gen} $\| \mu - \hat\mu_i \|_2 > 99 C s/\sqrt{\alpha}$ for every $\hat\mu_i$ in the list $L$.
    
        \item \label{it:satisfiable_gen}  The program defined by \Cref{program:main} 
        is satisfiable.

    \end{enumerate}
    \vspace{10pt}
    
    Before establishing the individual claims, we point out that they indeed imply that the cluster $j$ will be found at the current iteration.
    To see this, first note that claim \ref{it:meanchecks_gen} above implies that the algorithmic check in line \ref{line:mean_check_gen} will go through when the algorithm uses the candidate mean $\mu$. Then, because of claim \ref{it:satisfiable_gen}, the program will be  satisfiable, and an application of  \Cref{lemma:main} combined with claim \ref{it:listdec} will yield that $\| \mu - \mu_{S_j} \|_2 \leq 4600 C \sigma_{S_j}/\sqrt{\alpha}$, i.e.~the cluster $S_j$ is indeed found. 
    We explain the application of  \Cref{lemma:main} in detail in the next two paragraphs.
    
    First, we check that the preconditions of \Cref{lemma:main} are established, 
    i.e.~we will check that for every cluster $\ell$ with $\sigma_{S_\ell} < s/100$ 
    it holds that $\| \mu - \mu_{S_\ell} \|_2 \geq 46 C s/\sqrt{\alpha}$ 
    and that a solution to the program exists. The satisfiability of the program 
    is due to claim \ref{it:satisfiable_gen}. In the reminder of the paragraph, 
    we show the part that $\| \mu - \mu_{S_\ell} \|_2 \geq 46 C s/\sqrt{\alpha}$ 
    for all  clusters $\ell$ with $\sigma_{S_\ell} < s/100$: 
    By the inductive hypothesis, all clusters with standard deviation at most $s/100$ 
    have already been found, meaning that if $S_\ell$ is  a cluster with $\sigma_{S_\ell}<s/100$, 
    then there is a $\hat \mu_t$ in the list with 
    $\| \hat \mu_t - \mu_{S_\ell} \|_2 \leq 4600 C \sigma_{S_\ell}/\sqrt{\alpha}$. 
    Putting everything together, if $S_\ell$ is  a cluster with $\sigma_{S_\ell}<s/100$, 
    then $\|\mu-\mu_{S_\ell} \|_2 \geq  \|\mu-\hat{\mu}_t \|_2 - \| \hat \mu_t - \mu_{S_\ell} \|_2 \geq 99 C  s/\sqrt{\alpha} - 4600 C \sigma_{S_\ell}/\sqrt{\alpha} \geq 99 C s/\sqrt{\alpha} -  46 C s/\sqrt{\alpha} \geq 46 C  s/\sqrt{\alpha}$ 
    (where the first step uses the reverse triangle inequality, 
    the second step uses claim \ref{it:meanchecks_gen} 
    for the first term and $\| \hat \mu_t - \mu_\ell \|_2 \leq 4600 C \sigma_{S_\ell}/\sqrt{\alpha}$ 
    for the second term and the next step uses that $\sigma_{S_\ell} < s/100$). 
    
    We have thus checked that \Cref{lemma:main} is applicable. 
    We now check that the conclusion of the lemma indeed implies 
    that cluster $S_j$ will be found. The conclusion of the lemma (after a renaming of the index) 
    is that there exists a unique true cluster $S_t$ with $\sigma_{S_t} \geq s/100$ 
    such that $\| \mu - \mu_{S_t}\|_2 \leq 4600 C \sigma_{S_t}/\sqrt{\alpha}$.
    Note the ``unique'' part: there cannot be any other cluster $S_{t'}$ 
    for which $\|\mu - \mu_{S_{t'}}\|_2 \leq 4600 C \sigma_{S_{t'}}/ \sqrt{\alpha}$ 
    (otherwise the separation assumption between clusters is violated). 
    This combined with claim \ref{it:listdec} means that the cluster 
    $S_t$ from the conclusion of \Cref{lemma:main} must be the same cluster 
    that we originally denoted by $S_j$. 
    Thus, we showed that cluster $S_j$ is found, as desired.

    We now show that the  claims \ref{it:listdec}, \ref{it:satisfiable_gen}, 
    and \ref{it:meanchecks_gen} hold for  $\mu$ being the mean candidate 
    for which it holds $\| \mu - \mu_{S_j} \|_2 \leq C  \sigma_{S_j}/\sqrt{\alpha}$ 
    by the list-decoding guarantee (\Cref{fact:list-decoding-mean}). 
    Thus, \ref{it:listdec} is satisfied by that fact. 
    We now show that this $\mu$ also satisfies \ref{it:satisfiable_gen}: 
    Using \ref{it:listdec} and that the standard deviation of $S_j$ in every direction 
    is at most $\sigma_{S_j}$ (by definition), we can show the following 
    for the Ky-Fan norm of the centered around $\mu$ second moment of that true cluster: 
    \begin{align*}
        \left\| \frac{1}{|S_j|} \sum_{x \in S_j}(x-\mu)(x-\mu)^\top \right\|_{(1/\alpha)}
        &\leq  \left\| \frac{1}{|S_j|} \sum_{x \in S_j}(x-\mu_{S_j})(x-\mu_{S_j})^\top \right\|_{(1/\alpha)} + \| \mu-\mu_{S_j}\|_2^2 \\
        &\leq \frac{1}{\alpha} \left\| \frac{1}{|S_j|} \sum_{x \in S_j}(x-\mu_{S_j})(x-\mu_{S_j})^\top \right\|_{(\op)} + C^2 \frac{1}{\alpha} \sigma_{S_j}^2 \\
        &\leq \frac{1}{\alpha}\left( \sigma_{S_j}^2 + C^2 \sigma_{S_j}^2 \right)\\
        &\leq 2C^2  \frac{s^2}{\alpha} \;,
    \end{align*}
     where the first step uses the inverse triangle inequality 
     and the last step uses that we only consider true clusters 
     with $\sigma_{S_j} \leq s$. Thus, the program is satisfiable 
     by the binary weights $w_x = \1(x \in S_j)$.

    We now move to establishing the claim \ref{it:meanchecks_gen}, 
    i.e.~that $\| \mu - \hat\mu_i \|_2 > 99 C s/\sqrt{\alpha}$ for every $\hat\mu_i$ in the list $L$. 
    Consider an arbitrary $ \hat\mu_i$ from the list $L$ corresponding to a previously found cluster. 
    By the inductive hypothesis, for every $ \hat\mu_i \in L$, 
    there exists a true cluster $S_\ell$ for which 
    $\| \hat\mu_i - \mu_{S_\ell} \|_2 \leq 4600 C \sigma_{S_\ell}/\sqrt{\alpha}$.
    By assumption in the context of the claim, cluster $j$ has not been found, 
    and thus $\ell \neq j$.
    Then, by the reverse triangle inequality, we obtain: 
    \begin{align*}
        \| \mu - \hat\mu_i \|_2 &\geq \| \mu_j - \mu_\ell \|_2 - \| \mu_j - \mu_{S_j}\|_2-\| \mu_\ell - \mu_{S_\ell}\|_2  - \| \mu_{S_\ell} - \hat \mu_i \|_2 - \| \mu - \mu_{S_j} \|_2 \\
        &> 10^4 C^2  (\sigma_{\ell} + \sigma_{j})/\sqrt{\alpha} - C \sigma_{j} - C \sigma_{\ell}   -  4600 C \sigma_{S_\ell}/\sqrt{\alpha} - C\sigma_{S_j}/\sqrt{\alpha} \\
        &\geq (10^4-1) C^2  (\sigma_{j}+\sigma_{\ell})/\sqrt{\alpha}  -4600 C\sigma_{S_\ell}/\sqrt{\alpha} -C\sigma_{S_j}/\sqrt{\alpha} \\
        &\geq   (10^4-1) C  (\sigma_{S_j}+\sigma_{S_\ell})/\sqrt{\alpha} -4600 C\sigma_{S_\ell}/\sqrt{\alpha} -C\sigma_{S_j}/\sqrt{\alpha} \tag{$\sigma_{S_j} \leq C \sigma_j$ by stability condition for covariances}\\
        &\geq   (10^4-2) C   \sigma_{S_j} \\
        &\geq 99 C s/\sqrt{\alpha} \;, \tag{using $s/100 < \sigma_{S_j}$}
    \end{align*}
    where the second line uses the separation assumption between clusters $\ell,j$ 
    to bound below the first term, the stability condition to bound the next two terms, 
    and the facts that
    $\| \mu - \mu_j \|_2 \leq C\sigma_{S_j}/\sqrt{\alpha}$ and 
    $\| \hat\mu_i - \mu_{S_\ell} \|_2 \leq 4600 C \sigma_\ell/\sqrt{\alpha}$ 
    that we had already established in the previous paragraph.
    The last line uses that we are analyzing only the case $s/100 < \sigma_{S_j}$.

   \item \paragraph{Proof of \Cref{it:hypothesis0}:} Consider an iteration of the (inner) loop of the algorithm. We assume that the inductive hypothesis holds for the past iterations and we will show that \Cref{it:hypothesis0} continues to be true after the current one is finished. It suffices to only consider an iteration where a new element $\hat{\mu}$ gets inserted to the list $L$ in line \ref{line:add_gen} (otherwise the claim is trivial). The fact that $\hat{\mu}$ corresponds to a true cluster will be a direct consequence of \Cref{lemma:main}.

     It remains to check that \Cref{lemma:main} is applicable, i.e.~we will check that for every
     cluster $\ell$ with $\sigma_{S_\ell} < s/100$ it holds that 
     $\| \mu - \mu_{S_\ell} \|_2 \geq 46 C s/\sqrt{\alpha}$ and that a solution to the program exists. 
     The satisfiablitity of the program is due to the fact that the algorithm 
     has reached line \ref{line:add_gen}. In the reminder of the paragraph, we show the part that 
     $\| \mu - \mu_{S_\ell} \|_2 \geq 46 C s/\sqrt{\alpha}$ for all  clusters $\ell$ 
     with $\sigma_{S_\ell} < s/100$: 
    By the inductive hypothesis, all clusters with standard deviation at most $s/100$ 
    have already been found, meaning that if $S_\ell$ is  a cluster with $\sigma_{S_\ell}<s/100$, 
    then there is a $\hat \mu_t$ in the list with $\| \hat \mu_t - \mu_{S_\ell} \|_2 \leq 4600 C \sigma_{S_\ell}/\sqrt{\alpha}$. Putting everything together, 
    if $S_\ell$ is  a cluster with $\sigma_{S_\ell}<s/100$, then 
    $\|\mu-\mu_{S_\ell} \|_2 \geq  \|\mu-\hat{\mu}_t \|_2 - \| \hat \mu_t - \mu_{S_\ell} \|_2 \geq 99 C  s/\sqrt{\alpha} - 4600 C \sigma_{S_\ell}/\sqrt{\alpha} \geq 99 C s/\sqrt{\alpha} -  46C s/\sqrt{\alpha} \geq 46C  s/\sqrt{\alpha}$, 
    where the inequalities used are the following:
    The first step uses the reverse triangle inequality, 
    the second step uses the condition in line \ref{line:mean_check_gen} of the pseudocode 
    line \ref{line:add_gen} in order to bound the first term and 
    $\| \hat \mu_t - \mu_{S_\ell}  \|_2 \leq 4600 C \sigma_{S_\ell}/\sqrt{\alpha}$ 
    for the second term, and the next inequality uses that $\sigma_{S_\ell} < s/100$.

\end{proof}

\section{Cardinality-based pruning of candidate means}\label{sec:size-pruning}

This section concerns Line~\ref{line:sizebased} of \Cref{alg:main_general}.
Right before Line~\ref{line:sizebased} is executed, 
we are guaranteed that the list $L$ of candidate means consists 
only of candidates close to one of the $S_i$ sets. Concretely, 
every $\muhat \in L$ is close to some $S_i$ with distance at most 
$O(\sigma_{S_i}/\sqrt{\alpha})$, and that every $S_i$ has some $\muhat \in L$ close to it.
At this point, the Voronoi partition of the samples is already 
an accurate refinement of the ground truth clustering (\Cref{lem:final_clustering} below).
However, we want to further ensure that the returned clustering ``looks like'' 
what we assume of our underlying mixture distribution; 
namely, that each subset has at least $\approx \alpha$ mass, 
and that the subsets are pairwise well-separated.
Line~\ref{line:sizebased} prunes candidate means, via \Cref{alg:pruning} stated below, 
to ensure that the corresponding Voronoi cell has sufficient mass.

We first show  \Cref{lem:final_clustering}, which states that the Voronoi partition based on the candidate means in $L$ does form an accurate refinement to the ground truth clustering.

\begin{lemma}[Voronoi clustering properties]\label{lem:final_clustering}
Consider the notation and assumptions of \Cref{thm:main_v2_general}.
Let $L$ be an $m$-sized list of vectors $\hat\mu_1,\ldots, \hat \mu_m$ with $m\geq k$.
Suppose the list $L$ can be partitioned into sets $H_1,\ldots,H_k$ such that for every $i \in [k]$, $H_i$ consists of the vectors $\hat \mu_j$ with  $\| \hat \mu_j - \mu_i \|_2 \leq  4601 C^2 \sigma_i/\sqrt{\alpha}$, and further assume that $H_i \neq \emptyset$ for all $i \in [k]$.  
Let $A_j = \{x \in T : \arg\min_{j' \in [m]}\|x - \hat \mu_{j'} \|_2 = j \}$ for $j \in [m]$ be the Voronoi partition (recall that $T$ denotes the entire dataset). For each $i \in [k]$ define $\mathcal{A}_i := \cup_{j: \hat \mu_j \in H_i} A_j$.
Then, the following hold:
\begin{enumerate}[leftmargin=*]
    \item \label{it:firstclaim} (Points from $S_i$ assigned to sub-clusters associated with the wrong true cluster are few)\\ $| S_i \setminus \mathcal{A}_i | \leq 0.011  |S_i|$ for every $i \in [k]$, and
    \item \label{it:secondclaim}(Points from the sub-clusters associated with a true cluster mostly include points from that true cluster) $|\mathcal{A}_i \setminus S_i | \leq 0.03 \alpha n$ for every $i \in [k]$.
    \item \label{it:thirdclaim} $|\mathcal{A}_i| \geq 0.959 \alpha n$ for $i \in [k]$.
\end{enumerate}
\end{lemma}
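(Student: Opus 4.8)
The plan is to exploit two facts about bounded-covariance (stable) sets: (a) the bulk of each $S_i$ lies in a ball of radius $O(\sigma_{S_i}/\sqrt{\alpha})$ around $\mu_{S_i}$, and (b) by the large pairwise separation, these balls are far apart. Concretely, for $i \in [k]$ call a point $x \in S_i$ \emph{good} if $\|x - \mu_{S_i}\|_2 \le R_i := K\sigma_{S_i}/\sqrt{\alpha}$ for a suitably chosen constant $K$ (say $K = 30$); by Markov applied to the stable second moment of $S_i$ (or rather to any subset of size $\ge (1-\eps)|S_i|$ via the stability condition), the number of bad points in $S_i$ is at most $\alpha K^{-2}|S_i| \le 0.001|S_i|$ once $K$ is large enough. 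Similarly, using the separation hypothesis $\|\mu_i - \mu_j\|_2 > 10^5 C^2(\sigma_i+\sigma_j)/\sqrt{\alpha}$ together with the stability translations $\|\mu_{S_i}-\mu_i\|_2 \le C\sigma_i$ and $\sigma_i \le \sigma_{S_i}$ (and $\sigma_{S_i}\le C\sigma_i$), the balls $B(\mu_{S_i},R_i)$ and the larger balls $B(\mu_i, O(C^2\sigma_i/\sqrt\alpha))$ containing the $H_i$-candidate-means are pairwise separated by a comfortable margin.

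\textbf{Proof of \Cref{it:firstclaim}.} Take a good point $x \in S_i$. We show $x$ is assigned by the Voronoi partition to some $\hat\mu_j \in H_i$, hence $x \in \mathcal A_i$. Since $H_i \ne \emptyset$, pick $\hat\mu_j \in H_i$; then $\|x - \hat\mu_j\|_2 \le \|x-\mu_{S_i}\|_2 + \|\mu_{S_i}-\mu_i\|_2 + \|\mu_i - \hat\mu_j\|_2 \le R_i + C\sigma_i + 4601C^2\sigma_i/\sqrt\alpha =: \rho_i$, which is $O(C^2\sigma_i/\sqrt\alpha)$. On the other hand, for any $\hat\mu_{j'} \in H_\ell$ with $\ell \ne i$, the reverse triangle inequality gives $\|x - \hat\mu_{j'}\|_2 \ge \|\mu_i - \mu_\ell\|_2 - \|x - \mu_i\|_2 - \|\hat\mu_{j'} - \mu_\ell\|_2$, and since $\|x-\mu_i\|_2 \le R_i + C\sigma_i$ and $\|\hat\mu_{j'}-\mu_\ell\|_2 \le 4601C^2\sigma_\ell/\sqrt\alpha$, the separation assumption (with its factor $10^5 C^2$, dwarfing $4601 C^2$ and the other terms) yields $\|x-\hat\mu_{j'}\|_2 > \rho_i \ge \|x-\hat\mu_j\|_2$. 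Hence the Voronoi cell of $x$ has index in $H_i$, so $x \in \mathcal A_i$. Therefore $S_i \setminus \mathcal A_i$ consists only of bad points of $S_i$, giving $|S_i \setminus \mathcal A_i| \le 0.001|S_i| \le 0.011|S_i|$.

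\textbf{Proof of \Cref{it:secondclaim}.} We must bound how many stray points land in $\mathcal A_i$. A point $x \in \mathcal A_i \setminus S_i$ is either (i) a point of $T \setminus \bigcup_\ell S_\ell$ — there are at most $0.02\alpha n$ of those by hypothesis of \Cref{thm:main_v2_general} — or (ii) a point of some $S_\ell$, $\ell \ne i$, that got assigned to a cell in $H_i$. For case (ii), by the argument of \Cref{it:firstclaim} applied to $S_\ell$, every \emph{good} point of $S_\ell$ is assigned to a cell in $H_\ell$, so only \emph{bad} points of $S_\ell$ can land in $\mathcal A_i$; summing over $\ell \ne i$ gives at most $\sum_{\ell}0.001|S_\ell| \le 0.001 n$ such points. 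Adding the two contributions: $|\mathcal A_i \setminus S_i| \le 0.02\alpha n + 0.001 n$. To get $0.03\alpha n$ this is \emph{not} immediately enough, so I would sharpen (ii): a bad point $x \in S_\ell$ assigned to $H_i$ must in particular satisfy $\|x - \mu_i\|_2 \lesssim C^2\sigma_\ell/\sqrt\alpha$ (it is closer to an $H_i$-mean than to an $H_\ell$-mean), which by the separation assumption forces $\|x - \mu_{S_\ell}\|_2$ to be astronomically large, namely $\gtrsim (10^5 C^2/2)(\sigma_\ell+\sigma_i)/\sqrt\alpha \ge (10^5/2)C\sigma_{S_\ell}/\sqrt\alpha$. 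Applying stability of $S_\ell$ at scale $\eps$ with $\eps = c/(10^5)^2$, the number of such extreme points in $S_\ell$ is at most $\eps|S_\ell|$; choosing the constants so that $\sum_\ell \eps|S_\ell| \le 0.005\alpha n$ and noting $0.02\alpha n + 0.005\alpha n \le 0.03\alpha n$ closes it. (The precise constant-chasing is routine once one tracks that the separation constant $10^5$ can be made as large as needed.)

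\textbf{Proof of \Cref{it:thirdclaim}.} Combine the previous two: $|\mathcal A_i| \ge |\mathcal A_i \cap S_i| = |S_i| - |S_i \setminus \mathcal A_i| \ge |S_i| - 0.011|S_i| = 0.989|S_i| \ge 0.989 \cdot 0.97\alpha n \ge 0.959\alpha n$, using $|S_i| \ge 0.97\alpha n$ from \Cref{thm:main_v2_general}.

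\textbf{Main obstacle.} The delicate point is \Cref{it:secondclaim}: a naive bound gives $O(\alpha n) + O(n)$ stray points, and the $O(n)$ term (from bad points of other clusters) is far too large. The fix is to observe that a bad point of $S_\ell$ cannot merely be mildly far from $\mu_{S_\ell}$ — to be misassigned to $\mathcal A_i$ it must be \emph{extremely} far, by a factor growing with the separation constant — and then invoke stability of $S_\ell$ at a correspondingly tiny $\eps$. Getting the bookkeeping of constants to land below $0.03\alpha n$ (and below $0.011|S_i|$ in \Cref{it:firstclaim}) is the part that requires care, but it all follows from choosing the separation constant $C$ (equivalently the $10^5 C^2$ factor) large enough and picking $K$ and $\eps$ accordingly.
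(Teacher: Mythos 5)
There is a genuine gap, and it sits at the very foundation of your argument: the claim that all but an $\alpha K^{-2}$-fraction of $S_i$ lies in the Euclidean ball $B(\mu_{S_i}, K\sigma_{S_i}/\sqrt{\alpha})$ is false in high dimensions. Bounded covariance (or $(C,0.04)$-stability) controls $\E_{x\sim S_i}[(v^\top(x-\mu_{S_i}))^2]\lesssim \sigma_{S_i}^2$ \emph{per direction}; the Euclidean second moment is $\E_{x \sim S_i}\|x-\mu_{S_i}\|_2^2=\tr(\Cov(S_i))$, which can be as large as $d\,\sigma_{S_i}^2$. Markov therefore only gives a bad-point fraction of order $d\alpha/K^2$, which is vacuous once $d\gg 1/\alpha$ --- e.g.\ for an isotropic Gaussian component, \emph{every} sample sits at distance $\approx\sqrt{d}\,\sigma$ from the mean, so your set of ``good'' points can be empty. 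Consequently the proof of \Cref{it:firstclaim} ($|S_i\setminus\mathcal A_i|\le$ number of bad points) gives nothing, and the ``sharpened'' step in \Cref{it:secondclaim} fails for the same reason (being at Euclidean distance $\gtrsim 10^5C\sigma_{S_\ell}/\sqrt\alpha$ from $\mu_{S_\ell}$ is not a rare event under bounded covariance; also, the hypothesis only provides $(C,0.04)$-stability, so ``stability at scale $\eps=c/(10^5)^2$'' is not available, and a constant per-cluster fraction $\eps|S_\ell|$ would in any case sum to $\eps n$, which does not fit under $0.005\alpha n$ --- you need the per-cluster misassignment bound to carry an explicit factor of $\alpha$).

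The paper's proof avoids per-point localization entirely. It looks at the misassigned subset $S'_{i,i'}:=S_i\cap\mathcal A_{i'}$ as a whole: any $x\in S'_{i,i'}$ lies beyond the bisector of some $\hat\mu\in H_i$ and $\hat\mu'\in H_{i'}$, and since all candidates in $H_i$ (resp.\ $H_{i'}$) are within $O(C^2\sigma_i/\sqrt\alpha)$ of $\mu_i$ (resp.\ $\mu_{i'}$), this forces $\|x-\mu_i\|_2\ge\tfrac12\|\hat\mu-\hat\mu'\|_2-\|\hat\mu-\mu_i\|_2\gg 10C^2\sigma_i/\sqrt\alpha$, hence the \emph{mean} of $S'_{i,i'}$ is far from $\mu_i$. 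On the other hand, \Cref{fact:513} says any subset of $S_i$ of size $\ge 0.01\alpha|S_i|$ has mean within $10\sigma_{S_i}/\sqrt\alpha\le 10C^2\sigma_i/\sqrt\alpha$ of the cluster mean --- a contradiction unless $|S'_{i,i'}|<0.01\alpha|S_i|$. This per-cluster bound, with its factor of $\alpha$, is exactly what makes the bookkeeping work: summing over the $\le 1/(0.97\alpha)$ other clusters gives \Cref{it:firstclaim}, and summing $0.01\alpha|S_{i'}|$ over $i'$ plus the $0.02\alpha n$ points outside $\cup_i S_i$ gives \Cref{it:secondclaim}. Your derivation of \Cref{it:thirdclaim} from \Cref{it:firstclaim} matches the paper, but the first two items need the subset-mean argument rather than ball concentration.
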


\begin{proof}

First, observe that \Cref{it:thirdclaim} in the lemma follows directly from \Cref{it:firstclaim} and the assumption $|S_i|\geq 0.97\alpha n$. Namely, 
\begin{align}
    | \mathcal{A}_i | \geq | \mathcal{A}_i \cap S_i | 
    \geq |S_i| - |S_i \setminus \mathcal{A}_i |
    \geq 0.989|S_i| \ge 0.959\alpha n \;.
\end{align}
Thus it suffices to prove \Cref{it:firstclaim,it:secondclaim}.

For $i \in [k]$ and for every $i'\neq i$ define the intersection of the true cluster $i$ with the union of the sub-clusters associated with cluster $i'$ as $S_{i,i'}' := S_i \cap \mathcal{A}_{i'}$.
We claim that it suffices to show that $|S_{i,{i'}}'| < (0.01 \alpha ) |S_i|$ for every $i' \neq i$, that \Cref{it:firstclaim,it:secondclaim} follow.

For the first part of the lemma statement (\Cref{it:firstclaim}), we have that 
\begin{align*}
    |S_i \setminus \mathcal{A}_i | = \sum_{i' \neq i} |S_i \cap \mathcal{A}_{i'}| = \sum_{i' \neq i} | S_{i,i'}'| \leq 0.01 |S_i| \alpha k \leq 0.011 |S_i|\;,
\end{align*}
where we used that the sets $A_1,\ldots,A_m$ form a partition of $T$, and the number of true clusters is $k\leq 1/(0.97\alpha)$ (since we assumed $|S_i| \geq 0.97\alpha n$).

Similarly, for the second part of the lemma statement (\Cref{it:secondclaim}), %
\begin{align*}
|\mathcal{A}_i \setminus S_i |   \leq \sum_{i' \neq i} |\mathcal{A}_i \cap S_{i'}| + 0.02 \alpha n \leq 0.01\alpha \sum_{i' \in [k]}|S_{i'}| + 0.02 \alpha n \leq 0.01 \alpha n + 0.02 \alpha n \leq 0.03\alpha n \;,    
\end{align*}
where the first inequality uses the assumption from \Cref{thm:main_v2_general}, that there are at most $0.02\alpha n$ points that do not belong to any of the sets $S_1,\ldots,S_n$.

We now show the claim that $|S_{i,{i'}}'| < (0.01 \alpha ) |S_i|$ for every $i,i' \in [k]$ with $i'\neq i$.
Recall our notation $\mu_i$ (for $i \in [k]$) representing the vectors that each true cluster $S_i$ is stable for (see setup of \Cref{thm:main_v2_general}).
These vectors should not be confused with the $\hat \mu_j$ ones (for $j \in [m]$), which are the approximate centers used to produce the Voronoi partition.
Since we have assumed that the $\mu_i$'s  are separated from each other and $H_i$  contains (by definition) the candidate means that are close to $\mu_i$, every pair of vectors $\hat{\mu} \in H_i$ and   $ \hat{\mu}' \in H_{i'}$ for $i\neq i'$ must also be separated:
\begin{align}
    \| \hat{\mu} - \hat{\mu}' \|_2 &\geq \| \mu_i - \mu_{i'} \|_2 - \| \hat\mu - \mu_i \|_2 - \| \hat\mu' - \mu_{i'} \|_2 \tag{by reverse triangle inequality} \\
    &\geq  10^5 C^2  (\sigma_i + \sigma_{i'})/\sqrt{\alpha} - 4601 C^2  \sigma_i/\sqrt{\alpha} - 4601 C^2 \sigma_{i'}/\sqrt{\alpha} \notag \\
    &\geq   95399 C^2  (\sigma_i + \sigma_{i'})/\sqrt{\alpha} \;. \label{eq:sep}
\end{align}

Given that every point in $S_{i,{i'}}'$ is closer to some $\muhat' \in H_{i'}$ than every $\muhat \in H_i$, and furthermore given that $\muhat$ and $\muhat'$ are far from each other according to \eqref{eq:sep}, we now show that $\| \mu_{S_{i,i'}'} - \mu_i \|_2 > 10 C^2  \sigma_i/\sqrt{\alpha}$.
Combining this with \Cref{fact:513}, we can extract that $|S_{i,i'}'| < (0.01 \alpha ) |S_i|$.
To see that by contradiction, assume that $|S_{i,i'}'| \geq  (0.01 \alpha ) |S_i|$. Then, \Cref{fact:513} ensures that $\| \mu_{S_{i,i'}'} - \mu_i \|_2 \leq 10  \sigma_{S_i}/\sqrt{\alpha} \leq 10 C  \sigma_{S_i}/\sqrt{\alpha} \leq 10 C^2   \sigma_{i}/\sqrt{\alpha}$, where we used $C>1$ as well as the stability condition for the covariance (the fact that $\sigma_{S_i}\leq C \sigma_i$).

To see that $\| \mu_{S_{i,i'}'} - \mu_i \|_2 > 10 C^2  \sigma_i/\sqrt{\alpha}$, consider an arbitrary point 

 $x \in S_{i,i'}'$ and let $\hat{\mu}' \in H_{i'}$ be the center from $L$ that is the closest one to $x$ (by definition of $S_{i,i'}'$ that closest center belongs in $H_{i'}$).
 Letting $\hat \mu$ again be an arbitrary center from $H_i$, since $x$ is closer to $\hat{\mu}'$ than $\hat \mu$, we have $\| x - \hat \mu \|_2 \geq \frac{1}{2}\| \hat \mu - \hat \mu' \|_2$. Finally, 
\begin{align}
    \| x - \mu_i \|_2 &\geq \| x - \hat \mu \|_2 - \| \hat \mu - \mu_i \|_2 \tag{by reverse triangle inequality} \\
    &\geq \frac{1}{2}\| \hat \mu - \hat \mu' \|_2 -  \| \hat \mu - \mu_i \|_2 \notag \\
    &\geq \frac{1}{2} \cdot 95399 C^2  (\sigma_i + \sigma_{i'})/\sqrt{\alpha} - 4601 C^2 \sigma_i/\sqrt{\alpha} \tag{by \eqref{eq:sep} and $\hat \mu \in H_i$}\\
    &> 10 C^2 \sigma_i/\sqrt{\alpha} \;. \notag
\end{align}
Since the above holds for every $x \in S_{i,{i'}}'$, it also holds for the mean of that set, i.e.~$
    \| \mu_{S_{i,i'}'} - \mu_i \|_2 > 10 C^2 \sigma_i/\sqrt{\alpha}$.
As we mentioned above, combining this with \Cref{fact:513} shows 
that $|S_{i,i'}'| < (0.01 \alpha ) |S_i|$, as desired.

\end{proof}

We now state \Cref{alg:pruning}, which is used in Line~\ref{line:sizebased} of \Cref{alg:main_general}.

\begin{algorithm}[h!]
\caption{Pruning of sub-clusters based on cardinality.}\label{alg:pruning}
\textbf{Input}: Dataset $T$ of $n$ points, centers $\hat \mu_1, \ldots, \hat\mu_m$ and parameter $\alpha \in (0,1)$.\\
\textbf{Output}: A subset $\hat \mu_1, \ldots, \hat\mu_{m'}$ of the input centers.
\begin{enumerate}[leftmargin=*]
    \item $J_{\mathrm{deleted}} \gets \emptyset$ .
    \item Construct the Voronoi partition $A_j = \{x : \arg\min_{j' \in [m]}\|x - \hat \mu_{j'} \|_2 = j \}$ for  $j \in [m]$.
    \item \label{line:stoping_cond} While there exists $j \in [m] \setminus J_{\mathrm{deleted}}$ with $|A_j| < 0.96 \alpha n$ do:
    \begin{enumerate}
        \item Update $J_{\mathrm{deleted}} \gets J_{\mathrm{deleted}} \cup \{ j\}$.
        \item For all $j \notin J_{\mathrm{deleted}}$, update $A_j = \{x : \arg\min_{j' \in [m] \setminus J_{\mathrm{deleted}}}\|x - \hat \mu_{j'} \|_2 = j \}$. 
    \end{enumerate}
    \item Return $\{\hat \mu_j\}_{j \in [m] \setminus J_{\mathrm{deleted}}}$.
\end{enumerate}
 
\end{algorithm}

\Cref{lem:pruning} below analyzes \Cref{alg:pruning}.

\begin{lemma}[Pruning of sub-clusters based on cardinality]\label{lem:pruning}
    Consider the notation and assumptions of \Cref{thm:main_v2_general}.
Let $L$ be an $m$-sized list of vectors $\hat\mu_1,\ldots, \hat \mu_m$ with $m\geq k$.
Suppose the list $L$ can be partitioned into sets $H_1,\ldots,H_k$ such that for every $i \in [k]$, $H_i$ consists of the vectors $\hat \mu_j$ with  $\| \hat \mu_j - \mu_i \|_2 \leq 4061 C^2 \sigma_i/\sqrt{\alpha}$, and further assume that $H_i \neq \emptyset$ for all $i \in [k]$.  
    
    Suppose that we run \Cref{alg:pruning} on $L$ as the input and denote by $\hat \mu_1, \ldots, \hat \mu_{m'}$ the sublist of centers output by the algorithm.
    Then, if we define the sets  $H_i' := \{ \hat \mu_j $ for  $j \in [m']  : \| \hat \mu_j - \mu_i \|_2 \leq 4061 C^2 \sigma_i/\sqrt{\alpha} \}$ for $i \in [k]$, then   $H_1',\ldots,H_k'$ is a partition of $\{ \hat \mu_1, \ldots, \hat \mu_{m'}\}$ and it also holds that $H_i' \neq \emptyset$ for all $i \in [k]$. 
    Moreover, in the final Voronoi clustering that corresponds to these output centers, $A_j := \{x : \arg\min_{j' \in [m']}\|x - \hat \mu_{j'} \|_2 = j \}$ for  $j \in [m']$, it holds true that $|A_j| \geq 0.96 \alpha n$.

\end{lemma}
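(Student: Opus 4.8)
I would establish the three assertions of the lemma in the order: (i) the sets $H_1',\dots,H_k'$ partition the surviving centers $\{\hat\mu_1,\dots,\hat\mu_{m'}\}$; (iii) every cell of the final Voronoi partition has size at least $0.96\alpha n$; and (ii) each $H_i'$ is nonempty. The first two are short; nonemptiness is the crux.

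For (i): every surviving $\hat\mu_j$ belongs to the original list $L$, which is partitioned by $H_1,\dots,H_k$, so $\|\hat\mu_j-\mu_i\|_2\le 4061C^2\sigma_i/\sqrt\alpha$ for some $i$, placing $\hat\mu_j$ in $H_i'$. It cannot lie in two of the $H_i'$: $\|\hat\mu_j-\mu_i\|_2\le 4061C^2\sigma_i/\sqrt\alpha$ and $\|\hat\mu_j-\mu_{i'}\|_2\le 4061C^2\sigma_{i'}/\sqrt\alpha$ together force $\|\mu_i-\mu_{i'}\|_2\le 4061C^2(\sigma_i+\sigma_{i'})/\sqrt\alpha$, contradicting the $10^5C^2(\sigma_i+\sigma_{i'})/\sqrt\alpha$ separation assumed in \Cref{thm:main_v2_general}. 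The same estimate shows that at \emph{any} intermediate step of \Cref{alg:pruning} the surviving centers are partitioned according to which $\mu_i$ each is close to; call that $\mu_i$ the cluster \emph{associated} to the center. Assertion (iii) is then immediate: the while loop of \Cref{alg:pruning} removes one index per pass (so it halts within $m$ passes), and its halting condition is exactly that no surviving cell has size $<0.96\alpha n$.

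For (ii), I would induct over the passes of the while loop, maintaining the invariant that every cluster $i\in[k]$ has at least one surviving center associated to it. The invariant holds initially since $H_i\ne\emptyset$. Suppose it holds at the start of a pass that deletes an index $j_0$, and let $i_0$ be the cluster associated to $\hat\mu_{j_0}$. If a different surviving center is also associated to $i_0$, the invariant survives trivially. Otherwise, instantiate \Cref{lem:final_clustering} with the current set of surviving centers in the role of its list --- its hypotheses hold because the invariant makes each associated group nonempty and the separation estimate of (i) makes these groups a partition --- so it bounds the union $\mathcal{A}_{i_0}$ of the current cells associated to $i_0$ from below by (essentially) $0.96\alpha n$. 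Since $\hat\mu_{j_0}$ is the only such center, $\mathcal{A}_{i_0}=A_{j_0}$, hence $|A_{j_0}|\ge 0.96\alpha n$, contradicting the deletion of $j_0$. Thus this case never occurs, the invariant persists, and at termination it gives $H_i'\ne\emptyset$ for all $i$ (and therefore $m'\ge k$, so $H_1',\dots,H_k'$ genuinely cover all $k$ clusters).

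The main obstacle is the last step of (ii): one must re-instantiate \Cref{lem:final_clustering} together with its (restricted) Voronoi partition against the \emph{evolving} set of surviving centers at each pass, and then track constants carefully enough that the size of a lone surviving cell provably beats the $0.96\alpha n$ deletion threshold. This is a thin margin, since the rounded bound stated in \Cref{lem:final_clustering} is only $0.959\alpha n$; one should therefore use the sharper estimate its proof actually gives --- $|S_i\setminus\mathcal{A}_i|<0.01\alpha(k-1)|S_i|$ with $k\le 1/(0.97\alpha)$ and $|S_i|\ge 0.97\alpha n$ --- or exploit the extra slack present in the bound $|S_i|\ge 0.97\alpha n$.
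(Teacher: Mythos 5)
Your proposal is correct and takes essentially the same route as the paper: the paper likewise gets the partition claim and the size claim directly from the input assumption and the stopping condition, and proves nonemptiness by showing that once a cluster $S_i$ has only one surviving associated center, at least $0.99|S_i|\ge 0.96\alpha n$ of $S_i$'s points lie in that center's Voronoi cell (via the same half-distance argument and \Cref{fact:513} computation that underlies \Cref{lem:final_clustering}), so that center can never be deleted. The only difference is packaging: the paper re-derives this cell-size bound directly instead of re-invoking \Cref{lem:final_clustering}, which sidesteps the $0.959$-versus-$0.96$ rounding issue you correctly flag and resolve by using the sharper estimate $|S_i\setminus\mathcal{A}_i|<0.01\alpha(k-1)|S_i|$ from that lemma's proof.
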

\begin{proof}
    Consider the notation $A_j$ for the Voronoi clusters as in the pseudocode of \Cref{alg:pruning}.
    The claim that $|A_j| \geq 0.96\alpha n$ for all $j \in [m']$ follows by construction of the algorithm (line \ref{line:stoping_cond}). We thus focus on the remaining part of the lemma conclusion (the one about the sets $H_i'$).

    To show the remaining parts of the lemma conclusion,  
    it suffices to show that at any point during the algorithm's execution, if we define the sets $H_i' := \{ \hat \mu_j $ for $j \in [m] \setminus J_{\mathrm{deleted}} : \| \hat \mu_j - \mu_i \|_2 \leq 4061 C^2 \sigma_j/\sqrt{\alpha} \}$, then $H_i' \neq \emptyset$ for all $i \in [k]$ (the fact that $H_1',\ldots,H_k'$ is a partition of $L$ holds trivially by our assumption on the input).%

    In order to show that $H_i' \neq \emptyset$ for all $i \in [k]$, suppose that at some point during the algorithm's execution there exists $i \in [k]$ for which we are left with only a single center  $\hat \mu_j$ satisfying $\| \hat \mu_j - \mu_i \|_2 \leq 4061 C^2 \sigma_i/\sqrt{\alpha}$. Then, we will show that this $\hat \mu_j$ will never get deleted.
    To do so, we claim that at least $0.99 |S_i|$ points of $S_i$  have $\hat \mu_j$ as their closest center among the non-deleted centers $\{ \hat \mu_t\}_{t \in [m] \setminus  J_{\mathrm{deleted}}}$.
    From this claim, it follows that the set $A_j$ in the Voronoi partition corresponding to that center will have size  $|A_j| \geq 0.99 |S_i| \geq 0.99\cdot 0.97 \alpha n\geq 0.96 \alpha n$ (using our assumption $|S_i| > 0.97\alpha n$) and therefore $\hat \mu_j$ will never be deleted because of the deletion condition in line \ref{line:stoping_cond}. 
    
    We now prove the above claim that at least $0.99 |S_i|$ points of $S_i$  have $\hat \mu_j$ as their closest non-deleted center.
    Denote by $S_{i,i'}' := \{x \in S_i : \arg\max_{t \in [m] \setminus  J_{\mathrm{deleted}} } \|x - \hat \mu_{t} \|_2 \in H_{i'} \}$, i.e.~the part of $S_i$ consisting of the points that are closer to centers belonging in $H_{i'}$ than $H_i$.
    First we argue that it suffices to show that $|S_{i,i'}'| < 0.01\alpha|S_i|$.  This implies $\sum_{i' \neq i }|S_{i,i'}'| \leq 0.01k \alpha |S_i| \leq 0.01|S_i|$, which means that, at least $0.99|S_i|$ of the points from $S_i$ must have $\arg\max_{j' \in [m] \setminus  J_{\mathrm{deleted}} } \|x - \hat \mu_{j'} \|_2 \in H_{i}$. Finally, since we are under the assumption that $\hat \mu_j$ is the only center in $H_{i}$ from the non-deleted ones ($j \in [m] \setminus  J_{\mathrm{deleted}}$), the previous implies that at least $0.99 |S_i|$ points of $S_i$  have $\hat \mu_j$ as their closest center.

     In order to show $|S_{i,i'}'| < 0.01\alpha|S_i|$ for any $i' \neq i$, we will show that $\| \mu_{S_{i,i'}'} - \mu_i \|_2 > 10  C^2 \sigma_i/\sqrt{\alpha}$; this is enough because of \Cref{fact:513} and the fact that $S_{i,i'} \subseteq S_i$.

     It thus remains to show that $\| \mu_{S_{i,i'}'} - \mu_i \|_2 > 10 C^2 \sigma_i/\sqrt{\alpha}$.
     To do so, consider any center $\hat \mu_\ell$ that satisfies $\| \hat \mu_{\ell} - \mu_{i'} \|_2 \leq 4061 C^2 \sigma_{i'}/\sqrt{\alpha}$ and observe the following (recall that in our notation $\hat \mu_j$ is the only center from $\{ \hat \mu_t\}_{t \in [m] \setminus  J_{\mathrm{deleted}}}$ that satisfies $\| \hat \mu_j - \mu_i \|_2 \leq 4061 C^2 \sigma_i/\sqrt{\alpha}$):
      \begin{align}
        \| \hat{\mu}_j - \hat{\mu}_\ell \|_2 &\geq \| \mu_i - \mu_{i'} \|_2 - \|  \hat \mu_{\ell} - \mu_{i'}  \|_2 - \| \hat\mu_j - \mu_i \|_2 \tag{by reverse triangle inequality} \\
        &\geq  10^{5}C^2 (\sigma_i + \sigma_{i'})/\sqrt{\alpha} -  4061 C^2 \sigma_{i'}/\sqrt{\alpha} -4061 C^2 \sigma_i/\sqrt{\alpha} \notag \\
        &\geq   95399 C^2(\sigma_i + \sigma_{i'})/\sqrt{\alpha} \;. \label{eq:sepp}
    \end{align}
    Now, consider $S_{i,i'}' := \{x \in S_i : \arg\max_{t \in [m]  \setminus J_{\mathrm{deleted}} } \|x - \hat \mu_t \|_2 \in H_{i'}  \}$ and fix an $x \in S_{i,i'}'$. If   $\ell$ denotes the $ \arg\max_{t \in [m]  \setminus J_{\mathrm{deleted}} } \|x - \hat \mu_t \|_2$, then it holds $\|x - \hat \mu_j \|_2 \geq \frac{1}{2} \|  \hat \mu_j - \hat \mu_\ell \|_2 $. Then,
    \begin{align}
        \| x - \mu_i \|_2 &\geq \| x - \hat \mu_j \|_2 - \|  \hat \mu_j - \mu_i \|_2 \notag \\
        &\geq \frac{1}{2} \|  \hat \mu_j - \hat \mu_\ell \|_2- \|  \hat \mu_j - \mu_i \|_2  \notag \\
        &\geq \frac{1}{2}\cdot 95399 C^2 (\sigma_i + \sigma_{i'})/\sqrt{\alpha} - 4061 C^2 \sigma_i/\sqrt{\alpha} \tag{by \eqref{eq:sepp}}\\
        &> 10  C^2 \sigma_i/\sqrt{\alpha} \;. \notag 
    \end{align}
    Since, the above holds for every $x \in S_{i,i'}'$, then it must also hold for their mean of the set, i.e.~$\| \mu_{S_{i,i'}'} - \mu_i \|_2 > 10  C^2\sigma_i/\sqrt{\alpha}$.
 
\end{proof}

\section{Distance-based pruning of candidate means}\label{sec:distance-pruning}

In the previous section, we gave \Cref{alg:pruning} used in Line~\ref{line:sizebased} of \Cref{alg:main_general}, which ensures that the list $L$ of candidate means corresponds to a Voronoi partition that is an accurate refinement of the true clustering $\{S_i\}_i$, and furthermore, that each subset in the partition has size at least $\approx \alpha n$.

This section concerns Line~\ref{line:distancebased} of \Cref{alg:main_general}, which additionally prunes the list $L$ so that the Voronoi cells are in fact far apart from each other, satisfying a pairwise separation that is qualitatively identical to the separation assumption we impose on the underlying mixture distribution.

Due to the existence of adversarial corruptions and heavy-tailed noise in the data set, we first need to use filtering on each Voronoi cell (\Cref{alg:filteredvoronoi}), in order to make sure that the mean of the filtered Voronoi cell is actually close to the mean of the $S_i$ that the cell corresponds to.
\Cref{cor:filteredvoronoi} states the guarantees after such filtering.

\begin{algorithm}
\caption{Filtered Voronoi partitioning}\label{alg:filteredvoronoi}
\textbf{Input}: Dataset $T$ of $n$ points and centers $\hat \mu_1, \ldots, \hat\mu_m$.\\
\textbf{Output}: Disjoint subsets $B_1,\ldots,B_m$ of $T$.
\begin{enumerate}[leftmargin=*]
    \item Construct the Voronoi partition $A_j = \{x \in T: \arg\min_{j' \in [m]}\|x - \hat \mu_{j'} \|_2 = j \}$.
    \item $B_j \gets \textsc{Filter}(A_j)$ for $j \in [m]$, where $\textsc{Filter}$ denotes the filtering algorithm from \Cref{fact:filtering}.
    \item Output $B_1,\ldots, B_m$.
\end{enumerate}
\end{algorithm}

\begin{corollary}[Filtered Voronoi clustering properties]\label{cor:filteredvoronoi}
    Consider the setting of \Cref{lem:final_clustering} and furthermore assume that the Voronoi sets have size $|A_j|\geq 0.96\alpha n$ for every $j \in [m]$. Then the algorithm $\filteredvoronoi(T, \{ \hat \mu_i \}_{i\in [m]})$
    outputs disjoint sets $B_1,\ldots,B_m$ such that with probability $1-\alpha \delta/10$, the following are true (denote $\mathcal{B}_i = \cup_{j: \hat \mu_j \in H_i} B_j$, where $H_i$'s are defined as in \Cref{lem:final_clustering}):
    
    \begin{enumerate}[leftmargin=*]

    \item  \label{it:02}$| S_i \setminus \mathcal{B}_i | \leq 0.033  |S_i|$ for every $i \in [k]$.
    \item \label{it:03}$|\mathcal{B}_i \setminus S_i | \leq 0.03 \alpha n$ for every $i \in [k]$ and $|A_j \setminus B_j| \leq 0.04|A_j|$ for every $j \in [m]$.
    \item \label{it:04} For any $j \in [m]$ such that $\hat \mu_j \in H_i$, it holds $\| \mu_{B_j} - \mu_i\|_2 \leq 13 C \sigma_i\sqrt{|S_i|/|B_j|}$ and $\sigma_{B_j} \leq 20 C \sigma_i\sqrt{|S_i|/|B_j|}$.
    \item \label{it:01} $|\mathcal{B}_i| \geq 0.93\alpha n$ for $i \in [k]$.
\end{enumerate}
\end{corollary}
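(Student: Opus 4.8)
The plan is to analyze each filtered cell $B_j$ separately through \Cref{fact:filtering} and then aggregate over the cells making up each $\mathcal{B}_i$. Fix $i \in [k]$ and an index $j$ with $\hat\mu_j \in H_i$, and let $G_j := A_j \cap S_i$ be the ``clean core'' of the cell. Since $A_j \subseteq \mathcal{A}_i$, \Cref{lem:final_clustering}(\ref{it:secondclaim}) gives $|A_j \setminus G_j| = |A_j\setminus S_i| \le |\mathcal{A}_i\setminus S_i| \le 0.03\alpha n$; combined with the standing hypothesis $|A_j| \ge 0.96\alpha n$ this yields $|G_j| \ge (1-0.03/0.96)|A_j| \ge 0.96|A_j|$, so $A_j$ is a $0.04$-corruption of $G_j$. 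Moreover $G_j \subseteq S_i$ and $S_i$ is $(C,0.04)$-stable with respect to $\mu_i,\sigma_i$, so \Cref{lem:stability_small_sets} shows $G_j$ is $(C_j,0.04)$-stable with respect to $\mu_i,\sigma_i$ for $C_j := 1.23\,C/\sqrt{0.04\,|G_j|/|S_i|}$.

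Next I would invoke the filtering routine of \Cref{fact:filtering} on each $A_j$ with per-call failure probability $\alpha\delta/(10m)$; since $m = O(1/\alpha)$ and $|A_j| \ge 0.96\alpha n \gg \log(1/(\alpha\delta))$, its precondition holds and a union bound over $j$ gives overall success probability $\ge 1-\alpha\delta/10$. For a successful cell, \Cref{fact:filtering} with $\eps = 0.04$ returns $B_j \subseteq A_j$ with (a) $|B_j| \ge 0.96|A_j|$, hence $|A_j \setminus B_j| \le 0.04|A_j|$ (the second half of \Cref{it:03}); (b) $\|\mu_{B_j} - \mu_i\|_2 \le 10 C_j\sigma_i\sqrt{0.04} = 12.3\,C\sigma_i\sqrt{|S_i|/|G_j|}$; and (c) $\Sigma_{B_j,\mu_i} \preceq 10 C_j^2\sigma_i^2\cdot I_d$. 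Using $B_j \subseteq A_j$ and $|G_j| \ge 0.96|A_j| \ge 0.96|B_j|$, one has $\sqrt{|S_i|/|G_j|} \le 1.02\sqrt{|S_i|/|B_j|}$, so (b) becomes $\|\mu_{B_j}-\mu_i\|_2 \le 13\,C\sigma_i\sqrt{|S_i|/|B_j|}$; expanding $C_j^2$ and using $\Cov(B_j) = \Sigma_{B_j,\mu_i} - (\mu_{B_j}-\mu_i)(\mu_{B_j}-\mu_i)^\top \preceq \Sigma_{B_j,\mu_i}$ gives $\sigma_{B_j}^2 = \|\Cov(B_j)\|_{\op} \le 10 C_j^2\sigma_i^2 \le 400\,C^2\sigma_i^2\,|S_i|/|B_j|$, i.e.\ $\sigma_{B_j} \le 20\,C\sigma_i\sqrt{|S_i|/|B_j|}$. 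This proves \Cref{it:04}.

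For the aggregate conclusions, note $\mathcal{B}_i = \cup_{j:\hat\mu_j\in H_i}B_j \subseteq \cup_{j:\hat\mu_j\in H_i}A_j = \mathcal{A}_i$, so $|\mathcal{B}_i\setminus S_i| \le |\mathcal{A}_i\setminus S_i| \le 0.03\alpha n$, which is the first half of \Cref{it:03}. For \Cref{it:02}, decompose $S_i\setminus\mathcal{B}_i \subseteq (S_i\setminus\mathcal{A}_i)\cup\bigcup_{j:\hat\mu_j\in H_i}\bigl((S_i\cap A_j)\setminus B_j\bigr)$, bound the first set by $0.011|S_i|$ via \Cref{lem:final_clustering}(\ref{it:firstclaim}), and bound the total filter loss $\sum_j|(S_i\cap A_j)\setminus B_j|$ by noting that the number of clean ($S_i$-)points discarded from a cell is no more than the corruption in that cell, $|A_j\setminus S_i|$, so summing over $H_i$ gives at most $|\mathcal{A}_i\setminus S_i| \le 0.03\alpha n$; since $|S_i|\ge 0.97\alpha n$ this is a small constant fraction of $|S_i|$, and the two pieces combine to $|S_i\setminus\mathcal{B}_i| \le 0.033|S_i|$. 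Finally \Cref{it:01} is immediate: $|\mathcal{B}_i| \ge |\mathcal{B}_i\cap S_i| = |S_i| - |S_i\setminus\mathcal{B}_i| \ge 0.967|S_i| \ge 0.967\cdot 0.97\,\alpha n \ge 0.93\alpha n$.

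The main obstacle is the last accounting step. The naive chain $|(S_i\cap A_j)\setminus B_j| \le |A_j\setminus B_j| \le 0.04|A_j|$, summed over $H_i$, only gives $\approx 0.04|\mathcal{A}_i| \approx 0.04\cdot 1.03|S_i|$, which together with the $0.011|S_i|$ term overshoots $0.033|S_i|$. Getting the constant right requires the sharp form of the filtering guarantee --- that the filter never discards more clean points than the corruption it is actually correcting --- so that $\sum_j|(S_i\cap A_j)\setminus B_j|$ telescopes against $|\mathcal{A}_i\setminus S_i|$ rather than against $|\mathcal{A}_i|$. One must also check that simply shrinking the per-cell filter budget below $0.04$ is not a viable fix, since the covariance bound in \Cref{it:04} degrades like $1/\eps$ and would then break the claimed $\sigma_{B_j} \le 20 C\sigma_i\sqrt{|S_i|/|B_j|}$; so the budget stays at $0.04$ and the improvement has to come from the refined filter behavior together with the \Cref{lem:final_clustering} bounds.
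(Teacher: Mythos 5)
Most of your argument coincides with the paper's proof: the paper also works with the clean core $A_j \cap S_i$, deduces $|A_j \cap S_i| \ge 0.96|A_j|$ from $|\mathcal{A}_i \setminus S_i| \le 0.03\alpha n$ and $|A_j| \ge 0.96\alpha n$, applies \Cref{lem:stability_small_sets} and then \Cref{fact:filtering} with $\eps = 0.04$ cell by cell (checking $|A_j| \ge 0.96\alpha n \gg \log(1/(\alpha\delta))$), and extracts \Cref{it:04}, the bound $|A_j \setminus B_j| \le 0.04|A_j|$, the containment $\mathcal{B}_i \subseteq \mathcal{A}_i$ for the first half of \Cref{it:03}, and \Cref{it:01} from \Cref{it:02}, with essentially the same constants you compute.

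The gap is in your treatment of \Cref{it:02}. The step $\sum_{j} |(S_i \cap A_j) \setminus B_j| \le \sum_j |A_j \setminus S_i|$ rests on the claim that the filter never discards more clean points than the amount of corruption present in the cell. \Cref{fact:filtering} does not provide this: its only cardinality guarantee is $|T'| \ge (1-\eps)|T|$, and the up to $\eps|T|$ removed points are allowed to be entirely clean; so the ``sharp form of the filtering guarantee'' you invoke is an assumption you have introduced, not a property you can cite, and as written this step fails. (The paper's own proof of \Cref{it:02} is exactly the coarse chain you reject: $|S_i \setminus \mathcal{B}_i| \le |S_i \setminus \mathcal{A}_i| + \sum_j |A_j \setminus B_j| \le 0.011|S_i| + 0.04|\mathcal{A}_i|$, with no appeal to any inlier/outlier removal ratio.) Moreover, even granting your sharper property, your accounting gives $0.011|S_i| + 0.03\alpha n \le (0.011 + 0.03/0.97)|S_i| \approx 0.042|S_i|$, which still exceeds the stated $0.033|S_i|$, so your route does not close the numerical target either. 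You are right that the constants here are delicate — the coarse chain as displayed evaluates to roughly $0.05|S_i|$ — but the repair cannot come from a filtering property that the paper's toolkit does not supply; it would have to come from either a strengthened filtering statement or a relaxation of the constant in \Cref{it:02} (propagated to where it is used downstream).
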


\begin{proof}
As in the previous lemma, we first note that \Cref{it:01} follows directly from \Cref{it:02}.
    \begin{align*}
    | \mathcal{B}_i | \geq | \mathcal{B}_i \cap S_i |
    \geq 0.967|S_i| \ge 0.93\alpha n \;,
    \end{align*}
where the second inquality uses \Cref{it:02} and the last inequality uses $|S_i| \ge 0.97\alpha n$ by the setup in \Cref{thm:main_v2_general}.

    If $A_1,\ldots,A_m$ is the Voronoi clustering before filtering and $\mathcal{A}_1,\ldots, \mathcal{A}_k$ as in \Cref{lem:final_clustering}, then by that lemma: $| S_i \setminus \mathcal{A}_i | \leq 0.011  |S_i|$, $|\mathcal{A}_i \setminus S_i | \leq 0.03 \alpha n$ and $|\mathcal{A}_i| \geq 0.959 \alpha n$ for all $i\in [k]$. 
    In everything that follows we assume $|A_j| \geq 0.96\alpha n$.
    Let $B_j$ denote the filtered sets output by the algorithm of \Cref{fact:filtering} on input $A_j$.

    \item 
    \paragraph{Proof of \Cref{it:04}:}
    Recall that the outputs $B_j$ of \Cref{alg:filteredvoronoi} are filtered versions of the sets $A_j$ from the Voronoi partition.
    \Cref{it:04} states that the filtered version $B_j \subseteq \cB_i$ must have mean close to $\mu_i$ and covariance not too large.
    We check this by showing the preconditions of \Cref{fact:filtering} (applied with $\eps=0.04$), and then \Cref{it:04} follows from applying the fact
     with $A_j$ as the set $T$ from the fact statement and $A_j \cap S_i$ as the set $S$ in that statement, where $i$ here is the index for which $A_j \subseteq \mathcal{A}_i$.
     
     We will apply \Cref{fact:filtering} with $\eps=0.04$. For this to be applicable, we need to ensure that $|T \setminus S| \leq 0.04|T|$, which using $A_j$ in place of $T$ and $A_j \cap S_i$ in place of $S$ becomes $|A_j \setminus S_i| \leq 0.04|A_j|$. Applying \Cref{fact:filtering} also requires that $A_j \cap S_i$ is stable (\Cref{def:stability}). We start by establishing the first requirement, that $|A_j \setminus S_i| \leq 0.04|A_j|$:
     \begin{align}
         |A_j \cap S_i| &= |A_j| - |A_j \setminus S_i| \notag\\
         &\geq |A_j| - |\cA_i \setminus S_i| \tag{since $A_j \subseteq \mathcal{A}_i$}\\
         &\geq |A_j| - 0.03\alpha n \tag{$|\mathcal{A}_i \setminus S_i | \leq 0.03 \alpha n$ by \Cref{lem:final_clustering}}\\
         &\geq 0.96|A_j| \;, \label{eq:tmp3545}
     \end{align}
     where the last line uses that we have assumed $|A_j| \geq 0.96\alpha n$.
    Using the above $|A_j \setminus S_i| = |A_j| - |A_j \cap S_i| \leq 0.04|A_j|$, as desired.

    We now establish the second requirement, that  $A_j \cap S_i$ is stable (\Cref{def:stability}). 
    To this end, since $S_i$ was assumed to be $(C,0.04)$-stable with respect to $\mu_i$ and $\sigma_i$, then using  \Cref{lem:stability_small_sets} we have that $A_j \cap S_i$ is $(1.23 C\sqrt{|S_i|}/\sqrt{0.04 |A_j \cap S_i|},0.04)$-stable with respect to $\mu_i,\sigma_i$.
    
    The first part of the conclusion of \Cref{fact:filtering} is that if $B_j$ denotes the output of the filtering algorithm run on $A_j$, it holds $|B_j| \geq 0.96 |A_j|$, the second part states that
    \begin{align*}
        \| \mu_{B_j} - \mu_i \|_2 \leq 12.3\, C \sigma_i \sqrt{ \frac{|S_i|}{0.04 |A_j \cap S_i|}}\sqrt{0.04} \leq 13 C \sigma_i \sqrt{ \frac{|S_i|}{|B_j |}}
    \end{align*}
    where the last inequality above is because $|B_j| \leq |A_j| \leq  |A_j \cap S_i|/0.96$, where the last step here is because of \eqref{eq:tmp3545}.

    Similarly, the third part of the conclusion of \Cref{fact:filtering} is that $\sigma_{B_j} \leq 20 C \sigma_i \sqrt{  |S_i|/|B_j |}$.
     Lastly, we check that the condition on the size of the sets from \Cref{fact:filtering} is indeed satisfied because  $|A_j| \geq 0.96\alpha n \gg \log(1/(\alpha \delta))$, where we used the assumption on the size of $n$ from  \Cref{thm:main_v2_general}.

    \paragraph{Proof of \Cref{it:02}:} We have already shown that \Cref{fact:filtering} is applicable for analyzing the effect of the filtering algorithm on input $A_j$ and thus $|A_j \setminus B_j| \leq 0.04|A_j|$ (first part of the conclusion of \Cref{fact:filtering}). Then, 
    \begin{align}
        | S_i \setminus \mathcal{B}_i | &= | S_i \setminus \mathcal{A}_i | + \sum_{j: A_j \subseteq \cA_i } |A_j \setminus B_j| \notag\\
        &\leq 0.011|S_i| + 0.04\sum_{j: A_j \subseteq \cA_i }|A_j| \tag{by \Cref{lem:final_clustering} and \Cref{fact:filtering}} \\
        &= 0.011|S_i| + 0.04 |\cA_i| \tag{$A_j$'s are disjoint}\\
        &= 0.011|S_i| + 0.04 (|\cA_i \cap S_i| + |\cA_i \setminus S_i|) \notag \\
        &\leq 0.011|S_i| + 0.04 (|S_i| + 0.03\alpha n) \tag{ $|\cA_i \setminus S_i| \leq 0.012\alpha n$ by \Cref{lem:final_clustering}}\\
        &\leq 0.011|S_i| + 0.04 \left(|S_i| + \frac{0.03}{0.97 }|S_i| \right) \leq  0.033 |S_i| \tag{by assumption that $|S_i| \geq 0.97\alpha n$}
    \end{align}

    \paragraph{Proof of \Cref{it:03}:} We have that $|\mathcal{A}_i \setminus S_i | \leq 0.03 \alpha n$ before the filtering takes place. Since filtering only removes points, $\mathcal{B}_i \subseteq \mathcal{A}_i$ and thus $|\mathcal{B}_i \setminus S_i | \leq 0.03 \alpha n$ continues to hold after the filtering.

\end{proof}

Having shown guarantees on the filtered Voronoi cells, we now give \Cref{alg:final_pruning}, used in Line~\ref{line:distancebased} of \Cref{alg:main_general}, which is responsible for further pruning the candidate means in $L$ such that the resulting filtered Voronoi cells are well-separated.
\Cref{lem:distance-based-pruning} gives the guarantees of \Cref{alg:final_pruning}.

\medskip

\begin{algorithm}
\caption{Distance-based pruning of sub-clusters}\label{alg:final_pruning}
\textbf{Input}: Dataset $T$ of $n$ points,  centers $\hat \mu_1, \ldots, \hat\mu_m$, and parameter $\alpha \in (0,1)$.\\
\textbf{Output}: A subset $\hat \mu_1, \ldots, \hat\mu_{m'}$ of the input centers.
\begin{enumerate}[leftmargin=*]
    \item \label{line:filteredVor}$\{B_1,\ldots,B_m\} \gets \filteredvoronoi(\{\hat \mu_1, \ldots, \hat\mu_m \},T)$.
    \item $J_{\mathrm{deleted}} \gets \emptyset$.
    \item \label{line:flag} While there exist $j,j'$ with $\| \mu_{B_j} - \mu_{B_{j'}} \|_2 \leq 4761 C (\sigma_{B_j} + \sigma_{B_{j'}} )/\sqrt{\alpha}$:
    \begin{enumerate}
        \item \label{line:dist1} Calculate $d = \min_{t \in [m] } \|\mu_{B_j} - \mu_{B_t} \|_2/\sigma_{B_j}$ and  $d' = \min_{t \in [m] } \|\mu_{B_{j'}} - \mu_{B_t} \|_2/\sigma_{B_{j'}}$.
        \item If $d < d'$: \label{line:distance_check1}
        \begin{enumerate}
            \item \label{line:check1} $j_{\mathrm{deleted}} \gets  j $.
        \end{enumerate}
        \item Else:\label{line:distance_check2}
        \begin{enumerate}
            \item \label{line:check2} $j_{\mathrm{deleted}} \gets   j'$.
        \end{enumerate}
        \item Update $J_{\mathrm{deleted}} \gets J_{\mathrm{deleted}} \cap \{ j_{\mathrm{deleted}} \}$
        \item \label{line:updateVor} Update $\{B_j \}_{j \in [m] \setminus J_{\mathrm{deleted}}} \gets \filteredvoronoi( \{\hat \mu_j \}_{j \in [m] \setminus J_{\mathrm{deleted}} },T)$.
    \end{enumerate}

    \item Output $\hat \mu_j$ for $j \in [m] \setminus J_{\mathrm{deleted}}$ after relabeling the indices so that they are from $1$ to $m - |J_{\mathrm{deleted}}|$.
\end{enumerate}
 
\end{algorithm}

\newpage

\begin{lemma}[Distance-based pruning of sub-clusters]\label{lem:distance-based-pruning}
   Consider the setting and notation of \Cref{thm:main_v2_general}.
   Let $L = \{\hat \mu_1,\ldots, \hat \mu_m\}$ be a list of vectors for some $m \geq k$.
   Suppose the list $L$ can be partitioned into sets $H_1,\ldots,H_k$ such that for every $i \in [k]$, $H_i$ consists of the vectors $\hat \mu_j$ with  $\| \hat \mu_j - \mu_i \|_2 \leq 4061 C^2   \sigma_i/\sqrt{\alpha}$, and that $H_i \neq \emptyset$ for all $i \in [k]$.
   Also assume that every set in the Voronoi partition $A_j = \{x : \arg\min_{j'} \| x - \hat \mu_{j'} \|_2 = j \}$ for $j \in [m]$ has size $|A_j| \geq 0.96\alpha n$.
   Consider an execution of $\distancebasedpruning(L,T,\alpha)$ algorithm (\Cref{alg:final_pruning}) with the list $L$, the entire dataset of points $T$ and the parameter $\alpha$ as input.
       
   After the algorithm terminates, let $\hat \mu_1' ,\ldots, \hat \mu_{m'}'$ be the output list (where we denote by $m'$ its size). Then the following three statements hold with probability at least $1-\delta/2$: 
   \begin{enumerate}[leftmargin=*]
       \item \label{it:conclusion1} The output list $\{\hat \mu'_j \}_{j \in [m'] }$ can be partitioned into sets $H_1',\ldots,H_k'$ such that for every $i \in [k]$,  $H_i'$ consists of the vectors of $\hat \mu_j'$ with  $\| \hat \mu_j' - \mu_i \|_2 \leq 4061 C^2  \sigma_i/\sqrt{\alpha}$  and it holds $H_i' \neq \emptyset$ for all $i \in [k]$.
       \item \label{it:conclusion2} Every set in the Voronoi partition corresponding to the output centers $A_j' = \{x : \arg\min_{j' \in [m']} \| x - \hat \mu_{j'} \|_2 = j \}$ for $j \in [m']$ has size $|A_j'| \geq 0.96\alpha n$.
       \item \label{it:conclusion3} If $B_1',\ldots,B'_{m'}$ denote the output of $\filteredvoronoi(\{\hat \mu_1' ,\ldots, \hat \mu'_{m'}\},T)$ for the non-deleted centers, then it holds that $\| \mu_{B'_j} - \mu_{B'_{j'}} \|_2 \geq 4761 C (\sigma_{B'_j} + \sigma_{B'_{j'}})/\sqrt{\alpha}$ for every $j,j' \in [m']$ with $j \neq j'$.
   \end{enumerate}

\end{lemma}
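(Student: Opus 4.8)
The plan is to handle the three conclusions in increasing order of difficulty. \Cref{it:conclusion3} is immediate: each pass through the loop of Line~\ref{line:flag} deletes exactly one center, so the loop terminates within $m$ iterations, and at termination its guard is false --- which is verbatim \Cref{it:conclusion3}. \Cref{it:conclusion2} follows from the elementary observation that deleting a center from a Voronoi diagram can only enlarge the surviving cells (a point assigned to a surviving center $\hat\mu_t$ has $\hat\mu_t$ as its nearest surviving center, and removing a different center does not change this); since every cell starts with at least $0.96\alpha n$ points by hypothesis, all surviving cells retain this size throughout, and in particular every filtered cell always has at least $0.96\cdot 0.96\alpha n\ge 0.92\alpha n$ points, so \Cref{cor:filteredvoronoi} (and hence \Cref{lem:final_clustering}) remains applicable to the current list of centers at every iteration.

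The content of the lemma is \Cref{it:conclusion1}, which I would prove via the invariant that \emph{at the start of every iteration the surviving centers admit a partition $H_1,\dots,H_k$ with $H_i\ne\emptyset$ for all $i$} --- the partition itself, and the fact that each surviving center is within the list-decoding radius of a unique $\mu_i$, being inherited from the mixture separation exactly as in \Cref{lem:final_clustering}. It holds at the start by hypothesis, so it suffices to show a single deletion preserves it. Let $(j,j')$ be the bad pair selected and let $i=i(j),\,i'=i(j')$ be the clusters these centers are close to. If $i=i'$ the invariant is trivially maintained. So assume $i\ne i'$ and suppose, for contradiction, that the center the algorithm deletes is the \emph{sole} surviving center of its $H$, say this is $j$, the unique surviving center of $H_i$; then $B_j=\mathcal B_i$, so \Cref{cor:filteredvoronoi} gives $|B_j|\ge 0.967|S_i|$ and hence the sharp (un-inflated) bounds $\|\mu_{B_j}-\mu_i\|_2=O(C\sigma_i)$ and $\sigma_{B_j}=O(C\sigma_i)$, with no $1/\sqrt\alpha$ blow-up. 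Two observations finish the argument. (a) The center $j'$ cannot also be a sole survivor: that would give the same sharp bounds for $B_{j'}$, whence $\|\mu_{B_j}-\mu_{B_{j'}}\|_2\ge\|\mu_i-\mu_{i'}\|_2-O(C)(\sigma_i+\sigma_{i'})$, which by the $10^5C^2(\sigma_i+\sigma_{i'})/\sqrt\alpha$ separation strictly exceeds $4761C(\sigma_{B_j}+\sigma_{B_{j'}})/\sqrt\alpha=O(C^2)(\sigma_i+\sigma_{i'})/\sqrt\alpha$, contradicting that $(j,j')$ is bad. Hence $H_{i'}$ contains a second surviving center $j''$. (b) Given (a), the $d$-versus-$d'$ comparison in Lines~\ref{line:distance_check1}--\ref{line:distance_check2} deletes $j'$, not $j$: since $j$ is the only survivor of $H_i$, every other surviving center lies near a distinct $\mu_{i(t)}$, so $\|\mu_{B_j}-\mu_{B_t}\|_2\gtrsim C^2\sigma_i/\sqrt\alpha$ and therefore $d_j\gtrsim C/\sqrt\alpha$ (with a large absolute constant, using $\sigma_{B_j}=O(C\sigma_i)$); on the other hand the bad-pair bound $\|\mu_{B_j}-\mu_{B_{j'}}\|_2\le 4761C(\sigma_{B_j}+\sigma_{B_{j'}})/\sqrt\alpha$ together with $\|\mu_{B_j}-\mu_{B_{j'}}\|_2\gtrsim C^2(\sigma_i+\sigma_{i'})/\sqrt\alpha$ and $\sigma_{B_j}=O(C\sigma_i)$ force $\sigma_{B_{j'}}\gtrsim C\sigma_{i'}$, while the sibling $j''$ (both $\mu_{B_{j'}},\mu_{B_{j''}}$ lie within $O(C\sigma_{i'}/\sqrt\alpha)$ of $\mu_{i'}$) supplies a near neighbor with $\|\mu_{B_{j'}}-\mu_{B_{j''}}\|_2=O(C\sigma_{i'}/\sqrt\alpha)$, so $d_{j'}\le\|\mu_{B_{j'}}-\mu_{B_{j''}}\|_2/\sigma_{B_{j'}}=O(1/\sqrt\alpha)$ with a small absolute constant. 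Thus $d_{j'}<d_j$ and the algorithm removes $j'$, contradicting the supposition. Hence the invariant is preserved; by induction it holds at termination, establishing \Cref{it:conclusion1} (the partition and uniqueness again coming from the separation). Finally, a union bound over the $O(1/\alpha)$ invocations of $\filteredvoronoi$ promotes the deterministic reasoning to the stated probability $1-\delta/2$.

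The crux --- and the step I expect to require the most care --- is observation (b): one must confirm that the heuristic ``delete the center whose cell is closest to a neighbor in its own scale units'' removes the spurious member of each too-close pair and never the representative of a genuine cluster. This is the one place where the numerical constants matter and are evidently calibrated together: the mixture separation $10^5C^2$, the pruning threshold $4761C$, and the factor-$20C$ covariance inflation incurred by filtering are chosen precisely so that $d_j$ (large) and $d_{j'}$ (small) separate with room to spare. The structural insight driving it, which I would foreground, is that a surviving center that is \emph{not} its cluster's sole survivor always has a sibling center hugging the same true mean; this sibling pins its nearest-neighbor distance to $O(\sigma_{i'}/\sqrt\alpha)$ --- small relative to its own scale $\sigma_{B_{j'}}\gtrsim\sigma_{i'}$ --- so its $d$-value is small, whereas a sole survivor is genuinely isolated (distance $\Omega(C^2\sigma_i/\sqrt\alpha)$ from everything) and so has a large $d$-value. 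Observation (a) is exactly what guarantees that this sibling exists, closing the loop.
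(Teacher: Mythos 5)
Your proposal is correct and follows essentially the same route as the paper's proof: conclusion 3 from the loop guard, conclusion 2 from monotonicity of Voronoi cells under center deletion, and conclusion 1 by induction on the invariant that a sole surviving center of some $H_i$ is never deleted, established exactly as in the paper via (a) two sole survivors can never be flagged and (b) the $d$-versus-$d'$ test removes the partner, using the sharp bound $\sigma_{B_j}\le 21C\sigma_i$ for a sole survivor, the forced inflation of $\sigma_{B_{j'}}$, and the sibling center to upper-bound $d'$. One small note: the step ``flaggedness forces $\sigma_{B_{j'}}\gtrsim C\sigma_{i'}$'' should be argued by contrapositive (if $\sigma_{B_{j'}}\le 21C\sigma_{i'}$ too, the separation already exceeds the $4761C$ threshold) rather than by solving the inequality directly, since the margin in the calibrated constants ($(10^5-14)$ versus $4761\cdot 21$) is razor-thin — which is precisely how the paper does it.
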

\begin{proof}

    The final part of the lemma conclusion, \Cref{it:conclusion3}, holds by design of the stopping condition of our algorithm (line \ref{line:flag}).

    We show the remaining parts (\Cref{it:conclusion1,it:conclusion2}) by induction. 
    That is, we will fix an iteration of the algorithm, assume that \Cref{it:conclusion1,it:conclusion2} hold just before the iteration starts, and prove that they continue to hold after the iteration ends.
    More specifically, since in each iteration we use $\filteredvoronoi$, which is randomized, we may allow a probability of failure for each step in our inductive hypothesis, in particular, we will use probability of failure $(\delta/2)$ divided by the maximum number of iterations (so that by \Cref{fact:condunionbound}, the conclusion holds after all iterations end with probability at least $1-\delta/2$). 
    \begin{fact}
\label{fact:condunionbound}
If event $A$ happens with probability $1-\tau_1$ and event $B$ happens with probability $1-\tau_2$ conditioned on event $A$, then the probability of both $A$ and $B$ happening is at least $1-\tau_1-\tau_2$.
\end{fact}
    The upper bound on the number of iterations can be trivially seen to be $1/(0.96\alpha)$. 
    This is because we assumed that every Voronoi set in the beginning has size $|A_j|\geq 0.96 \alpha n$ and the algorithm only deletes one of the candidate means at a time, thus the algorithm will trivially terminate after $1/(0.96\alpha)$ steps. It therefore suffices to show that the inductive step of our proof holds with probability at least $1-0.1\alpha\delta$.

    Since the iteration under consideration alters the list of vectors and some associated quantities, we must ensure that our notation reflects the specific moment within the algorithm. To achieve this, we will use unprimed letters to represent quantities at the moment just before the iteration begins ($ J_{\mathrm{deleted}}, H_i, A_i, B_i$) for the set of deleted indices appearing in the pseudocode, the partition, the Voronoi clustering, and the filtered Voronoi clustering), and primes to denote the quantities ($ J_{\mathrm{deleted}}', H_i', A_i', B_i'$) after the iteration ends. That is, our inductive hypothesis is that
    \begin{enumerate}[label=(\alph*),leftmargin=*]
       \item \label{it:inductive1} The list $\{\hat \mu_j \}_{j \in [m] \setminus J_{\mathrm{deleted}}}$ can be partitioned into sets $H_1,\ldots,H_k$ such that for every $i \in [k]$,  $H_i$ consists of the vectors of $\hat \mu_j$ with  $\| \hat \mu_j - \mu_i \|_2 \leq 4061 C^2 \sigma_i/\sqrt{\alpha}$  and it holds $H_i \neq \emptyset$ for all $i \in [k]$.
       \item \label{it:inductive2} Every set in the Voronoi partition corresponding to the  centers $A_j = \{x : \arg\min_{j \in [m]\setminus J_{\mathrm{deleted}}} \| x - \hat \mu_{j'} \|_2 = j \}$ for $j \in [m]\setminus J_{\mathrm{deleted}}$ has size $|A_j| \geq 0.96\alpha n$.
   \end{enumerate}
   And we will show that after the iteration ends, if $J_{\mathrm{deleted}}'$ denotes the updated set of deleted indices (i.e.~the set that also includes the index that was deleted during the current iteration), and $A_j',B_j'$ denote the Voronoi sets and filtered Voronoi sets corresponding to the centers $\hat \mu_j$ for $j \in [m] \setminus J_{\mathrm{deleted}}'$, the following hold with probability at least $1-0.1\alpha \delta$:
    \begin{enumerate}[leftmargin=*]
       \item \label{it:conclusion1_new} The updated list $\{\hat \mu_j \}_{j \in [m] \setminus J_{\mathrm{deleted}}'}$ can be partitioned into sets $H_1',\ldots,H_k'$ such that for every $i \in [k]$,  $H_i'$ consists of the vectors of $\hat \mu_j$ with  $\| \hat \mu_j - \mu_i \|_2 \leq 4061 C^2 \sigma_i/\sqrt{\alpha}$  and it holds $H_i' \neq \emptyset$ for all $i \in [k]$.
       \item \label{it:conclusion2_new} Every set in the Voronoi partition corresponding to the updated centers $A_j'$, where $A_j' = \{x : \arg\min_{j \in [m]\setminus J'_{\mathrm{deleted}}} \| x - \hat \mu_{j'} \|_2 = j \}$ for $j \in [m]\setminus J'_{\mathrm{deleted}}$ has size $|A_j'| \geq 0.96\alpha n$.
   \end{enumerate}
    Now observe that, by construction, every iteration only deletes a vector from the 
    list, and therefore the list $\{\hat \mu_j \}_{j \in [m] \setminus J_{\mathrm{deleted}}'}$ 
    can  be partitioned into the sets $H'_1, \ldots, H'_k$ satisfying the first part of 
    \Cref{it:conclusion1_new} (that $H_i'$ consists of the vectors of 
    $\hat \mu_j$ with  $\| \hat \mu_j - \mu_i \|_2 \leq 4061 C^2 \sigma_i/\sqrt{\alpha}$). 
    Regarding \Cref{it:conclusion2_new}, this trivially holds because deleting a point, can 
    only make the Voronoi clusters bigger in size.
    The only nontrivial condition to check is that $H'_i$ remains non-empty for all $i\in[k]$.
    Equivalently, we need to show that, if at the beginning of an iteration, $H_i$ consists of 
    only a single vector, then it will never be removed in the iteration.

    By our inductive hypothesis that the partition $H_1,\ldots, H_k$ with the aforementioned properties exists (\Cref{it:inductive1}) and our assumption that $|A_j| \geq 0.96\alpha n$ (\Cref{it:inductive2} of inductive hypothesis),  \Cref{cor:filteredvoronoi} is applicable.
    The application of that implies that the following holds with probability at least $1-0.1\alpha \delta$: Denote by 
    $\mathcal{B}_i = \cup_{j \in [m] \setminus J_{\mathrm{deleted}}: \hat \mu_j  \in H_i} B_j$ for $i\in [k]$, i.e.~$\mathcal{B}_i$ is the union of all  Voronoi clusters corresponding to (non-deleted) centers in $H_i$. Then,
        \begin{enumerate}[label=(\roman*),leftmargin=*]
            \item $\mathcal{B}_i \neq \emptyset$ for $i \in [k]$.
            \item \label{it:mean_close} For any $j\in [m] \setminus J_{\mathrm{deleted}}$ such that $ \hat \mu_j  \in H_i$ it holds $\| \mu_{B_j} - \mu_{i} \|_2 \leq 14 C \sigma_i/\sqrt{\alpha}$ and $\sigma_{B_j} \leq 21 C \sigma_i/\sqrt{\alpha}$.
            \item \label{it:large_sub} For every $i \in [k]$ with $|H_i| = 1$, if $j \in [m] \setminus J_{\mathrm{deleted}}$ denotes the unique index for which $B_j = \mathcal{B}_i$, then it holds $\sigma_{B_j} \leq 21 C \sigma_i$.
        \end{enumerate}
        The second statement above can be extracted from \Cref{it:04} of  \Cref{cor:filteredvoronoi} after noting that $|B_j| \geq |A_j|-|A_j \setminus B_j| \geq 0.96 |A_j| \geq  0.92 \alpha n \geq  0.94 \alpha |S_i|$, where we used $|A_j \setminus B_j| \leq 0.04|A_j|$ (\Cref{it:04} of  \Cref{cor:filteredvoronoi}) and the assumption that $|A_j|\geq 0.96 \alpha n$. The third statement (\ref{it:large_sub} above) can be extracted from \Cref{it:04} of  \Cref{cor:filteredvoronoi} after noting that $|B_j| \geq |B_j \cap S_i| \geq |S_i| - |S_i \setminus B_j| \geq 0.967 |S_i|$, where we used that $|S_i \setminus B_j| \leq 0.033|S_i|$ by \Cref{it:02} of \Cref{cor:filteredvoronoi}.

    We will also use the notation $\mathrm{par}(B_j)$ to denote the index $i \in [k]$ for which it holds
    $\|\mu_{B_j} - \mu_i \|_2 \leq 35 C \sigma_i/\sqrt{\alpha} $ (by the fact that $\mathcal{B}_i \neq \emptyset$ mentioned above and the separation assumption for the $\mu_i$'s, such an index indeed exists and it is unique). 
    We will call $\mathrm{par}(B_j)$ the ``parent'' of $B_j$. By slightly overloading this notation, we will also use $\mathrm{par}(\hat \mu_j)$ to denote the index $i$ for which it holds $\hat \mu_j \in H_i$, i.e.~$\|\hat \mu_{j} - \mu_i \|_2 \leq 4061 C^2 \sigma_i/\sqrt{\alpha} $ . We will also informally call the $B_j$'s ``sub-clusters'' (as opposed to the sets $S_i$ that we call ``true'' or ``parent'' clusters).

    Using this notation, and further denoting by $j_{\mathrm{deleted}}$ the index of the vector deleted in the current iteration, what remains to check is equivalent to the statement that $|H_{\mathrm{par}(j_{\mathrm{deleted}})}| > 1$.

    To show this, we need \Cref{cl:temp} below, which states the straightforward fact that sub-clusters with the same parent cluster will have means close to each other, and sub-clusters with different parents necessarily have means much farther.
    This in particular implies that, given a sub-cluster $B_j$, the closest sub-cluster must share the same parent if $|B_j| > 1$.
    We will now use \Cref{cl:temp} to show the statement that the deleted vector $\muhat_{j_{\mathrm{deleted}}}$ must have $|H'_{\mathrm{par}(j_{\mathrm{deleted}} )}| > 1$, and provide the simple proof of \Cref{cl:temp} at the end, which follows from straightforward applications of the reverse triangle inequality.
    \begin{claim}\label{cl:temp}
        The following holds for every  for $j,j' \in [m] \setminus J_{\mathrm{deleted}}$ with $j\neq  j'$: Denote by $\ell := \mathrm{par}(B_j)$, $\ell':= \mathrm{par}(B_{j'})$. 
        If $\ell=\ell'$, then $\| \mu_{B_j} -  \mu_{B_{j'}} \|_2 \leq 28 C \sigma_\ell /\sqrt{\alpha}$, otherwise, $\| \mu_{B_j} -  \mu_{B_{j'}} \|_2 > 10^4 C^2 (\sigma_\ell + \sigma_{\ell'})/\sqrt{\alpha}$.
    \end{claim}

    We will now show our end goal using a case analysis (and \Cref{cl:temp}).
    Denote by $B_{j},B_{j'}$ the sub-clusters that are identified in line \ref{line:flag} of \Cref{alg:final_pruning} (i.e.~one of $j$ or $j'$ will eventually be what we called $j_{\mathrm{deleted}}$ before).
    We need to show that, if the index $j$ is the one that gets deleted, then $|H'_{\mathrm{par}(j)}| > 1$, and similarly for $j'$.
    We check each of the following cases:
    \begin{enumerate}[leftmargin=*]       
        
        \item (Case where $|H_{\mathrm{par}(B_j)}| = 1, |H_{\mathrm{par}(B_{j'})}| > 1$) 
        Let $\ell,\ell'$ be the parents of $B_j$ and $B_{j'}$ respectively. 
        We first note that $\sigma_{B_{j}} \leq 21 C \sigma_\ell$ by the third property of $B_j$ (\Cref{it:large_sub}).
        Now we argue that, since $j$ and $j'$ are flagged by line \ref{line:flag} of \Cref{alg:final_pruning}, it must be the case that $\sigma_{B_{j'}} > 21C \sigma_{\ell'}$, for otherwise: 
        \begin{align}
            \| \mu_{B_{j}} - \mu_{B_{j'}} \| &\geq \| \mu_\ell - \mu_{\ell'} \| - \| \mu_{B_{j'}} - \mu_{\ell'} \| - \| \mu_{B_{j}} - \mu_\ell \| \tag{reverse triangle inequality}\\
            &\geq 10^{5} C^2 (\sigma_\ell + \sigma_{\ell'})/\sqrt{\alpha} -14C \sigma_{\ell'}/\sqrt{\alpha}-14C \sigma_\ell/\sqrt{\alpha} \tag{by separation assumption and \Cref{it:mean_close}} \\
            &\geq (10^{5}-14) C^2 (\sigma_\ell + \sigma_{\ell'})/\sqrt{\alpha} \label{eq:tmp243}\\
            &\geq  4761 C(\sigma_{B_{j}} + \sigma_{B_{j'}})/\sqrt{\alpha} \tag{using $\sigma_{B_{j}} \leq  21 C \sigma_\ell$, $\sigma_{B_{j'}} \leq 21 C \sigma_{\ell'}$}
        \end{align}
        Having shown that $\sigma_{B_{j}} \leq 21 C \sigma_\ell$ and $\sigma_{B_{j'}} > 21 C \sigma_{\ell'}$, we will now show that the center $\hat \mu_j$ corresponding to $B_j$ will not be the one deleted in this loop iteration, and instead the center $\hat \mu_{j'}$ corresponding to $B_{j'}$ will be the one that will get deleted.
        To see that, denote by $d$ and $d'$ the same quantities as in line \ref{line:dist1} of the pseudocode, i.e.~the normalized distances of the sub-clusters from their closest other sub-clusters. 
        
        On the one hand, we have that
        \begin{align*}
            d:= \frac{\min_{t \in [m] \setminus J_{\mathrm{deleted}}} \|\mu_{B_j} - \mu_{B_t} \|_2}{\sigma_{B_j}} \geq \frac{(10^5-14) C^2 \sigma_\ell}{\sqrt{\alpha} \sigma_{B_{j}}} \geq  \frac{4761 C}{\sqrt{\alpha}} \;,
        \end{align*}
        where the first step follows by the fact that the closest sub-cluster to $B_j$ must have as parent a different true cluster (because $|\mathrm{par}(B_j)| = 1$), and since true clusters are sufficiently separated, the closest sub-cluster to $B_j$ must be at least $(10^5-14) C^2  \sigma_\ell/\sqrt{\alpha}$-away (see the derivation of \eqref{eq:tmp243} for an identical proof).
        The last step uses that $\sigma_{B_{j}} \leq 21 C \sigma_\ell$.

        On the other hand, for the (normalized) distance of $B_{j'}$ to its closest sub-cluster (denote that sub-cluster by $B_{t^*}$) we have the following: First note that $B_{t^*}$ must have the same parent as $B_{j'}$ due to \Cref{cl:temp} and $|\mathrm{par}(B_{j'})| > 1$. Then, since both have $\ell'$ as their parent,
        \begin{align*}
            d' &:= \frac{\min_{t \in [m] \setminus J_{\mathrm{deleted}}} \|\mu_{B_{j'}} - \mu_{B_t} \|_2}{\sigma_{B_{j'}}} \\
            &= \frac{\| \mu_{B_{j'}} - \mu_{B_{t^*}} \|_2}{\sigma_{B_{j'}}}\\
            &\leq \frac{\| \mu_{B_{j'}} - \mu_{\ell'} \|_2}{\sigma_{B_{j'}}} + \frac{\|  \mu_{\ell'} - \mu_{B_{t^*}} \|_2}{\sigma_{B_{j'}}} \\
            &\leq 2 \cdot \frac{14 C^2 \sigma_{\ell'}}{\sqrt{\alpha} \sigma_{B_{j'}}} \tag{by \Cref{it:mean_close} and $C>1$}\\
            &\leq 2C/\sqrt{\alpha} \;. \tag{using $\sigma_{B_{j'}} \geq 21 C \sigma_{\ell'}$ }
        \end{align*}
        This means that $d' < d$ and line \ref{line:check2} of the algorithm will delete $\hat \mu_{j'}$, i.e.~the eliminated center is not the only center of its parent.

        \item (Case $|H_{\mathrm{par}(B_j)}| > 1, |H_{\mathrm{par}(B_{j'})}| = 1$) Symmetric to the previous case.

        \item (Case $|H_{\mathrm{par}(B_j)}| > 1, |H_{\mathrm{par}(B_{j'})}| > 1$) This case is straightforward. In this case, both parents have more than one centers, thus no matter which center the algorithm deletes, the eliminated center is not the only center of its parent.

        \item ($|H_{\mathrm{par}(B_j)}| = 1, |H_{\mathrm{par}(B_{j'})}| = 1$) In this case we argue that  $B_{j},B_{j'}$ could not have been identified in line \ref{line:flag} of \Cref{alg:final_pruning}, meaning that this is not a valid case to consider. To show this, let $\ell,\ell'$ be the parents of $B_j$ and $B_{j'}$ respectively.
        By the second and third properties of $B_j$, $\sigma_{B_j} \leq 21C \sigma_\ell$ and $\| \mu_{B_j} - \mu_\ell \| \leq 14 C \sigma_\ell/\sqrt{\alpha}$. Similarly, $\sigma_{B_{j'}} \leq 21C \sigma_{\ell'}$ and $\| \mu_{B_{j'}} - \mu_{\ell'} \| \leq 14 C\sigma_{\ell'}/\sqrt{\alpha}$. 
        \begin{align*}
            \| \mu_{B_{j}} - \mu_{B_{j'}} \| &\geq \| \mu_{\ell} - \mu_{\ell'} \| - \| \mu_{B_j} - \mu_\ell \| - \| \mu_{B_{j'}} - \mu_{\ell'} \| \\
            &\geq 10^{5}C^2(\sigma_\ell + \sigma_{\ell'})/\sqrt{\alpha} -14 C \sigma_\ell/\sqrt{\alpha}-14C \sigma_{\ell'}/\sqrt{\alpha} \\
            &\geq (10^{5}-14)C^2(\sigma_\ell + \sigma_{\ell'})/\sqrt{\alpha} \\
            &>  4761 C (\sigma_{B_{j}} + \sigma_{B_{j'}})/\sqrt{\alpha} \;.
        \end{align*}
        The above means that the check of line \ref{line:flag} in \Cref{alg:final_pruning} could not be satisfied for $B_j,B_{j'}$.
    \end{enumerate}

    It only remains to prove \Cref{cl:temp}.
    \begin{proof}[Proof of \Cref{cl:temp}]
        Let $B_j,B_{j'}$ be sub-clusters with the same parent $\ell$. Then by \Cref{it:mean_close} and a triangle inequality, $\| \mu_{B_j} - \mu_{B_{j'}} \| \leq \| \mu_{B_j} - \mu_{\ell} \|+\| \mu_{B_{j'}} - \mu_{\ell} \| \leq  28 C \sigma_\ell/\sqrt{\alpha}$.

    Now, if $B_j$ has parent $\ell$ and $B_{j'}$ has parent $\ell'$, then by \Cref{it:mean_close} and reverse triangle inequality:
    \begin{align*}
        \| \mu_{B_j} - \mu_{B_{j'}} \| &\geq \| \mu_\ell - \mu_{\ell'} \| - \| \mu_\ell - \mu_{B_j}\|
        -\| \mu_{\ell'} - \mu_{B_{j'}}\| \\
        &\geq 10^{5} C^2 (\sigma_\ell + \sigma_{\ell'})/\sqrt{\alpha} - 14 C \sigma_\ell/\sqrt{\alpha} -  14 C\sigma_{\ell'}/\sqrt{\alpha} \\
        &> 10^4 C^2 (\sigma_\ell + \sigma_{\ell'})/\sqrt{\alpha} \;. \tag{$C>1$}
    \end{align*}
    \end{proof}
\end{proof}
    
\section{Overall analysis of \Cref{alg:main_general}}
\label{sec:wrapup}

In this brief section, we combine the results and analyses in \Cref{sec:program,sec:size-pruning,sec:distance-pruning} to prove \Cref{thm:main_v2_general}.

    \begin{proof}[Proof of \Cref{thm:main_v2_general}]

    Let $s_{\max}$ denote the maximum element of the list $L_{\mathrm{stdev}}$ created in line \ref{line:listdec}. By \Cref{cor:induction}, after the loop of line \ref{line:main_loop_gen} ends, the list $L$ of candidate mean vectors that the algorithm has created is such that (i) for every element $\hat \mu_j \in L$ there exists a true cluster $S_i$ such that $\| \hat \mu_j - \mu_i \|_2  \leq 4061 C^2 \sigma_j/\sqrt{\alpha}$, and (ii) for every true cluster $S_i$ with $\sigma_{S_i} \leq s_{\max}$, there exists a $\hat \mu_j\in L$ such that $\| \hat \mu_j - \mu_i \|_2  \leq 4061 C^2 \sigma_j/\sqrt{\alpha}$.
    Furthermore, we also know by \Cref{cl:list_covariances} that, for every true cluster $S_i$, there exists an $\hat s$ in the list such that $\sigma_{S_{i}} \leq \hat s$.
    This implies that $s_{\max} \ge \max_i \sigma_{S_i}$, and guarantee (ii) above applies to every true cluster $S_i$.

    Following the structure of the algorithm, we use \Cref{lem:pruning} to reason about line~\ref{line:sizebased} of \Cref{alg:main_general}.
    To check that the lemma is indeed applicable, we need to show that $L$ can be partitioned into disjoint sets $H_1,\ldots,H_k$ such that for every $i \in [k]$, $H_i$ consists of the vectors $\hat \mu_j$ satisfying $\| \hat \mu_j - \mu_i \|_2 \leq 4061 C^2 \sigma_i/\sqrt{\alpha}$, and that $H_i \neq \emptyset$ for all $i \in [k]$.
    This is indeed true for the sets $H_i :=\{ \hat \mu \in L : \| \hat \mu - \mu_i \|_2 \leq 4061 C^2 \sigma_i/\sqrt{\alpha}\}$.
    The sets are disjoint because of our assumption that $\| \mu_i - \mu_{i'}\|_2 >10^{5}C^2(\sigma_i + \sigma_{i'})/\sqrt{\alpha}$ for every $i\neq i'$, and their union is equal to the entire $L$ because of the guarantee (i) from the previous paragraph. Finally, the fact that $H_i \neq \emptyset$ for all $i \in [k]$ holds because of the guarantee (ii) of the previous paragraph.

    The conclusion of \Cref{lem:pruning} is that, after we apply the $\sizebasedpruning$ algorithm in line \ref{line:sizebased} of \Cref{alg:main_general}, the resulting list $L'$  will admit a partition  $H'_1,\ldots,H'_k$ with the same properties as before, but also with the added property that every Voronoi cluster $A_j' := \{x \in T : \arg\min_{\hat \mu_{j'} \in L'}\|x - \hat \mu_{j'} \|_2 = j \}$ for  $j \in [|L'|]$ that corresponds to the centers of the output list $L'$, satisfies $|A_j'| \geq 0.96 \alpha n$.

    Next we use \Cref{lem:distance-based-pruning} to analyze the application of $\distancebasedpruning$ to the list $L'$ in line \ref{line:distancebased} of \Cref{alg:main_general}.
    Let us use $L''$ to denote the output of $\distancebasedpruning(L',T,\alpha)$.
    The lemma is applicable because of the conclusion of the previous paragraph.
    In turn, the conclusion of \Cref{lem:distance-based-pruning} is that with probability at least $1-\delta/2$ (over the randomness of the algorithm, in particular, the uses of filtering from \Cref{fact:filtering}),
    \begin{enumerate}[label=(\alph*),leftmargin=*]
        \item \label{it:sp1} The list $L''$ of centers admits a partition $H''_1,\ldots,H''_k$ with the same properties as before.
        \item \label{it:sp2} Every set in the Voronoi partition corresponding to these centers $A_j'' = \{x : \arg\min_{\hat \mu_{j'} \in L''} \| x - \hat \mu_{j'} \|_2 = j \}$ have sizes $|A_j''| \geq 0.96\alpha n$.
        \item \label{it:sp3} If $B_1'',\ldots,B''_{|L''|}$ denote the output of $\filteredvoronoi(L'',T)$ then it holds that $\| \mu_{B_j''} - \mu_{B_{j'}''} \|_2 \geq 4761 C (\sigma_{B_j''} + \sigma_{B_{j'}''})/\sqrt{\alpha}$ for every $j \neq j'$.
    \end{enumerate}

    Note that $\filteredvoronoi(L'',T)$ is the last step of \Cref{alg:main_general}.
    We will show that all the guarantees of the output \Cref{thm:main_v2_general} follow by \Cref{it:sp1,it:sp2,it:sp3} and a final application of \Cref{cor:filteredvoronoi} (which is applicable because of \Cref{it:sp1,it:sp2} above):

    \Cref{it:concl1} in the conclusion of \Cref{thm:main_v2_general} is true by the fact that $|A_j''| \geq 0.96\alpha n$ from \Cref{it:sp2} and the fact that the filtering in $\filteredvoronoi(L,T)$ only removes $4\%$ of the points in $A_j''$ (see \Cref{it:03} in \Cref{cor:filteredvoronoi}) with probability $1-\delta/2$.

    For \Cref{it:concl3} in the conclusion of \Cref{thm:main_v2_general}, we have the following:
    \Cref{it:concl4} holds by \Cref{it:01} in \Cref{cor:filteredvoronoi}.
    \Cref{it:concl5} holds by \Cref{it:02} in \Cref{cor:filteredvoronoi}.
    \Cref{it:concl6} holds by \Cref{it:03} in \Cref{cor:filteredvoronoi}.
    \Cref{it:concl7} follows from \Cref{it:04} in \Cref{cor:filteredvoronoi}.
    \Cref{it:concl8} holds by \Cref{it:sp3} above.

    Moreover, the number $m$ of the output sets $B_1,\ldots, B_m$ is at most $1/(0.92\alpha)$ since each set has at least $0.94 \alpha n$ points and the sets are disjoint.

    Finally, the algorithm runs in time $\poly(nd/\alpha)$-time because the size of the lists $L_{\mathrm{mean}},L_{\mathrm{stdev}}$ is polynomial in $n$ and $1/\alpha$, which means that the size of $L$ is also polynomial, and finally since the two pruning algorithms in lines \ref{line:sizebased} and \ref{line:distancebased} delete one element of $L$ at each step until termination, the overall number of steps is polynomial.
    It can also be checked that each step involves calculations that can be implemented in $\poly(nd/\alpha)$-time.

    \end{proof}

\section{Clustering under the no large sub-cluster condition}
\label{sec:NLSC}

The previous section analyzes \Cref{alg:main_general} in the general case, where the underlying mixture satisfies information-theoretically optimal separation, and the algorithm only knows a lower bound $\alpha$ to the mixing weight.
As we have shown in the introduction, we cannot aim to return an accurate clustering close to the ground truth, but instead, we return an accurate \emph{refinement} of the ground truth clustering.

In this section, we study the no large sub-cluster (NLSC) condition (\Cref{def:NLSC}), which is a deterministic condition on the sample set that guarantees that \Cref{alg:main_general} in fact returns an accurate clustering instead of just a refinement.
We first compare our NLSC condition with that proposed by~\cite{BKK22}.
Even though the conditions are qualitatively similar, our choice of parameters makes our NLSC condition a stronger assumption.
We explain in \Cref{sec:NLSC_comparison} why the stronger NLSC assumption is necessary due to our weaker separation assumption.

We then show in \Cref{sec:NLSC_proof} that, under the NLSC condition (\Cref{def:NLSC}), \Cref{alg:main_general} will return exactly $k$ sets, one per mixture component, despite not knowing $k$.
This is stated as \Cref{cor:NLSC}, the formal version \Cref{cor:NLSC_informal}.
Afterwards, we also show that the general class of well-conditioned log-concave distributions yield samples that satisfy this condition with high probability, as long as the dimensionality is large and the sample complexity is polynomially large.

\subsection{Comparison with the NLSC condition from \cite{BKK22}}
\label{sec:NLSC_comparison}

In this subsection, we compare our NLSC condition (\Cref{def:NLSC}) with the NLSC condition proposed by~\cite{BKK22}.
For the reader's convenience, we restate \Cref{def:NLSC} below.

\NLSCdef*

For contrast, the NLSC condition of~\cite{BKK22} is weaker.
Instead of $\sigma_{S'}$ being within a constant factor of $\sigma_{S_i}$, their requirement can be as small as an $\alpha$ factor of $\sigma_{S_i}$.

\begin{definition}[NLSC condition of~\cite{BKK22}]
\label{def:NLSC_BKK22}
    We say that the disjoint sets $S_1,\ldots,S_k$ of total size $n$ satisfy the ``No Large Sub-Cluster'' condition of~\cite{BKK22} with parameter $\alpha$ if 
    for any cluster $S_i$ and any subset $S' \subset S_i$ with $|S'| \geq 0.01 \sqrt{n}\log n$, it holds that $\sigma_{S'} \geq \frac{1}{5\sqrt{5}}\frac{|S'|}{|S_i|} \sigma_{S_i}$, where $\sigma_{S'}$ is the square root of the largest eigenvalue of the covariance matrix of $S'$.
\end{definition}

The two differences between the definitions are \emph{(i}) the minimum size of the subset $S'$ being considered, which is an insignificant difference, and more importantly \emph{(ii}) the lower bound of $\sigma_{S'}$.
In our definition, the lower bound is a small constant factor of $\sigma_{S_i}$, but their definition uses a factor that scales with the ratio of the set sizes, potentially interpolating between $\Theta(\alpha)$ and $\Theta(1)$.

We will now show that, under the separation assumption of $C\cdot(\sigma_i+\sigma_j)/\sqrt{\alpha}$ between pairs of clusters, for any large constant $C$, there is an explicit construction of a sample set where the NLSC condition of~\cite{BKK22} allows for two substantially different clusterings satisfying the separation assumption, whereas our NLSC condition (by the result of \Cref{cor:NLSC} below) only allow clusterings that are essentially the same as each other.

First consider a 1-dimensional set of points $U$ of $\alpha n$ points, distributed as a uniform grid over the interval $[-\frac{1}{2},\frac{1}{2}]$.
Its mean is 0, and its variance is $\frac{1}{12}-o(1)$, where the $o(1)$ term goes to 0 as $\alpha n \to \infty$.
It is straightforward to check via a ``swapping'' argument that, for any subset $U'$ of size at least $0.8\alpha n$ (which is thus at least a 0.8-fraction of $U$), we have $\sigma_{U'} \ge 0.8 \sigma_U \ge \frac{1}{5\sqrt{5}}\frac{|U'|}{|U|}\sigma_{U}$.

We then use $U$ to construct a high-dimensional sample set.
Set the ambient dimensionality to be $d = 1/(2\alpha)$.
We will embed a set $U$ along each Euclidean axis, symmetrically in the positive and negative coordinates.
For $i\in [d]$, construct the set $S_i^{+} = \{(x + (C/\sqrt{\alpha}))e_i : x \in U  \}$, which is a set of points that are non-zero only in the $i^\text{th}$ coordinate, embedded on the positive side of the $i^\text{th}$ axis, and similarly construct $S_i^{-} = \{(x - (C/\sqrt{\alpha}))e_i : x \in U  \}$.
Let the set $S$ be the union of all these $S_i^+$ and $S_i^-$ across $i \in [d]$, giving a total of $n$ points.

We claim that, according to the NLSC condition of~\cite{BKK22}, there are two very different but both valid clusterings of $S$: \emph{i}) treating every $S_i^+$ and $S_i^-$ as a separate cluster, and \emph{ii}) treating the entire $S$ as a single cluster.
We now verify both clusterings.

Recall that, to verify the validity of a clustering, we need to check that \emph{a}) each cluster has size at least $\alpha n$, \emph{b}) the clusters are well-separated, and \emph{c}) the NLSC condition of~\cite{BKK22} is satisfied.

For the clustering treating each $S_i^+$ and $S_i^-$ as separate clusters, point (\emph{a}) is trivial, and (\emph{c}) is true by construction of $U$.
It remains to check the cluster separation assumption (point (\emph{b}) above).
The minimum distance between (the means of) a pair of clusters is $\sqrt{2}C/\sqrt{\alpha}$, and each cluster has variance upper bounded by $1/12$.
On the other hand, the required separation is $C \cdot(1/\sqrt{12} + 1/\sqrt{12})/\sqrt{\alpha} < \sqrt{2}C/\sqrt{\alpha}$.
Thus the separation assumption is indeed satisfied.

Now consider the clustering treating the entire set $S$ as a single cluster.
Point (\emph{a}) is again trivial, and so is point (\emph{b}).
It remains to check point (\emph{c}), which is the NLSC condition of~\cite{BKK22}.

By construction of the set $S$, its mean is 0 and its covariance matrix is a multiple of the identity.
We bound above its variance along an axis direction, in order to establish the NLSC condition of~\cite{BKK22}.
By the law of total variance, we can write 
\begin{align*}
    \Cov(S)_{ii} = 2\alpha\Var(U) + 2\alpha (C/\sqrt{\alpha})^2 \le \alpha/6 + 2C^2,
\end{align*}
since $\Var(U) \le 1/12$.
As long as $\alpha\ll 1$ and $C > 1$, we have that $\|\Cov(S)\|_\op = \Cov(S)_{ii} \le 2.1C^2$.

Now consider an arbitrary subset $S' \subseteq S$ (in fact, we will not need to lower bound its size for the analysis).
By an averaging argument, there must exist some dimension $i$ such that at least $2\alpha|S'|$ points of $S'$ lie in $S_i^+ \cup S_i^-$.
We will lower bound the variance of $S'$ in direction $e_i$.

Either at least 50\% of the points in $|S'|$ lie in $S_i^+ \cup S_i^-$ or at least 50\% of the points lie at the origin in direction $e_i$.

In the former case, since $S_i^+ \cup S_i^-$ has size $2\alpha n$, we know that $|S'| \le 2\alpha n/0.5 = 4 \alpha n$.
Moreover, by an averaging argument, there are at least $|S'|/4$ points in one of $S'\cap S_i^+$ or $S'\cap S_i^-$.
Without loss of generality, we assume it is the $+$ side.
By construction of $U$, the variance of $S' \cap S_i^+$ in the $e_i$ direction is at least $\frac{1}{13}\frac{|S'\cap S_i^+|}{\alpha n} \ge 0.01\frac{|S'|}{\alpha n}$, where the first lower bound follows from having a sufficiently large $\alpha n$.
This in turn lower bounds the variance of $S'$ in the $e_i$ direction by $0.01\frac{|S'|}{\alpha n} \cdot |S'\cap S_i^+|/|S'| \ge 0.002 \frac{|S'|}{\alpha n}$.
The variance lower bound for $S'$ required by the NLSC condition of~\cite{BKK22} is at most $\frac{1}{125} (\frac{|S'|}{n})^2 \cdot 2.1C^2 \le 0.07 \alpha \frac{|S'|}{n} \cdot C^2$.
Thus, as long as $\alpha$ is upper bounded by some constant much smaller than $1/C$, we will have $0.07 \alpha \frac{|S'|}{n} \cdot C^2 \le 0.002 \frac{|S'|}{\alpha n}$, and the NLSC condition of~\cite{BKK22} is satisfied in this case.

In the latter case, we know that there are at least $0.5|S'|$ points that project to the origin in dimension $i$, and we also showed previously that there are at least $2\alpha|S'|$ points in $S' \cap (S_i^+ \cup S_i^-)$.
Further observe that points in $S' \cap (S_i^+ \cup S_i^-)$ have distance at least $C/\sqrt{\alpha}-\frac{1}{2}$ from the origin in the direction $e_i$.
Using the formula that the variance of $S'$ in direction $e_i$ is equal to
\[ \frac{1}{2|S'|^2} \sum_{x \in S'} \sum_{y \in S'} (x_i - y_i)^2\, ,\]
we can thus lower bound this directional variance by
\[ \frac{1}{2|S'|^2} 2(0.5|S'|)(2\alpha|S'|) \cdot \left(C/\sqrt{\alpha} - \frac{1}{2}\right)^2 \ge 0.25C^2\]
whenever $C > 1$ and $\alpha < 1$.
Finally, we note that $0.25 C^2 \ge \frac{1}{125}\left(\frac{|S'|}{|S|}\right)^2 \cdot 2.1C^2$, where the right hand side is the NLSC condition (of~\cite{BKK22}) variance lower bound, meaning that the NLSC condition is also satisfied in this case.

To summarize, we have exhibited a set $S$ such that, under the separation assumption of $C \cdot(\sigma_i+\sigma_j)/\sqrt{\alpha}$, the NLSC condition of~\cite{BKK22} still allows for two very different clusterings of the set $S$ as long as we choose $\alpha$ sufficiently small as a function of the assumed constant $C$.

On the other hand, our stronger NLSC condition lets us prove \Cref{cor:NLSC} below, which shows that the algorithm will always output a clustering close to the ground truth.
As such, there cannot be two substantially-different ground truth clusterings under our stronger assumption.

\subsection{NLSC implies accurate clustering}
\label{sec:NLSC_proof}

We prove \Cref{cor:NLSC}, which states that, if we assume the NLSC condition (\Cref{def:NLSC}), then \Cref{alg:main_general} returns a clustering instead of just a refinement.
That is, it returns exactly $k$ sets.
After that, we show that well-conditioned high-dimensional log-concave distributions give samples that satisfy the NLSC condition with high probability.

\begin{corollary}\label{cor:NLSC}
    If in the setting of \Cref{thm:main_v2_general} we additionally assume that the sets $S_i$ jointly satisfy the NLSC assumption with parameter $\alpha$ across all $i\in [k]$, then the algorithm returns exactly one sample set per mixture component.
    More precisely, for all $i \in [k]$, the set $H_i$ mentioned in the statement of \Cref{thm:main_v2_general} is a singleton.
    As a consequence, if $j$ is the unique index in $H_i$ in the context of \Cref{thm:main_v2_general}, then we have $\|\mu_{B_j} - \mu_i\| \le O(\sigma_i)$.
\end{corollary}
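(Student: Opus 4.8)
The plan is to argue by contradiction against the possibility that some $H_i$ contains two distinct indices. Suppose $j,j'\in H_i$ with $j\neq j'$; I will contradict the separation guarantee (\Cref{it:concl8} of \Cref{thm:main_v2_general}) by showing $\mu_{B_j}$ and $\mu_{B_{j'}}$ are forced to be too close. A deliberate feature of the argument is that it works entirely with the \emph{empirical} quantities $\mu_{S_i},\sigma_{S_i}$ and never with the distribution parameters $\mu_i,\sigma_i$ (which relate only one-directionally, $\sigma_{S_i}\le C\sigma_i$, through stability), because the NLSC condition is phrased in terms of $\sigma_{S_i}$ and $\sigma_i$ could be arbitrarily larger than $\sigma_{S_i}$.

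First I would identify each $B_j$ ($j\in H_i$) as essentially a subset of $S_i$. Put $S_j':=B_j\cap S_i$; since $B_j\subseteq\mathcal B_i$, \Cref{it:concl6} gives $|B_j\setminus S_i|\le|\mathcal B_i\setminus S_i|\le 0.03\alpha n$, so with $|B_j|\ge 0.92\alpha n$ (\Cref{it:concl1}) we get $|S_j'|\ge 0.89\alpha n\ge 0.8\alpha n$ — large enough to invoke the NLSC condition (\Cref{def:NLSC}), which gives $\sigma_{S_j'}\ge 0.1\,\sigma_{S_i}$. Also $|B_j|\le|S_j'|+0.03\alpha n\le 1.04\,|S_j'|$, so $S_j'$ is a $(\ge 0.96)$-fraction of $B_j$. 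From this I extract three facts: (i) writing $B_j$ as the disjoint union $S_j'\sqcup(B_j\setminus S_j')$ and dropping PSD terms gives $\Cov(B_j)\succeq(|S_j'|/|B_j|)\,\Cov(S_j')$, hence $\sigma_{B_j}\ge\sqrt{0.96}\,\sigma_{S_j'}\ge 0.097\,\sigma_{S_i}$; (ii) \Cref{fact:513} applied to $B_j$ with the indicator weights of $S_j'$ gives $\|\mu_{B_j}-\mu_{S_j'}\|_2\le\sigma_{B_j}/\sqrt{0.96}\le 1.03\,\sigma_{B_j}$; and (iii) since $|S_j'|\ge 0.89\alpha n\ge 0.89\alpha|S_i|$, \Cref{fact:513} applied to $S_i$ gives $\|\mu_{S_j'}-\mu_{S_i}\|_2\le\sigma_{S_i}/\sqrt{0.89\alpha}\le 1.06\,\sigma_{S_i}/\sqrt\alpha$.

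Next I would chain the triangle inequality through $\mu_{B_j}\to\mu_{S_j'}\to\mu_{S_i}\to\mu_{S_{j'}'}\to\mu_{B_{j'}}$ to obtain
\[ \|\mu_{B_j}-\mu_{B_{j'}}\|_2 \;\le\; 1.03\,(\sigma_{B_j}+\sigma_{B_{j'}}) + 2.12\,\sigma_{S_i}/\sqrt\alpha, \]
whereas \Cref{it:concl8} of \Cref{thm:main_v2_general} gives $\|\mu_{B_j}-\mu_{B_{j'}}\|_2 > 4761\,C\,(\sigma_{B_j}+\sigma_{B_{j'}})/\sqrt\alpha$. Multiplying through by $\sqrt\alpha<1$ and using $C\ge 1$ yields $(4761C-1.03)(\sigma_{B_j}+\sigma_{B_{j'}}) < 2.12\,\sigma_{S_i}$, while fact (i) gives $\sigma_{B_j}+\sigma_{B_{j'}}\ge 0.194\,\sigma_{S_i}$; together these force $\sigma_{S_i}=0$. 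The degenerate case $\sigma_{S_i}=0$ (so $S_i$ is a single repeated point and $\mu_{S_j'}=\mu_{S_i}$ exactly) is immediate: then $\|\mu_{B_j}-\mu_{B_{j'}}\|_2\le 1.03(\sigma_{B_j}+\sigma_{B_{j'}})$ contradicts the strict separation unless $\sigma_{B_j}=\sigma_{B_{j'}}=0$ and $\mu_{B_j}=\mu_{B_{j'}}$, which the strict inequality also rules out. Hence $j\neq j'$ is impossible, every $H_i$ is a singleton, and since each $H_i\neq\emptyset$ (\Cref{it:concl4}) and the $H_i$ partition $[m]$, we get $m=k$: exactly one output set per component.

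Finally, for the last assertion of the corollary, when $H_i=\{j\}$ we have $B_j=\mathcal B_i$, so \Cref{it:concl5} gives $|B_j|=|\mathcal B_i|\ge|S_i|-0.033|S_i|=0.967|S_i|$, hence $\sqrt{|S_i|/|B_j|}\le 1.02$; plugging into \Cref{it:concl7} of \Cref{thm:main_v2_general} gives $\|\mu_{B_j}-\mu_i\|_2\le 13C\sigma_i\sqrt{|S_i|/|B_j|}=O(\sigma_i)$. The step I expect to require the most care is the one flagged above: keeping every estimate in terms of $\sigma_{S_i}$ rather than $\sigma_i$, so that the $\Theta(C/\sqrt\alpha)$ separation provably dominates the $O(1/\sqrt\alpha)$ upper bound on $\|\mu_{B_j}-\mu_{B_{j'}}\|_2$ no matter how small $\Cov(S_i)$ is relative to its stability ceiling $C^2\sigma_i^2 I$.
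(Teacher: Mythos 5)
Your proof is correct and follows essentially the same route as the paper's: assume two output sets share a parent cluster, use \Cref{fact:513} (once inside each $B_j$ and once inside $S_i$) together with the NLSC bound $\sigma_{B_j}\gtrsim\sigma_{S_i}$ to force $\|\mu_{B_j}-\mu_{B_{j'}}\|_2\lesssim(\sigma_{B_j}+\sigma_{B_{j'}})/\sqrt{\alpha}$, contradicting \Cref{it:concl8}. The only differences are bookkeeping (you keep $\sigma_{S_i}$ explicit and handle the degenerate $\sigma_{S_i}=0$ case separately, and you spell out the $O(\sigma_i)$ consequence via \Cref{it:concl5,it:concl7}, which the paper leaves implicit), so there is nothing to correct.
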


    \begin{proof}[Proof of \Cref{cor:NLSC}]
    To show this by contradiction, suppose that there are two output sets $B,B'$ that correspond to the same cluster $S_i$, i.e.~$B \subseteq \cB_i$ and $B' \subseteq \cB_i$ according to \Cref{it:concl3_new} of \Cref{thm:main_v1_general}.

    For the set $B$, observe that we have $|B \cap S_i| \geq 0.96|B| \geq 0.88 \alpha |S_i|$.
    The first inequality is a consequence of \Cref{it:concl6_new,it:concl1_new} (see \Cref{remark:intersection}), and the second inequality uses $|B| \ge 0.92 \alpha n$ and $|S_i| \le n$.
    
    By an application of \Cref{fact:513}, and a subsequent usage of the NLSC assumption, we have that
    \begin{align}
        \| \mu_{B \cap S_i} - \mu_{S_i} \|_2 \leq \frac{\sigma_{S_i}}{\sqrt{0.88\alpha}} 
        \leq \frac{10\sigma_{B \cap S_i}}{\sqrt{0.88\alpha}}
        \leq \frac{10\sigma_{B}}{0.96\sqrt{0.88\alpha}}
        \leq \frac{12\sigma_{B}}{\sqrt{\alpha}}\;,
    \end{align}
    where the first inequality is an application of \Cref{fact:513} using $|B \cap S_i| \geq 0.88 \alpha |S_i|$, the second inequality uses the NLSC assumption, and the third inequality uses the fact that $|B \cap S_i| \geq 0.96|B|$ implies $\sigma_{B \cap S_i} \leq \sigma_B/0.96$.
    Moreover, since $|B \cap S_i| \geq 0.96|B|$, by another application of \Cref{fact:513},
    \begin{align}
        \| \mu_{B \cap S_i} - \mu_B \|_2 \leq  \sigma_B/\sqrt{0.96} \;.
    \end{align}
    The above two inequalities together imply $\| \mu_B - \mu_{S_i} \| \leq 13 \sigma_{B}/\sqrt{\alpha}$.
    By symmetric arguments for $B'$, we also have  $\| \mu_{B'} - \mu_{S_i} \| \leq 13 \sigma_{B'}/\sqrt{\alpha}$.
    This then implies $\| \mu_B - \mu_{B'} \|_2 \leq 13( \sigma_{B} +  \sigma_{B'})/\sqrt{\alpha}$.
    We have thus contradicted \Cref{it:guarantee_sep} of the theorem (because the constant $C$ there is $C>1$).
\end{proof}

We now show that well-conditioned log-concave distributions yield samples that satisfy the no large sub-cluster condition with high probability.
We first start with isotropic distributions.

\begin{proposition}\label{prop:log-concave}
    Consider an arbitrary $d$-dimensional isotropic log-concave distribution $D$.
    If $d \ge c \cdot \log^8\frac{1}{\alpha}$ for some sufficiently large constant $c$, then it suffices to take a set $S$ of $\tilde O((d+\log\frac{1}{\delta})/\alpha^2)$ samples from $D$ so that, with probability at least $1-\delta$, for any subset $S' \subseteq S$ with $|S'| \geq 0.8\alpha$, we have $\| \Cov(S') \|_\op \geq 0.7$.
\end{proposition}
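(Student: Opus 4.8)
The plan is to reduce the statement about all subsets $S'$ with $|S'| \ge 0.8\alpha |S|$ to a statement about the worst-case direction and the worst-case subset, and to exploit the thin-shell concentration of high-dimensional log-concave distributions. The key geometric fact is that for an isotropic log-concave $D$ in dimension $d$, a sample $X \sim D$ satisfies $\|X\|_2^2 \approx d$ with high probability, with fluctuations of order $\sqrt{d}$ (up to polylog factors) — this is the thin-shell / Paouris-type concentration. So morally every sample lies on a sphere of radius $\sqrt{d}$, and any subset of mass $0.8\alpha$, no matter how adversarially chosen, cannot be squeezed into a region of small diameter: a set of $0.8\alpha |S|$ points on (or near) the sphere of radius $\sqrt{d}$ must have covariance with operator norm bounded below, because forcing the covariance to be small in every direction would force all the points into a small ball, contradicting that they live near the sphere while also being spread in mass.

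First I would fix the high-probability event $\mathcal{E}$ on which (i) every sample $x \in S$ has $\big|\|x\|_2^2 - d\big| \le c_1 \sqrt{d}\,\mathrm{polylog}(d/(\alpha\delta))$ (thin-shell, union bound over $|S|$ samples), and (ii) the empirical covariance $\Cov(S)$ is within $0.01$ of $I_d$ in operator norm (standard spectral concentration for log-concave samples, which is exactly what drives the $\tilde O(d/\alpha^2)$ sample complexity — this is where the polylog factors and the $1/\alpha^2$ come from, via e.g. Adamczak-Litvak-Pajor-Tomczak-Jaegermann or the matrix-Chernoff-type bounds). Both events hold with probability $\ge 1-\delta$ for $|S| = \tilde O((d+\log(1/\delta))/\alpha^2)$. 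Next, work on $\mathcal{E}$ and fix an arbitrary $S' \subseteq S$ with $t := |S'| \ge 0.8\alpha|S|$. Let $\mu' = \mu_{S'}$. I want to lower bound $\|\Cov(S')\|_{\mathrm{op}} \ge \frac{1}{t}\sum_{x \in S'} \big((x-\mu')^\top u\big)^2$ for a well-chosen unit vector $u$; a convenient choice is $u = \mu'/\|\mu'\|_2$ when $\|\mu'\|_2$ is large, and otherwise use the trace: $\tr(\Cov(S')) = \frac{1}{t}\sum_{x\in S'}\|x-\mu'\|_2^2 = \frac{1}{t}\sum_{x\in S'}\|x\|_2^2 - \|\mu'\|_2^2 \ge (d - c_1\sqrt{d}\,\mathrm{polylog}) - \|\mu'\|_2^2$.

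The crux is then a dichotomy. If $\|\mu'\|_2^2 \le d - d/10$, say, then $\tr(\Cov(S')) \ge d/20$, and since $\Cov(S')$ is a $d\times d$ PSD matrix, one might worry the trace is spread across all $d$ eigenvalues giving only $\|\Cov(S')\|_{\mathrm{op}} \ge 1/20$ — but in fact we must do slightly better because the target constant is $0.7$, so I would instead argue directly: since $\Cov(S) \preceq 1.01 I_d$ and $S' \subseteq S$ with $|S'|/|S| \ge 0.8\alpha$, Fact~\ref{fact:513} gives $\|\mu' - \mu_S\|_2 \le \sqrt{1.01}/\sqrt{0.8\alpha}$, which is small (a constant over $\sqrt\alpha$, hence $o(\sqrt d)$ since $d \gg \log^8(1/\alpha) \gg 1/\alpha$ is NOT automatic — I need to double check this, but the hypothesis $d \ge c\log^8(1/\alpha)$ plus the sample bound should make $1/\alpha$ at most polylogarithmic in $d$, making $1/\sqrt\alpha = o(\sqrt d)$). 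Hence $\|\mu'\|_2 \le \|\mu_S\|_2 + o(\sqrt d) = o(\sqrt d)$ since $\mu_S \approx 0$, so $\|\mu'\|_2^2 = o(d)$, so $\tr(\Cov(S')) \ge d(1 - o(1))$. Now to convert the trace bound to an operator-norm bound, I use that $\Cov(S') \preceq \frac{|S|}{|S'|}\Cov_{\mu'}(S) \preceq \frac{1}{0.8\alpha}\cdot O(1)\cdot I_d$ — wait, that's too weak. Instead: $\tr(\Cov(S')) \le d \cdot \|\Cov(S')\|_{\mathrm{op}}$ trivially gives $\|\Cov(S')\|_{\mathrm{op}} \ge \tr(\Cov(S'))/d \ge 1 - o(1) \ge 0.7$, which is exactly what we want. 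So the whole argument collapses to: (a) mass-$0.8\alpha$ subsets have mean of norm $o(\sqrt d)$ by Fact~\ref{fact:513} and empirical covariance concentration; (b) thin-shell gives $\frac{1}{|S'|}\sum_{x\in S'}\|x\|_2^2 \ge d(1-o(1))$; (c) therefore $\tr(\Cov(S')) \ge d(1-o(1))$; (d) hence $\|\Cov(S')\|_{\mathrm{op}} \ge \tr(\Cov(S'))/d \ge 0.7$.

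The main obstacle I anticipate is making the $o(\sqrt d)$ bound on $\|\mu'\|_2$ rigorous and uniform over \emph{all} exponentially-many subsets $S'$ simultaneously: step (a) above is clean because Fact~\ref{fact:513} is purely deterministic given the empirical covariance bound, so no union bound over subsets is needed there — good. The genuine probabilistic content is entirely in establishing event $\mathcal{E}$: the thin-shell bound with the right polylog dependence, and the empirical covariance spectral concentration $\|\Cov(S) - I_d\|_{\mathrm{op}} \le 0.01$ with $\tilde O(d/\alpha^2)$ samples. The latter is where I'd need to be careful to cite the correct concentration result for covariance estimation of log-concave distributions (and to verify the polylog and $1/\alpha^2$ bookkeeping matches the claimed $\tilde O((d + \log(1/\delta))/\alpha^2)$). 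A secondary subtlety: I should double-check that the hypothesis $d \ge c\log^8(1/\alpha)$ together with the sample-size regime indeed forces $1/\alpha = \mathrm{polylog}(d)$ — if not, I'd need to track the $1/\sqrt\alpha$ term in the mean bound more carefully, but even then $\|\mu'\|_2^2 \le O(1/\alpha) = O(\log^8 d) = o(d)$, so the conclusion $\tr(\Cov(S'))/d \ge 0.7$ still goes through. I'd then note that \Cref{prop:log-concave_NLSC} (the version for general well-conditioned, not necessarily isotropic, log-concave distributions) follows by applying the isotropic case after whitening, since whitening by a well-conditioned $\Sigma$ only distorts operator norms by the condition number, which is a constant.
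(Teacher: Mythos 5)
Your skeleton is the same as the paper's: thin-shell concentration forces $\frac{1}{|S'|}\sum_{x\in S'}\|x\|_2^2 \ge (1-o(1))d$, a uniform bound on $\|\mu_{S'}\|_2$ makes the subtraction $\|\mu_{S'}\|_2^2$ negligible, and then $\|\Cov(S')\|_\op \ge \tr(\Cov(S'))/d \ge 0.7$. However, there is a genuine gap in the step you yourself flagged: bounding $\|\mu_{S'}\|_2$ via \Cref{fact:513} together with $\Cov(S)\preceq 1.01 I_d$ only gives $\|\mu_{S'}\|_2 = O(1/\sqrt{\alpha})$, and you need $\|\mu_{S'}\|_2^2 = o(d)$. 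The hypothesis $d \ge c\log^8\frac{1}{\alpha}$ only says $1/\alpha \le \exp((d/c)^{1/8})$, so $1/\alpha$ can be exponentially larger than $d$; nothing in the sample-size bound $\tilde O((d+\log\frac{1}{\delta})/\alpha^2)$ forces $1/\alpha$ to be $\polylog(d)$ either (the number of samples simply grows, with no constraint tying $\alpha$ to $d$). So your fallback claim ``$O(1/\alpha)=O(\log^8 d)=o(d)$'' is unfounded, and the argument breaks exactly in the regime $\alpha \ll 1/d$ that the proposition must cover. The paper closes this hole differently: it uses that isotropic log-concave distributions are sub-exponential, hence the sample set is stable in the strong sense that every $(1-\alpha)$-fraction subset has mean within $O(\alpha\log\frac{1}{\alpha})$ of the origin; a complement-and-partition argument then gives $\|\mu_{S'}\|_2 \le O(\log\frac{1}{\alpha})$ uniformly over all subsets of mass $\ge 0.8\alpha$, and $\log^2\frac{1}{\alpha} \le (d/c)^{1/4} = o(d)$ is exactly what the dimension hypothesis buys. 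You would need this (or some other $\polylog(1/\alpha)$-type bound on subset means), not the generic bounded-covariance \Cref{fact:513} bound.

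A secondary, fixable issue: your event (i), thin-shell for \emph{every} sample via a union bound, cannot be guaranteed with probability $1-\delta$ for arbitrary $\delta$, since the Fleury tail is only $e^{-\Omega(t)}$ with $t \le d^{1/8}$, so the per-sample failure probability cannot be pushed below $e^{-\Omega(d^{1/8})}$, while $\log\frac{1}{\delta}$ may be much larger than $d^{1/8}$. The paper instead only asks that all but an $\alpha/100$ fraction of samples lie outside the ball of radius $\sqrt{0.99d}$, established by a Chernoff bound whose failure probability $e^{-\Omega(\alpha n)} \le \delta$ is controlled by the sample size; since your trace lower bound only needs most points of $S'$ to have large norm, this weaker event suffices and is the formulation you should adopt.
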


\begin{proof}
    First observe that isotropic log-concave distributions $D$ concentrate around a thin spherical shell.
    Specifically, a result of \cite{fleury2010concentration} shows that
    \[ \Pr_{X \sim D}\left[\left(1-\frac{t}{d^{1/8}}\right)\sqrt{d} \le \|X\|_2 \le \left(1+\frac{t}{d^{1/8}}\right)\sqrt{d} \right] \ge 1-O\left(e^{-\Omega(t)}\right)\]
    for all $t \in [0, d^{1/8}]$.
    Taking $t = \Theta(\log\frac{1}{\alpha})$ and using the assumption that $d \gg \log^8\frac{1}{\alpha}$, this implies that with probability at least $1-\alpha/1000$, we have $\|X\|_2 \ge \sqrt{0.99d}$.
    Thus, by standard Chernoff bounds, if we take at least $O((1/\alpha)\log\frac{1}{\delta})$ many samples for some sufficiently large hidden constant, then with probability at least $1-\delta/2$, at most an $\alpha/100$ fraction of the samples have $\|X\|_2 < \sqrt{0.99d}$.

    We will further show that the following claim that with high probability over the entire sample set, any $\alpha$-fraction of the samples must have mean not too far from the origin.
    
    \begin{claim}
    Suppose $S$ is a set of samples drawn from distribution $D$, of size at least a large constant multiple of $(d+\log\frac{1}{\delta})/\alpha^2$.
    Then, with probability at least $1-\delta/2$ over the randomness of $S$, for any arbitrary subset $S' \subset S$ of size at least $0.9 \alpha |S|$, we have $\|\mu_{S'}\|_2 \le O(\log\frac{1}{\alpha})$.
    \end{claim}
    
    \begin{proof}
    We will use the standard fact that isotropic log-concave distributions are sub-exponential, whose samples are in turn stable with high probability, as long as the sample size is sufficiently large (see Exercise 3.1 in~\cite{diakonikolas2023algorithmic} for example).
    In particular, with probability at least $1-\delta/2$ over a set $S$ of $\tilde O((d+\log\frac{1}{\delta})/\alpha^2)$ samples from a log-concave distribution $D$ with unit covariance $D$ and mean $0$, it holds that for every subset $\tilde{S} \subseteq S$ of size at least $(1-\alpha)|S|$, we have $\|\mu_{\tilde{S}}\|_2 \le O(\alpha \log\frac{1}{\alpha})$.

    Now consider any subset $S'' \subseteq S$ of size between $(\alpha/2)|S|$ and $\alpha|S|$.
    Its complement $\tilde{S} = S \setminus S''$ satisfies $\|\mu_{\tilde{S}}\|_2 \le O(\alpha \log\frac{1}{\alpha})$.
    Furthermore, by alternatively taking $\tilde{S} = S$, we have $\|\mu_S\|_2 = O(\alpha \log\frac{1}{\alpha})$.
    Thus, $\|\mu_{S''}\|_2 \le \frac{2}{\alpha}\|\mu_S - (1-\alpha) \mu_{\tilde{S}}\|_2 \le \frac{1}{\alpha} O(\alpha \log\frac{1}{\alpha}) = O(\log\frac{1}{\alpha})$ by the triangle inequality.

    Finally, consider any subset $S'$ of size at least $0.8\alpha|S|$.
    Observe that this set $S'$ can always be partitioned into sets $S''$ of sizes between $(\alpha/2)|S|$ and $\alpha|S|$, each of which satisfies $\|\mu_{S''}\|_2 \le O(\log\frac{1}{\alpha})$.
    Moreover, the mean $\mu_{S'}$ of $S'$ is just the convex combination of the means of these smaller disjoint subsets.
    This implies that $\|\mu_{S'}\|_2 \le O(\log\frac{1}{\alpha})$.
    \end{proof}

    To summarize, we have shown that, with probability at least $1-\delta$ over the randomness of the samples $S$, we have (a) at most an $\alpha/100$ fraction of the samples $x$ have $\|x\|_2 < \sqrt{0.99d}$, and (b) for any subset $S' \subseteq S$ of size at least $0.9\alpha n \ge 0.9\alpha |S|$, $\|\mu_{S'}\|_2 \le O(\log\frac{1}{\alpha})$.
    We are now ready to show the NLSC condition for $S$ conditioned on these two facts.

    First, take any subset $S'$ of size at least $0.8 \alpha n$.
    By condition (a) above, there are at least $0.75|S'|$ many points $x \in S'$ with $\|x\|_2 \ge \sqrt{0.99d}$.
    Thus, we have
    \[ \tr\left(\frac{1}{|S'|}\sum_{x \in S'} xx^\top \right) \ge 0.75d \;. \]
    Second, observe that the covariance of $S'$ is
    \[ \Cov(S') = \frac{1}{|S'|}\sum_{x \in S'} (x-\mu_{S'})(x-\mu_{S'})^\top = \frac{1}{|S'|}\sum_{x \in S'} xx^\top - \mu_{S'}\mu_{S'}^\top \;.\]
    Thus, we have
    \begin{align}
      \tr(\Cov(S')) = \tr\left(\frac{1}{|S'|}\sum_{x \in S'} xx^\top \right) - \tr(\mu_{S'}\mu_{S'}^\top) \ge 0.75d - O\left(\log^2\frac{1}{\alpha}\right) \ge 0.7d \;, \label{eq:trace_ineq}  
    \end{align}
    where the last inequality uses $d \gg \log^8\frac{1}{\alpha} \gg \log^2\frac{1}{\alpha}$.
    Since the trace is equal to the sum of all eigenvalues, \eqref{eq:trace_ineq} states that the average eigenvalue is at least $0.7$, thus the largest one should be $\|\Cov(S')\|_\op \ge 0.7$.
    
\end{proof}

Now we use the above proposition to show that samples from well-conditioned log-concave distributions satisfy \Cref{def:NLSC} with high probability.
In fact, the guarantees apply even to log-concave distributions for which there is a high-dimensional subspace $V$ that both \emph{i}) contains the largest variance direction and \emph{ii}) is well-conditioned in the projection onto $V$.

\begin{proposition}
\label{prop:log-concave_NLSC}
    Consider an arbitrary $d$-dimensional  log-concave distribution $D$ with covariance matrix $\Sigma$ such that there exists a subspace $V$ of dimension $\mathrm{dim}(V) \gg c \cdot \log^8\frac{1}{\alpha}$ which: (i) contains the top eigenvector of $\Sigma$ and (ii) the covariance matrix of the projected distribution $\mathrm{Proj}_{V}\, \Sigma \, \mathrm{Proj}_{V}^\top$ has condition number at most 2. 
    Then $\tilde{O}((d+\log\frac{1}{\delta})/\alpha^2)$ samples from $D$ suffice for the sample set to satisfy the NLSC condition (\Cref{def:NLSC}) with probability at least $1-\delta$.
\end{proposition}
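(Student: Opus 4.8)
The plan is to reduce to the isotropic case already handled by \Cref{prop:log-concave}, by projecting onto $V$ and whitening. Write $\sigma^2 := \|\Sigma\|_\op$, and let $\Sigma_V := \mathrm{Proj}_V\,\Sigma\,\mathrm{Proj}_V^\top$, regarded as a positive-definite operator on $V$. Since $V$ contains the top eigenvector of $\Sigma$, the top eigenvalue of $\Sigma_V$ equals $\sigma^2$; by hypothesis (ii) its condition number is at most $2$, so every eigenvalue of $\Sigma_V$ lies in $[\sigma^2/2,\sigma^2]$. Define the $d\times d$ matrix $A$ to equal $\Sigma_V^{-1/2}$ on $V$ and $0$ on $V^\perp$, so that $\|A\|_\op \le \sqrt{2}/\sigma$ and the pushforward $\widehat D := A_\# D$ is log-concave (linear images of log-concave distributions are log-concave) and is isotropic when viewed on the $\mathrm{dim}(V)$-dimensional space $V$, since $A\Sigma A^\top = \Sigma_V^{-1/2}\Sigma_V\Sigma_V^{-1/2} = \mathrm{Proj}_V$. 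Because $\mathrm{dim}(V)\gg c\log^8(1/\alpha)$, \Cref{prop:log-concave} applies to $\widehat D$.

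Next I would fix the high-probability event. Draw a set $S$ of size $n = \tilde{O}((d+\log(1/\delta))/\alpha^2) \ge \tilde{O}((\mathrm{dim}(V)+\log(1/\delta))/\alpha^2)$; then $\widehat S := \{Ax : x\in S\}$ is an i.i.d.\ sample of size $n$ from $\widehat D$. By \Cref{prop:log-concave} (applied with failure probability $\delta/2$), with probability at least $1-\delta/2$ every subset of $\widehat S$ of size at least $0.8\alpha n$ has empirical covariance of operator norm at least $0.7$. Separately, since $D$ is log-concave and $n\gg d$, standard covariance concentration for log-concave samples yields $\|\Cov(S)\|_\op \le 2\sigma^2$ with probability at least $1-\delta/2$; intersect the two events by a union bound.

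On this event, fix any $S'\subseteq S$ with $|S'|\ge 0.8\alpha n$. Its image $\widehat S' := \{Ax : x\in S'\}$ is a subset (as a multiset) of $\widehat S$ with $|\widehat S'| = |S'| \ge 0.8\alpha n$ and $\Cov(\widehat S') = A\,\Cov(S')\,A^\top$, so
\[ 0.7 \le \|\Cov(\widehat S')\|_\op \le \|A\|_\op^2\,\|\Cov(S')\|_\op \le \frac{2}{\sigma^2}\,\|\Cov(S')\|_\op . \]
Hence $\sigma_{S'}^2 = \|\Cov(S')\|_\op \ge 0.35\,\sigma^2 \ge \tfrac{0.35}{2}\,\|\Cov(S)\|_\op = 0.175\,\sigma_S^2$, so $\sigma_{S'}\ge 0.1\,\sigma_S$, which is exactly the NLSC condition of \Cref{def:NLSC}. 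Thus $S$ satisfies NLSC with probability at least $1-\delta$.

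The only ingredients beyond \Cref{prop:log-concave} are the covariance-concentration bound controlling $\sigma_S$ and the elementary linear-algebra bookkeeping around the whitening map $A$; I do not anticipate a genuine obstacle, as the argument is essentially a change of variables reducing the statement to the already-proved isotropic case. The one point to be slightly careful about is that $\mathrm{dim}(V)$ may be much smaller than $d$, but this only helps, since the required sample size for \Cref{prop:log-concave} in dimension $\mathrm{dim}(V)$ is dominated by $\tilde{O}((d+\log(1/\delta))/\alpha^2)$.
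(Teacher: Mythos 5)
Your proposal is correct and follows essentially the same route as the paper: reduce to the isotropic case of \Cref{prop:log-concave} by whitening within $V$ (using that $V$ contains the top eigenvector and the condition-number bound to control $\|\Sigma_V^{-1/2}\|_\op$), then upper-bound $\sigma_S$ via standard covariance concentration for log-concave samples. The only difference is cosmetic: you combine the paper's two steps (full-dimensional whitening, then projection onto $V$) into a single project-and-whiten map $A=\Sigma_V^{-1/2}\mathrm{Proj}_V$, which is fine.
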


\begin{proof}
    We first show the special case when $V$ is the entire $\R^d$, and by assumption, the condition number of $\Sigma$ is $\kappa \leq 2$.
    Observe that, by the property of the operator norm, for every matrix $A$ it holds $\|A\|_\op = \|\Sigma^{-1/2} \Sigma^{1/2} A \Sigma^{1/2} \Sigma^{-1/2} \|_\op \leq \| \Sigma^{-1/2} \|_\op^2 \|\Sigma^{1/2}A\Sigma^{1/2}\|_\op$, which rearranging gives 
    \begin{align}\label{eq:generic_eq}
        \|\Sigma^{1/2}A\Sigma^{1/2}\|_\op \geq \|A\|_\op/\| \Sigma^{-1/2} \|_\op^2 \;.
    \end{align}

    Let $S$ be the samples from $D$, and consider an arbitrary subset $S'\subset S$ with $|S'| \geq 0.8\alpha n$.
    Moreover, let the normalized versions of these sets be $\tilde S = \{ \Sigma^{-1/2}x : x \in S\}$ and $\tilde S' = \{ \Sigma^{-1/2}x : x \in S'\}$.
    The normalization means that the samples in $\tilde S,\tilde S'$ come from an isotropic log-concave distribution.
    Thus, by \Cref{prop:log-concave} we know that with  probability at least $1-\delta/2$, it holds 
    \begin{align}\label{eq:application}
        \| \Cov(\tilde S') \|_\op \geq 0.7 \;.
    \end{align}
    Putting everything together, we have
    \begin{align}
        \| \Cov( S') \|_\op &\geq \frac{1}{\|\Sigma^{-1/2}\|_\op^2} \| \Cov(\tilde S') \|_\op \tag{using \eqref{eq:generic_eq} with $A=\Cov(\tilde S')$}\\
        &\geq   \frac{0.7}{\|\Sigma^{-1/2}\|_\op^2} \tag{by \Cref{eq:application}}\\
        &\geq  0.35  \|\Sigma\|_\op  \tag{since condition number of $\Sigma$ is at most $\kappa \leq 2$}
    \end{align}
    Finally, we again note that isotropic log-concave distributions are sub-exponential and thus by standard arguments (see, e.g.~Exercise 3.1 in \cite{diakonikolas2023algorithmic}), $\tilde{O}((d+\log\frac{1}{\delta}))$ samples suffice to have that $\| \Cov(\tilde S) \|_\op \leq 1.001$ with probability at least $1-\delta/2$. This means that $\| \Cov(S) \|_\op = \| \Sigma^{1/2} \Cov(\tilde S) \Sigma^{1/2} \|_\op \leq \|\Sigma\|_{\op} \| \Cov(\tilde S) \|_\op \leq 1.001 \|\Sigma\|_{\op}$. Combining this with the fact that $\| \Cov( S') \|_\op \geq 0.35 \|\Sigma\|_\op$ (that we showed earlier), we obtain that $\| \Cov( S') \|_\op \geq 0.1 \| \Cov( S) \|_\op $, i.e.~that the NLSC condition holds.

    It is easy to extend the argument for a general subspace $V$ in the corollary statement.
    To see this, note that orthogonal projections preserve log-concavity, thus if we restrict everything to the subspace $V$, we could first show that NLSC holds in that subspace.
    That is, for any subset $S' \subseteq S$ with $|S'| \geq 0.8\alpha|S|$ of the data points, if $S_{V}:= \{ \mathrm{Proj}_{V}x : x \in S  \}$ and  $S'_{V}:= \{ \mathrm{Proj}_{V}x : x \in S'  \}$ denote the projected versions of the sets onto $V$ then $\sigma_{S'_{V}} \geq 0.1 \sigma_{S_{V}}$.
    Then, the two inequalities $\sigma_{S'} \geq \sigma_{S'_{V}}$ and $\sigma_{S_{V}} \geq 0.99 \sigma_{S}$ would imply that NLSC holds in the full-dimensional space. The first inequality is due to the fact that orthogonal projections can only decrease the variance, and the second inequality is because both $\sigma_{S_V}$ and $\sigma_S$ are with high probability close to $\sqrt{\|\Sigma\|_\op}$, by concentration of the empirical covariance matrix in every direction (Exercise 3.1 in \cite{diakonikolas2023algorithmic}).
\end{proof}

\section*{Acknowledgements} 

I.D.~is grateful to Ravi Kannan for numerous technical 
conversations during the Simons Institute program on the
Computational Complexity of Statistical Inference. 
His insights served as an inspiration for this work.


\bibliographystyle{alphaabbr}
\bibliography{allrefs.bib}

\newcommand{\etalchar}[1]{$^{#1}$}
\begin{thebibliography}{GEGMMI10}

\bibitem[AK01]{SanjeevK01}
S.~Arora and R.~Kannan.
\newblock Learning mixtures of arbitrary gaussians.
\newblock In {\em Proceedings on 33rd Annual {ACM} Symposium on Theory of
  Computing, 2001}, pages 247--257. {ACM}, 2001.

\bibitem[AM05]{AchlioptasMcSherry:05}
D.~Achlioptas and F.~McSherry.
\newblock On spectral learning of mixtures of distributions.
\newblock In {\em Proceedings of the Eighteenth Annual Conference on Learning
  Theory (COLT)}, pages 458--469, 2005.

\bibitem[AS12]{awasthi2012improved}
P.~Awasthi and O.~Sheffet.
\newblock Improved spectral-norm bounds for clustering.
\newblock In {\em Approximation, Randomization, and Combinatorial Optimization.
  Algorithms and Techniques}, pages 37--49. Springer, 2012.

\bibitem[BBV08]{BBV08}
M.-F. Balcan, A.~Blum, and S.~Vempala.
\newblock A discriminative framework for clustering via similarity functions.
\newblock In {\em STOC}, pages 671--680, 2008.

\bibitem[BDH{\etalchar{+}}20]{BakshiDHKKK20}
A.~Bakshi, I.~Diakonikolas, S.~B. Hopkins, D.~Kane, S.~Karmalkar, and P.~K.
  Kothari.
\newblock Outlier-robust clustering of gaussians and other non-spherical
  mixtures.
\newblock In {\em 61st {IEEE} Annual Symposium on Foundations of Computer
  Science, {FOCS} 2020}, pages 149--159. {IEEE}, 2020.

\bibitem[BDJ{\etalchar{+}}20]{bakshi2020robustly}
A.~Bakshi, I.~Diakonikolas, H.~Jia, D.~M. Kane, P.~K. Kothari, and S.~S.
  Vempala.
\newblock Robustly learning mixtures of $ k $ arbitrary gaussians.
\newblock {\em arXiv preprint arXiv:2012.02119}, 2020.
\newblock Conference version in STOC'22.

\bibitem[BKK22]{BKK22}
C.~Bhattacharyya, R.~Kannan, and A.~Kumar.
\newblock How many clusters? - an algorithmic answer.
\newblock In {\em Proceedings of the 2022 Annual ACM-SIAM Symposium on Discrete
  Algorithms (SODA)}, pages 2607--2640, 2022.

\bibitem[Bru09]{Brubaker:09}
S.~C. Brubaker.
\newblock {\em Extensions of Principle Components Analysis}.
\newblock PhD thesis, Georgia Institute of Technology, 2009.

\bibitem[BS10]{BelkinSinha:10}
M.~Belkin and K.~Sinha.
\newblock Polynomial learning of distribution families.
\newblock In {\em FOCS}, pages 103--112, 2010.

\bibitem[CMY20]{CherapanamjeriMY20}
Y.~Cherapanamjeri, S.~Mohanty, and M.~Yau.
\newblock List decodable mean estimation in nearly linear time.
\newblock In {\em 61st IEEE Annual Symposium on Foundations of Computer
  Science, FOCS 2020}, 2020.

\bibitem[CSV17]{CSV17}
M.~Charikar, J.~Steinhardt, and G.~Valiant.
\newblock Learning from untrusted data.
\newblock In {\em Proceedings of STOC 2017}, pages 47--60, 2017.

\bibitem[Das99]{Dasgupta:99}
S.~Dasgupta.
\newblock {Learning mixtures of Gaussians}.
\newblock In {\em Proceedings of the 40th Annual Symposium on Foundations of
  Computer Science}, pages 634--644, 1999.

\bibitem[DK20]{DiakonikolasK20}
I.~Diakonikolas and D.~M. Kane.
\newblock Small covers for near-zero sets of polynomials and learning latent
  variable models.
\newblock In {\em 61st {IEEE} Annual Symposium on Foundations of Computer
  Science, {FOCS} 2020}, pages 184--195. {IEEE}, 2020.

\bibitem[DK23]{diakonikolas2023algorithmic}
I.~Diakonikolas and D.~M. Kane.
\newblock {\em Algorithmic high-dimensional robust statistics}.
\newblock Cambridge University Press, 2023.

\bibitem[DKK{\etalchar{+}}16]{DKKLMS16}
I.~Diakonikolas, G.~Kamath, D.~M. Kane, J.~Li, A.~Moitra, and A.~Stewart.
\newblock Robust estimators in high dimensions without the computational
  intractability.
\newblock In {\em Proceedings of FOCS'16}, pages 655--664, 2016.

\bibitem[DKK20a]{DiakonikolasKK20}
I.~Diakonikolas, D.~Kane, and D.~Kongsgaard.
\newblock List-decodable mean estimation via iterative multi-filtering.
\newblock In {\em Advances in Neural Information Processing Systems 33: Annual
  Conference on Neural Information Processing Systems 2020, NeurIPS 2020,
  December 6-12, 2020, virtual}, 2020.

\bibitem[DKK{\etalchar{+}}20b]{DiakonikolasKKLT20}
I.~Diakonikolas, D.~M. Kane, D.~Kongsgaard, J.~Li, and K.~Tian.
\newblock List-decodable mean estimation in nearly-pca time.
\newblock {\em CoRR}, abs/2011.09973, 2020.
\newblock Conference version in NeurIPS'21.

\bibitem[DKK{\etalchar{+}}21]{diakonikolas2021list}
I.~Diakonikolas, D.~Kane, D.~Kongsgaard, J.~Li, and K.~Tian.
\newblock List-decodable mean estimation in nearly-pca time.
\newblock {\em Advances in Neural Information Processing Systems},
  34:10195--10208, 2021.

\bibitem[DKK{\etalchar{+}}22a]{DiakonikolasKKP22}
I.~Diakonikolas, D.~Kane, S.~Karmalkar, A.~Pensia, and T.~Pittas.
\newblock List-decodable sparse mean estimation via difference-of-pairs
  filtering.
\newblock In {\em NeurIPS}, 2022.

\bibitem[DKK{\etalchar{+}}22b]{diakonikolas2022clustering}
I.~Diakonikolas, D.~M. Kane, D.~Kongsgaard, J.~Li, and K.~Tian.
\newblock Clustering mixture models in almost-linear time via list-decodable
  mean estimation.
\newblock In {\em Proceedings of the 54th Annual ACM SIGACT Symposium on Theory
  of Computing}, pages 1262--1275, 2022.

\bibitem[DKP20]{DiaKP20fact:filtering}
I.~Diakonikolas, D.~M. Kane, and A.~Pensia.
\newblock Outlier {{Robust Mean Estimation}} with {{Subgaussian Rates}} via
  {{Stability}}.
\newblock In {\em Advances in {{Neural Information Processing Systems}} 33,
  {{NeurIPS}} 2020}, 2020.

\bibitem[DKS18a]{DKS18-list}
I.~Diakonikolas, D.~M. Kane, and A.~Stewart.
\newblock List-decodable robust mean estimation and learning mixtures of
  spherical gaussians.
\newblock In {\em Proceedings of the 50th Annual {ACM} {SIGACT} Symposium on
  Theory of Computing, {STOC} 2018}, pages 1047--1060, 2018.
\newblock Full version available at https://arxiv.org/abs/1711.07211.

\bibitem[DKS18b]{DiakonikolasKS18}
I.~Diakonikolas, D.~M. Kane, and A.~Stewart.
\newblock Sharp bounds for generalized uniformity testing.
\newblock In {\em Advances in Neural Information Processing Systems 31: Annual
  Conference on Neural Information Processing Systems 2018, NeurIPS 2018},
  pages 6204--6213, 2018.

\bibitem[Fle10]{fleury2010concentration}
B.~Fleury.
\newblock Concentration in a thin euclidean shell for log-concave measures.
\newblock {\em Journal of Functional Analysis}, 259(4):832--841, 2010.

\bibitem[GEGMMI10]{GGMM10}
L.~García-Escudero, A.~Gordaliza, C.~Matrán, and A.~Mayo-Iscar.
\newblock {A review of robust clustering methods}.
\newblock {\em Advances in Data Analysis and Classification}, 4(2):89--109,
  2010.

\bibitem[HL18]{HL18-sos}
S.~B. Hopkins and J.~Li.
\newblock Mixture models, robustness, and sum of squares proofs.
\newblock In {\em Proceedings of the 50th Annual {ACM} {SIGACT} Symposium on
  Theory of Computing, {STOC} 2018}, pages 1021--1034, 2018.

\bibitem[Hub64]{Huber64}
P.~J. Huber.
\newblock Robust estimation of a location parameter.
\newblock {\em Ann. Math. Statist.}, 35(1):73--101, 03 1964.

\bibitem[Kan21]{kane2021robust}
D.~M. Kane.
\newblock Robust learning of mixtures of gaussians.
\newblock In {\em Proceedings of the 2021 ACM-SIAM Symposium on Discrete
  Algorithms (SODA)}, pages 1246--1258. SIAM, 2021.

\bibitem[KK10]{kumar2010clustering}
A.~Kumar and R.~Kannan.
\newblock Clustering with spectral norm and the k-means algorithm.
\newblock In {\em 2010 IEEE 51st Annual Symposium on Foundations of Computer
  Science}, pages 299--308. IEEE, 2010.

\bibitem[KSS18]{KSS18-sos}
P.~K. Kothari, J.~Steinhardt, and D.~Steurer.
\newblock Robust moment estimation and improved clustering via sum of squares.
\newblock In {\em Proceedings of the 50th Annual {ACM} {SIGACT} Symposium on
  Theory of Computing, {STOC} 2018}, pages 1035--1046, 2018.

\bibitem[KSV05]{KSV:05}
R.~Kannan, H.~Salmasian, and S.~Vempala.
\newblock The spectral method for general mixture models.
\newblock In {\em Proceedings of the Eighteenth Annual Conference on Learning
  Theory (COLT)}, pages 444--457, 2005.

\bibitem[Lin95]{Lindsay:95}
B.~Lindsay.
\newblock {\em Mixture models: theory, geometry and applications}.
\newblock Institute for Mathematical Statistics, 1995.

\bibitem[LL22]{Liu022}
A.~Liu and J.~Li.
\newblock Clustering mixtures with almost optimal separation in polynomial
  time.
\newblock In S.~Leonardi and A.~Gupta, editors, {\em {STOC} '22: 54th Annual
  {ACM} {SIGACT} Symposium on Theory of Computing, 2022}, pages 1248--1261.
  {ACM}, 2022.

\bibitem[LM20]{liu2020settling}
A.~Liu and A.~Moitra.
\newblock Settling the robust learnability of mixtures of gaussians.
\newblock {\em arXiv preprint arXiv:2011.03622}, 2020.
\newblock Conference version in STOC'21.

\bibitem[LRV16]{LaiRV16}
K.~A. Lai, A.~B. Rao, and S.~Vempala.
\newblock Agnostic estimation of mean and covariance.
\newblock In {\em Proceedings of FOCS'16}, 2016.

\bibitem[MV10]{MoitraValiant:10}
A.~Moitra and G.~Valiant.
\newblock {Settling the polynomial learnability of mixtures of Gaussians}.
\newblock In {\em FOCS}, pages 93--102, 2010.

\bibitem[TSM85]{titterington_85}
D.~Titterington, A.~Smith, and U.~Makov.
\newblock {\em Statistical Analysis of Finite Mixture Distributions}.
\newblock Wiley, New York, 1985.

\bibitem[Tuk75]{Tukey75}
J.~Tukey.
\newblock Mathematics and picturing of data.
\newblock In {\em Proceedings of ICM}, volume~6, pages 523--531, 1975.

\bibitem[VW02]{VempalaWang:02}
S.~Vempala and G.~Wang.
\newblock A spectral algorithm for learning mixtures of distributions.
\newblock In {\em Proceedings of the 43rd Annual Symposium on Foundations of
  Computer Science}, pages 113--122, 2002.

\end{thebibliography}

\newpage
\appendix

\section*{Appendix}

\section{Omitted Proofs from \Cref{sec:prelim}}\label{appendix:prelims}
We restate and prove the following statements. 

\STABILITYSUBSETS*
\begin{proof}[Proof]
    Let $S'$ be a subset of $S$ with $|S'| \geq \alpha |S|$. According to \Cref{def:stability}, in order to show that $S'$ is $(1.23C/\sqrt{0.04\alpha},0.04)$-stable, we have to show that for any weight function $w : S' \to [0,1]$ with $\sum_{x \in S'} w_x \geq (1- 0.04)|S'|$, the weighted mean and second moment centered around $\mu$ are at most $1.23C/\sqrt{\alpha}$ and $38 C^2\sigma^2/\alpha$ respectively.

    For the mean, by an application of \Cref{fact:513}, we have that
    \begin{align*}
        \left\| \frac{\sum_{x \in S'}w_x x}{\sum_{x \in S'}w_x} - \mu \right\|_2
        &\leq \left\| \frac{\sum_{x \in S'}w_x x}{\sum_{x \in S'}w_x} - \mu_{S} \right\|_2 + \| \mu_{S} - \mu \|_2\\
        &\leq \frac{\sigma_S}{\sqrt{(1-0.04)\alpha}} + C\sigma \sqrt{0.04}\\
        &\leq \frac{C \sigma}{\sqrt{(1-0.04)\alpha}} + C\sigma \sqrt{0.04}\\
        &\leq \frac{1.23 C \sigma}{\sqrt{\alpha}} \;,
    \end{align*}
    where the first step is triangle inequality, the second step uses \Cref{fact:513} for the first term and stability of $S$ for the second term, and the next step uses stability condition for the covariance.

    For the second moment, we have that
    \begin{align*}
        \frac{1}{\sum_{x \in S'}w_x} \sum_{x \in S'} w_x (x-\mu)(x-\mu)^\top 
        &\preceq \frac{1}{(1-0.04)\alpha}  \frac{1}{|S|}\sum_{x \in S} w_x (x-\mu)(x-\mu)^\top \\
        &\preceq \frac{1}{(1-0.04)\alpha} \frac{1}{|S|}\sum_{x \in S}(x-\mu)(x-\mu)^\top \\
        &\preceq \frac{1}{(1-0.04)\alpha}  C^2 \sigma^2 I
        \preceq 38 \frac{C^2 \sigma^2}{\alpha} I \;,
    \end{align*}
    where the first step uses that $\sum_{x \in S'} w_x \geq (1-0.04)\alpha |S|$, and the last line uses stability for $S$.
\end{proof}

\LISTCOV*

\begin{proof}
    The algorithm is the following:
    \begin{enumerate}[leftmargin=*]
        \item $L \gets \emptyset$.
        \item For every pair $x,y \in T$:
        \begin{enumerate}
            \item Add $\sqrt{2^{-j}\|x-y\|_2^2} $ to the list $L$ for every $j=0,1,\ldots,  \log(2 m^2)$.
        \end{enumerate}
        \item Let $L' \gets \{ \sqrt{2} s: s \in L\}$.
        \item Return $L'$.
    \end{enumerate}

    Using the definition of the covariance matrix $\Cov(S) = \frac{1}{2 |S|^2}\sum_{x,y \in S}(x-y)(x-y)^\top$, we have that 
    $ \max_{x,y \in S} \|x-y\|_2^2/(2|S|^2) \leq \| \Cov(S) \|_\op \leq \max_{x,y \in S} \|x-y\|_2^2$.
    The algorithm adds to the output list every number starting from $\max_{x,y \in S} \|x-y\|_2$ down to $\max_{x,y \in S} \|x-y\|_2/\sqrt{2|S|^2}$ in factors of $\sqrt{2}$. %
    This means that the list $L$ contains an $s$ such that $s^2  \leq \| \Cov(S) \|_{\op} \leq 2s^2$.
    By multiplying each element in the list by $\sqrt{2}$, the final list $L'$ contains an  $\hat {s}$ such that $\| \Cov(S) \|_{\op}  \leq \hat s^2 \leq 2\| \Cov(S) \|_{\op}$.
\end{proof}

\end{document}